\DeclareMathOperator{\Law}{Law}
\newcommand{\rd}{{\mathrm{d}}}
\newcommand{\bt}{\boldsymbol{t}}
\newcommand{\bx}{\boldsymbol{x}}
\newcommand{\R}{\mathbb{R}}
\renewcommand{\P}{\mathbb{P}}
\newcommand{\E}{\mathop{\mathbb{E}}}
\newcommand{\cI}{{\mathcal{I}}}
\newcommand{\cL}{{\mathcal{L}}}
\newcommand{\cX}{{\mathcal{X}}}
\newcommand{\inner}[2]{\langle #1, #2 \rangle}
\newcommand{\yh}{\hat{y}}
\newcommand{\pbeta}{p_{\bt|0}/\beta_\theta}
\crefname{equation}{}{}
\crefname{figure}{Fig.}{Figs.}
\crefname{section}{Sec.}{Secs.}
\crefname{appendix}{App.}{Apps.}
\crefname{table}{Tab.}{Tabs.}
\newtheorem{theorem}{Theorem}
\crefname{theorem}{Thm.}{Thms.}
\newtheorem{lemma}{Lemma}
\crefname{lemma}{Lem.}{Lems.}
\crefname{assumption}{Assump.}{Assumps.}
\newtheorem{proposition}{Proposition}
\crefname{proposition}{Prop.}{Props.}
\crefname{corollary}{Cor.}{Cors.}
\crefname{definition}{Def.}{Defs.}
\crefname{remark}{Rmk.}{Rmks.}
\theoremstyle{remark}
\crefname{algocf}{Alg.}{Algs.}
\crefname{algorithm}{Alg.}{Algs.}
\icmltitlerunning{Multimodal Diffusion on Arbitrary State Spaces} 
\begin{document}

\twocolumn[
\icmltitle{Diffuse Everything: Multimodal Diffusion Models on Arbitrary State Spaces}

\icmlsetsymbol{equal}{*}
\begin{icmlauthorlist}
\icmlauthor{Kevin Rojas}{equal,gtmath,gtml}
\icmlauthor{Yuchen Zhu}{equal,gtmath,gtml}
\icmlauthor{Sichen Zhu}{gtml}
\icmlauthor{Felix X.-F. Ye}{sunymath}
\icmlauthor{Molei Tao}{gtmath,gtml}
\end{icmlauthorlist}

\icmlaffiliation{gtml}{Machine Learning Center, Georgia Institute of Technology, Atlanta, GA}
\icmlaffiliation{gtmath}{School of Mathematics, Georgia Institute of Technology, Atlanta, GA}
\icmlaffiliation{sunymath}{Department of Mathematics \& Statistics, SUNY Albany, NY}

\icmlcorrespondingauthor{Molei Tao}{\texttt{mtao@gatech.edu}}

\icmlkeywords{Multimodal Diffusion Models}

\vskip 0.3in
]

\printAffiliationsAndNotice{\icmlEqualContribution} 

\begin{abstract}

Diffusion models have demonstrated remarkable performance in generating unimodal data across various tasks, including image, video, and text generation. On the contrary, the joint generation of multimodal data through diffusion models is still in the early stages of exploration. Existing approaches heavily rely on external preprocessing protocols, such as tokenizers and variational autoencoders, to harmonize varied data representations into a unified, unimodal format. This process heavily demands the high accuracy of encoders and decoders, which can be problematic for applications with limited data. To lift this restriction, we propose a novel framework for building multimodal diffusion models on arbitrary state spaces, enabling native generation of coupled data across different modalities. By introducing an innovative decoupled noise schedule for each modality, we enable both unconditional and modality-conditioned generation within a single model simultaneously. We empirically validate our approach for text-image generation and mixed-type tabular data synthesis, demonstrating that it achieves competitive performance.
\end{abstract}

\section{Introduction}
Recent years have witnessed the tremendous success of diffusion generative models in various applications. The seminal works of continuous diffusion models on Euclidean spaces \citep{sohl2015deep, song2020score, ho2020denoising} have led to state-of-the-art methods for tasks such as image generation \citep{dhariwal2021diffusion, bao2023all, karras2022elucidating, karras2024analyzing}, video generation \citep{ho2022video, jin2024pyramidal}, time series forecasting \citep{chen2024probabilistic,rojas2025variational} and in domains such as robotics \citep{chi2023diffusion} and genomics \citep{luo2024scdiffusion, zhu2025diffusion}. Pioneering works have shown that diffusion models can also be extended to curved spaces \citep{de2022riemannian, huang2022riemannian, cheng2024stiefel, chen2024flow, zhu2024trivialized}, enabling a high-fidelity generation of structured data on manifolds, such as material configurations \citep{sriram2024flowllm} and protein backbones \citep{watson2023novo, yim2023se}. Recently, discrete diffusion models have emerged as the cornerstone for modeling categorical data with inherent discrete structures \citep{campbell2022continuous, loudiscrete, campbell2024generative, gat2024discrete}. Discrete diffusion models have imposed great impacts on protein sciences \citep{wang2024diffusion}, graph generation \citep{xu2024discrete, li2024layerdag}, and text generation \citep{nie2024scaling, nie2025large}. In general, diffusion models have shown top performance in most scenarios with unimodal data. 

Generative models also demonstrated successes in multimodal data lately. For example, conditional diffusion models showed remarkable capabilities in tasks such as text-to-image generation by accurately synthesizing pictures following given instruction prompts \citep{ramesh2022hierarchical, chen2023pixart, esser2024scaling}. It's worth noting that such models still generate single-modality outputs (such as images). Therefore, to jointly generate multimodal data, leveraging only single-task-performing conditional models is extremely computationally inefficient, as it requires combining \textbf{multiple} independently trained models by sequentially applying them.

\begin{figure*}[!t]
\vspace{-7pt}
    \centering
    \includegraphics[width=0.9\textwidth]{./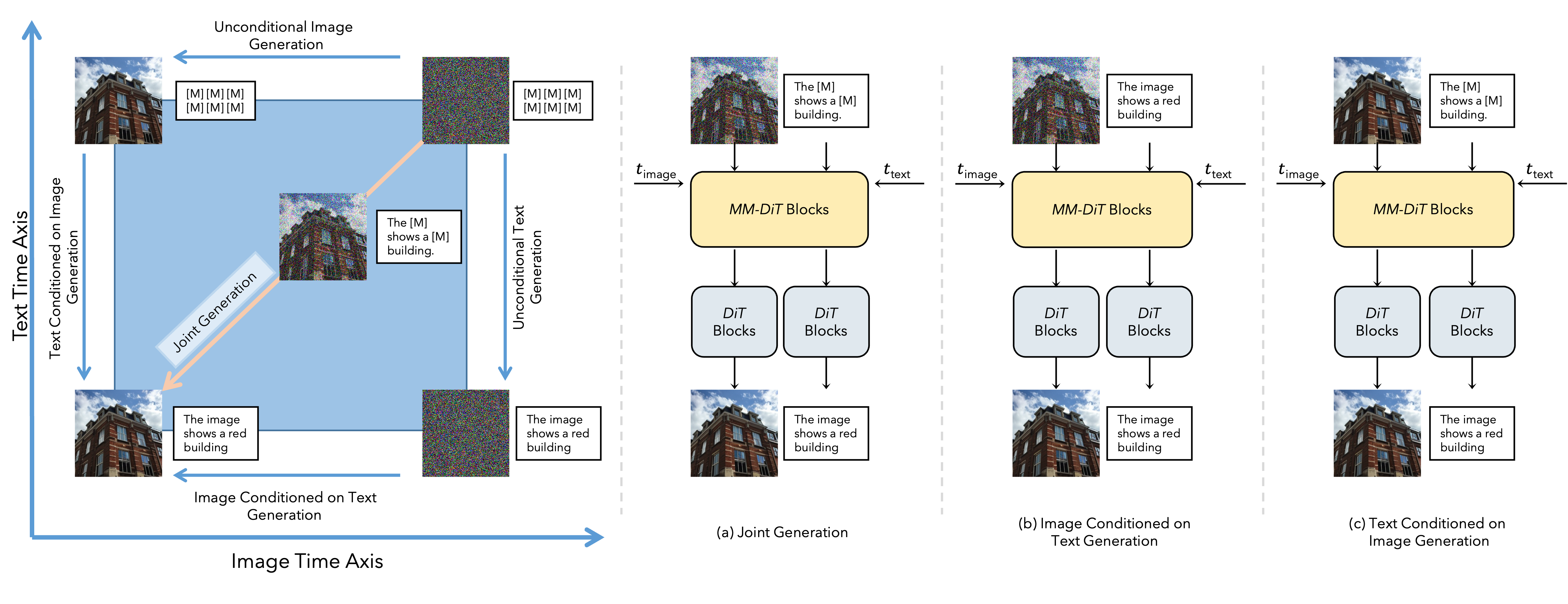}
    \vspace{-2em}
\caption{By injecting noise into different modalities in a decoupled fashion, we enable the unconditional and modality-conditioned generation in a single model. (a) Joint generation of image and text. (b) Image generation given text captions as conditions. (c) Text generation given images as conditions. }
\label{fig:promo}
\vspace{-1em}
\end{figure*}

An alternative approach is to use a \textbf{single} multi-modal model that captures the joint distribution of multiple modalities. Such an approach often leads to strong performances as it allows information to mix across modalities \citep{li2024omniflow, team2024chameleon}. Existing approaches of this type are mainly based on autoregressive models (AR), such as Chameleon \citep{team2024chameleon} and Unified-IO \citep{lu2024unified}, where data of different modalities are represented uniformly as tokens and generated autoregressively from left to right. Apart from these, attempts have also been made to realize this idea using diffusion/flow-based methods, such as UniDiffuser \citep{bao2023one}, MM-Diffusion \citep{ruan2023mm}, AVDiT \citep{kim2024versatile}, UniDisc \citep{swerdlow2025unified}, OmniFlow \citep{li2024omniflow}, etc. These methods generate multimodal data simultaneously through iterative denoising of a randomly sampled initial noise.

A commonality among the aforementioned methods designed for joint multimodal generation is that they typically rely heavily on preprocessing techniques to \textbf{harmonize the varied data representations into a unified format}, thereby creating a single modality. One approach (taken by Chameleon \citep{team2024chameleon}, UniDisc \citep{swerdlow2025unified}, etc) is to cast multimodal inputs all into discrete tokens with modality-dependent tokenizers built with discrete or vector-quantized variational encoders (VQVAE) \citep{van2017neural, esser2021taming}. An alternative route (considered by UniDiffuser \citep{bao2023one}, OmniFlow \citep{li2024omniflow}, etc) is to preprocess the multimodal data by embedding them into continuous-valued latent vectors with encoders trained with variational encoders (VAE) \citep{kingma2013auto} or representation alignment (e.g., CLIP \citep{radford2021learning}). 

For these approaches, regardless of whether discrete tokens or continuous latents are used, generation is performed in a unimodal space, and the original data modality must be recovered through decoding. Therefore, these pipelines may suffer from generation artifacts due to the limited accuracy of the decoders \citep{hoogeboom2024simpler}. Additionally, the requirement for high-performance encoder-decoder pairs can be problematic to satisfy for applications that lack abundant high-quality data \citep{zhang2023mixed}. Due to the requirement for task-specific algorithm designs, these methods also cannot be conveniently extended to generate data composed of arbitrary modalities. Therefore, a natural question to ask is the following:
\begin{center}
\textit{Can we design a principled framework to enable joint modeling of multi-modal data in their native spaces without a unified representation?}
\end{center}
Diffusion models serve as a powerful backbone for building such a framework. Theorists have shown that diffusion models can be extended to a more general idea called \emph{denoising Markov models} \citep{benton2022denoising, ren2025unified}, providing a solid theoretical foundation for a \textbf{multimodal extension}. In addition to this, existing works such as MultiFlow \citep{campbell2024generative} and Generator Matching \citep{holderrieth2024generator} have verified the effectiveness of \textbf{multimodal models in native state spaces} in protein design. Motivated by these successes, we propose a general framework for building multimodal diffusion models on arbitrary state spaces without the need for data format unifiers. Our contributions are three-fold: 
\vspace{-5pt}
\begin{enumerate}
\item We propose a novel framework for building multimodal diffusion models by combining the native diffusion models designed for each data modality, and derive a unified learning objective. Under our design, learning multimodal diffusion models is as straightforward as performing a joint optimization on a sum of unimodal learning losses, despite requiring a non-trivial proof.

\item We introduce decoupled noise schedules for each data modality and theoretically justify the validity of score learning under the presence of multiple time variables. We demonstrate that this design enables us to simultaneously handle both unconditional and conditional multimodal generation in one single model. We also propose a novel guidance mechanism effective in both use cases for enhancing generation quality.

\item We experiment with text-image generation and mixed-type tabular data synthesis, achieving competitive performance on both tasks with more parameter-efficient models, without relying on pre-trained models or powerful extra encoders. More importantly, we devise a set of training strategies for the task of text-image, which is crucial for achieving success.
\end{enumerate}

\begin{figure*}[!t]
\begin{center}
    \includegraphics[width=0.75\textwidth]{./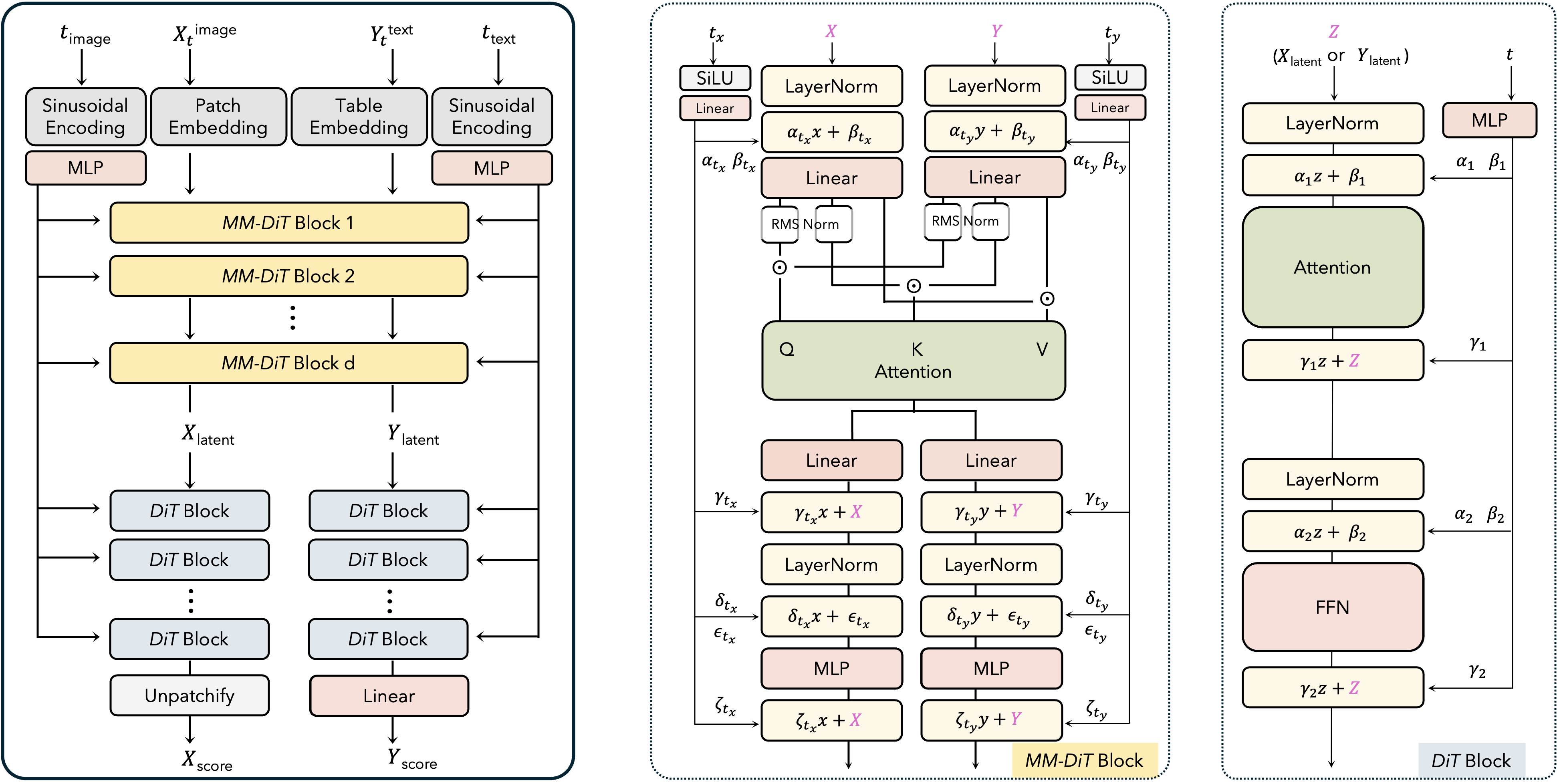}
\centering
\end{center}
\vspace{-1em}
\caption{Network backbone for text-image generation, motivated by MMDiT \citep{esser2024scaling} and DiT \citep{peebles2023scalable}.}
\vspace{-1em}
\label{fig:architecture}
\end{figure*}

\section{Preliminaries}
In this section, we review basic concepts and formulations of common diffusion models in different state spaces.

\subsection{Continuous Space Diffusion Models}
For continuous diffusion models \citep{song2020score, ho2020denoising}, one considers a continuous time stochastic differential equation (SDE) $\{X_t\}_{0 \leq t \leq T}$ on the Euclidean space $\mathbb{R}^{d}$ as the \emph{forward process}. The process is characterized by the following dynamic, 
\begin{align*}
    \rd X_t = \boldsymbol{f}(X_t, t) \rd t + \boldsymbol{g}(t) \rd W_t,
\end{align*}
where $\boldsymbol{f}: \R^d \times \R \to \R^d$ is the drift and $\boldsymbol{g} : \R \to \R$ is the diffusion coefficient. We denote $p_t = \Law(X_t)$. $\boldsymbol{f}$ and $\boldsymbol{g}$ are often chosen that $p_{T}$ is an easy-to-sample distribution. A popular pick is the time re-parametrized Ornstein Uhlenbeck process, which corresponds to the selection of $\boldsymbol{f}(X_t, t) = - \frac{1}{2}\beta_t X_t$ and $\boldsymbol{g}(t) = \sqrt{\beta_t}$, for some positive noise schedule $\beta_t$. In such case, $p_{T} \approx \mathcal{N}(0, I)$ for reasonably large $T$. It can be shown that the backward process is another SDE with a different drift \citep{anderson1982reverse},
\begin{align*}
    \rd X_t = \boldsymbol{f}(X_t, t) \rd t - \boldsymbol{g}^2(t) \nabla_{x} \log p_{t}(X_t) \rd t + \boldsymbol{g}(t) \rd W_t
\end{align*}
The common practice in training is to define the score vector $\boldsymbol{s}(X_t, t) = \nabla_{x} \log p_{t}(X_t)$ and we approximate it with a neural network $\boldsymbol{s}_{\theta}$, estimated by minimizing a variant of the following score matching loss \citep{vincent2011connection},
\begin{align}
\label{eq:continous_sm}
\min_\theta \int_{0}^{T} \mathbb{E}_{X_t \sim p_t} \Big[ \big\|\boldsymbol{s}_{\theta}(X_t, t) - \boldsymbol{s}(X_t, t) \big\|^2 \Big] \rd t.
\end{align}

\subsection{Discrete Space Diffusion Models}
For discrete diffusion models \citep{campbell2022continuous, loudiscrete, ou2024your, sahoo2024simple, shi2024simplified}, one considers a continuous time markov chain (CTMC) $\{X_t\}_{0 \leq t \leq T}$ on a finite state space $\mathbb{X}$ as the \emph{forward process}. The distribution of $X_t$ is represented by a vector $p_t$ in the probability simplex on $\mathbb{R}^{|\mathbb{X}|}$. The dynamic of $X_t$ can be characterized by the following equation,
\begin{align*}
    \frac{\rd p_t}{\rd t} = \boldsymbol{Q}_t p_t, \, \text{where } \boldsymbol{Q}_t = (Q_t(x, y))_{x, y \in \mathbb{X}}
\end{align*} 
is a transition matrix satisfying that for any $x \in \mathbb{X}$, $Q_t(x, x) = - \sum_{y \neq x} Q_t(y, x)$, and for any $x \neq y \in \mathbb{X}$, $Q_t(x, y) \geq 0$. We will also denote the dynamic of $X_t$ using the following notation,
\begin{align*}
    X_{t} \sim \operatorname{CTMC}(\boldsymbol{Q}_t)
\end{align*}
$\boldsymbol{Q}_t$ is often chosen such that $p_{T}$ is a simple distribution, such as uniform on $\mathbb{X}$ or Dirac on a masked state. Common choices include uniform or masked transition matrix \citep{loudiscrete}. It is known that the backward process is another process of the same form but with a different transition rate matrix \citep{kelly2011reversibility}, which can be described as
\begin{align*}
    X_{t} \sim \operatorname{CTMC}(\overline{\boldsymbol{Q}}_t)
\end{align*}
where the rate matrix $\overline{\boldsymbol{Q}}_t = (\overline{Q}_t(x,y))_{x, y \in \mathbb{X}}$ is defined as,
\begin{equation*}
    \overline Q_t(y, x) = \begin{cases}
        \tfrac{p_{T-t}(y)}{p_{T-t}(x)} Q_{T-t}(x, y),\ & x \neq y \in \mathbb{X},\\
        - \sum_{y' \neq x} \overline Q_t(y', x),\ & x = y \in \mathbb{X}.
    \end{cases}
\end{equation*}

In discrete diffusion model training, one usually defines the concrete score vector $\boldsymbol{s}(X_t, t) = \big(\frac{p_t(y)}{p_t(X_t)}\big)_{y \in \mathbb{X}}$, and we approximate it with a neural network $\boldsymbol{s}_{\theta}(X_t, t)$, estimated by minimizing the following a variant of the following score entropy loss \citep{benton2022denoising, loudiscrete}, 
\begin{align}
\label{eq:discrete_sm}
 & \min_\theta \int_0^T \E_{X_t \sim p_t} \bigg[ \sum_{y \neq X_t}Q_t(X_t, y) \Big( \boldsymbol{s}_t(X_t, t)_{y} \log \tfrac{\boldsymbol{s}(X_t, t)_{y}}{\boldsymbol{s}_{\theta}(X_t, t)_y} \notag \\
 & \qquad \qquad - \boldsymbol{s}(X_t, t)_{y} +  \boldsymbol{s}_{\theta}(X_t, t)_{y}\Big)\bigg] \rd t. 
\end{align}

\begin{figure*}[!t]
     \centering
     \begin{subfigure}[t]{0.3\textwidth}
         \centering
         \includegraphics[width=0.67\textwidth]{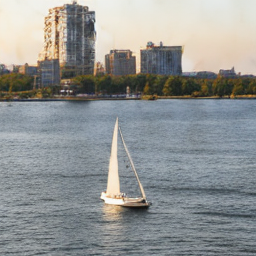}
         \caption{\textbf{Image conditioned on text.} Sample generated using the caption: \textit{``The image features a sailboat sailing on a large body of water, with a city skyline in the background.'' }}
         \label{fig:t2img}
     \end{subfigure}
     \hfill
     \begin{subfigure}[t]{0.3\textwidth}
         \centering
         \includegraphics[width=0.67\textwidth]{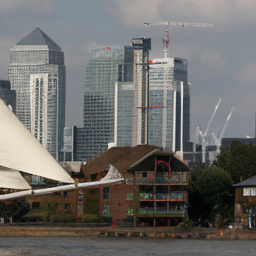}
         \caption{\textbf{Text conditioned on Image.} We generate the following caption: \textit{``The image features a cityscape with a large building, a bridge, and a city skyline in the background. The city is situated near the water. ''}}
         \label{fig:img2text}
     \end{subfigure}
     \hfill
     \begin{subfigure}[t]{0.3\textwidth}
         \centering
         \includegraphics[width=0.67\textwidth]{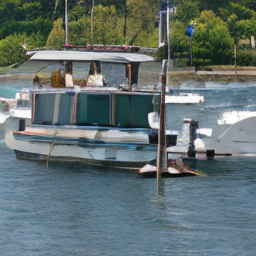}
         \caption{\textbf{Joint generation of text and image.} The caption corresponding to this image was: \textit{``The image features a large white boat with a blue roof, which allows people to look out. The boat is traveling through a body of water.''}}
         \label{fig:five over x}
     \end{subfigure}
        \caption{Visualization of samples generated by our approach. Captions are truncated for brevity. }
        \vspace{-1em}
        \label{fig:uncond}
    \label{fig:qualitative-samples}
\end{figure*}

\subsection{Riemannian Diffusion Models}
For Riemannian diffusion models, e.g. \citep{de2022riemannian, huang2022riemannian}, one considers a continuous time SDE $\{X_t\}_{0 \leq t \leq T}$ on the manifold $\mathcal{M}$ as the \emph{forward process}, characterized by the following dynamic, 
\begin{align*}
    \rd X_t = \boldsymbol{f}(X_t, t) \rd t + \rd W^{\mathcal{M}}_t,
\end{align*}
where $\boldsymbol{f}: \mathcal{M} \times \R \to T_{x}\mathcal{M}$ is the drift, and $\rd W^{\mathcal{M}}_t$ is the manifold Brownian motion. For a compact manifold $\mathcal{M}$, one can pick $\boldsymbol{f} = 0$, and the corresponding $p_{T} \approx \operatorname{Uniform}(\mathcal{M})$ is the uniform distribution on the manifold for large $T$. It's proved that the backward process is another SDE on the manifold \citep{de2022riemannian},
\begin{align*}
    \rd X_t = \boldsymbol{f}(X_t, t) \rd t - \nabla \log p_t(X_t) \rd t + \rd W^{\mathcal{M}}_t,
\end{align*}
where $\nabla$ is the Riemannian gradient on $\mathcal{M}$. For Riemannian diffusion model training, similar to the continuous case, one defines the score $\boldsymbol{s}(X_t, t) = \nabla \log p_{t}(X_t)$ and approximates it with a neural network $\boldsymbol{s}_{\theta}$, which typically requires special design to meet the requirement $\boldsymbol{s}_{\theta}(X_t, t) \in T_{X_t}\mathcal{M}$. Learning is performed through a variant of Riemannian score matching \citep{de2022riemannian, huang2022riemannian},
\begin{align}
\label{eq:riem_sm}
\min_\theta \int_{0}^{T} \mathbb{E}_{X_t \sim p_t} \Big[ \big\|\boldsymbol{s}_{\theta}(X_t, t) - \boldsymbol{s}(X_t, t) \big\|_{\mathcal{M}}^2 \Big] \rd t.
\end{align}

\section{Methodology}
In this section, we present the framework for constructing multimodal diffusion models on general state spaces in their native forms. We first discuss a unified perspective on diffusion models and then present a learning algorithm for diffusion generative modeling of multiple data modalities, where each modality has an independent time variable. Such a framework allows \textbf{any-to-any modality generation} by one single model, which simultaneously includes the \textbf{joint unconditional generation of all modalities} and the \textbf{conditional generation of a subset of modalities given the rest}. Finally, we discuss the Continuous-Discrete Multimodal Diffusion as an application of this framework to illustrate its importance and flexibility.

\subsection{Unified Perspective on Unimodal Diffusion Models}
While common unimodal diffusion models (continuous, discrete, Riemannian, etc) have distinct forward/backward processes, learning objectives, and score parameterizations, they are essentially the realization of \emph{denoising Markov models} \citep{benton2022denoising, ren2025unified} in different situations. At a high level, diffusion models consist of a Markovian forward process that gradually injects `noise' to transform the target data distribution into a simple distribution, and a backward generative process that inverts it using information learned from the marginals of the forward process. This abstract view is formally summarized below.

Consider a Markov process $\{X_{t}\}_{0 \leq t \leq T}$ with $X_{0} \sim p_{\text{data}}$, defined on a state space $\mathcal{X}$. A Markov process can be conveniently characterized using the notion of infinitesimal generators. Since $X_t$ is not necessarily time-homogeneous, we instead consider the augmented process $\bar{X} = (X_t, t)$ defined on the augmented space $\mathcal{X}_{*} = \mathcal{X} \times [0, +\infty)$. Under mild regularity assumptions, $\bar X$ is a Feller process, and its generator $\mathcal{L}$ can be defined as,
\begin{align*}
\mathcal{L} f(x) = \lim_{t \to 0} \frac{\mathbb{E}[f(X_t) | X_0 = x] - f(x)}{t},
\end{align*}
and $f:\mathcal{X}_{*} \to \mathbb{R}$ is a class of test function. We can understand the generator through its decomposition $\mathcal{L} = \partial_t + \hat{\mathcal{L}}$, where $\hat{\mathcal{L}}$ is an operator that acts on functions defined on the original space $\mathcal{X}$. We can also unify the characterization of the evolution of marginals $p(\cdot, t) = \Law(X_t)$ as well as the score learning objectives in terms of $\mathcal{L}$. Under weak technical assumptions, $p(\cdot, t)$ satisfies the following general form of Fokker-Planck equation,
\begin{align*}
    \partial_t p(x, t) = \hat{\mathcal{L}}^{*}p(x, t), \, p(x, 0) = p_{\text{data}}(x).
\end{align*}
where $\hat{\mathcal{L}}^*$ is the adjoint operator of $\hat{\mathcal{L}}$. Moreover, we can define a \emph{generalized explicit score matching} objective \citep{lyu2012interpretation, benton2022denoising},
\begin{align}
\mathcal{J}_{\text{ESM}}(\beta) = \mathop{\mathbb{E}}_{t, p_{t}} \Big[ \Phi\left(\frac{p}{\beta}\right)(X_t, t)\Big]
\end{align}
where $\Phi(f) = f^{-1} \mathcal{L} f - \mathcal{L} \log f$, and $\beta: \mathcal{X} \times [0, +\infty) \to \mathbb{R}^{+}$ . Note that this is an extension of the common score matching objectives, and we could interpret $\mathcal{J}_{\text{ESM}}$ as a loss function that compares the `gradient log' of $p(x, t)$ to that of $\beta(x,t)$, through which we learn the information of the forward marginal $p_{t}$. For example, in continuous diffusion with $\boldsymbol{g}(t) = 1$, $\mathcal{L} = \partial_t + \boldsymbol{f} \cdot \nabla + \frac{1}{2} \Delta$, we recover the explicit score matching objective on Euclidean space \citep{hyvarinen2005estimation}, with $\Phi(p/\beta) = \frac{1}{2} \| \nabla \log p - \nabla \log \beta \|^2$. Note that the explicit score matching objectives are not tractable for training purposes; we will later introduce their equivalent, trainable variants.

\subsection{Versatile Multimodal Diffusion Models with Decoupled Times}
Building on this unified description of unimodal diffusion models, we extend the denoising Markov model framework to a multimodal scenario. This enables us to perform generative modeling of data distributions consisting of mixed-type data without requiring complicated preprocessing pipelines. 

We begin by formally defining the \emph{forward process} in terms of generators of Markov processes. Assume that we have a data distribution $p_{\text{data}}$ defined on the product state spaces $\mathcal{X}^{1} \times \dots \times \mathcal{X}^{n}$. For $1 \leq i \leq n$, we have a Markov process $\{X^{i}_{t^{i}}\}_{1 \leq t^{i} \leq T}$ on $\mathcal{X}^i$, which is regular enough with a unique, easy-to-sample stationary distribution $\pi^{i}$. We pick $T$ so that $\Law(X^{i}_{T}) \approx \pi^i$ for each $1 \leq i \leq n$. Now we can introduce the following joint forward process,
\begin{align}
\label{eq:joint-forward}
& \boldsymbol{X}_{\boldsymbol{t}} = (X^1_{t^1}, \dots, X^i_{t^i}, \dots, X^n_{t^{n}}), \, 0 \leq t^1, \dots, t^n \leq T \notag\\
& (X^1_{0}, \dots, X^i_{0}, \dots, X^n_{0}) \sim p_{\text{data}}(\bx)
\end{align}
where $\boldsymbol{t} = (t^1, \dots, t^n)$, $\bx = (x^1, \dots, x^n)$.  We consider the $i$-th augmented process $\overline{X^{i}} = (X^i_{t_{i}}, t_{i}) \in \mathcal{X}^{i}_{*} = \mathcal{X}^i \times [0, +\infty)$ and we denote its generator as $\mathcal{L}_{X^i}$. We slightly abuse the notation and define the application of $\mathcal{L}_{X^i}$ to a multivariable test function as the following,
\begin{align*}
   \cL_{X^i} f(\bx) = \lim\limits_{t^i \to 0} \dfrac{\E[f(x^1, {\tiny\dots} X^i_{t^i}, {\tiny\dots}, x^n) | X_0^i = x^i] - f(\bx) }{t^i} 
\end{align*}

We assume that $X^i_{t^i}$ are independent Markov processes when conditioned on initial conditions. We want to emphasize that the design of this forward process \cref{eq:joint-forward} is not only for injecting probabilistically independent `noises' into each modality. More importantly, it allows each modality to be noised in an \textbf{asynchronous} way. 

To fully characterize and understand this forward process as a whole, we need to learn the full joint marginal of $\boldsymbol{X}_{\boldsymbol{t}}$, which we denote as $p(\bx, \bt)$ and should be understood as the joint distribution of $X^1, \dots, X^n$ at time $t^1, \dots, t^n$. To visualize this idea, we demonstrate the forward process with two independent time variables in \cref{fig:promo}.

To `invert' this forward process for generative modeling purposes, we will need to learn information from the forward process, similar to the case of unimodal diffusion models where one performs score matching. Extending the framework of \citep{benton2022denoising}, we introduce the following generalized explicit score matching loss (GESM) for learning the full marginal $p(\bx, \bt)$. 
\begin{align*}
&\cI_{\mathrm{GESM}} = \\
& \underset{\bt, \bx_{\bt} \sim p(\cdot, \bt)}{\mathbb{E}} \Bigg[\sum_{i=0}^n\dfrac{\cL_{X^i} (p/\beta_\theta)(\bx_{\bt}, \bt)}{(p/\beta_\theta)(\bx_{\bt},\bt)} - \cL_{X^i} \log (p/\beta_\theta)(\bx_{\bt},\bt) \Bigg]
\end{align*}
here $\beta_{\theta}: \mathcal{X}^1_{*} \times \cdots \times \mathcal{X}^n_{*} \to \mathbb{R}^{+}$ is our parameterized unnormalized distribution. We have the following important properties that characterize the optimizer of $\cI_{\mathrm{GESM}}$
\begin{theorem}
\label{thm:gesm}
$\cI_{\mathrm{GESM}} \geq 0$, with equality reached when $\beta_\theta(\bx, \bt) \propto p(\bx, \bt)$.
\end{theorem}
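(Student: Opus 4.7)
The plan is to show that each summand in the integrand of $\cI_{\mathrm{GESM}}$ is pointwise nonnegative, so the expectation is nonnegative, and then observe that $\beta_\theta \propto p$ makes the ratio $p/\beta_\theta$ a positive constant that is annihilated by every generator. The crux is an application of Jensen's inequality to the Markov semigroup associated with each modality.

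Set $r := p/\beta_\theta > 0$ on $\cX^1 \times \cdots \times \cX^n \times [0,T]^n$, and fix $\bt$ for the moment. I treat $\cL_{X^i}$ as a generator acting on the spatial coordinate $x^i$ with the other coordinates (and $\bt$) regarded as parameters. Decomposing $\cL_{X^i} = \partial_{t^i} + \hat\cL_{X^i}$ in the spirit of the unimodal setup, the identity $\partial_{t^i} r/r = \partial_{t^i}\log r$ makes the $\partial_{t^i}$ contribution cancel inside each bracketed term, so the $i$-th summand reduces to $\hat\cL_{X^i} r/r - \hat\cL_{X^i}\log r$. This reduction is the bookkeeping step; the real content is nonnegativity of this purely spatial expression.

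For the second step I invoke Jensen's inequality. Let $\hat P^i_s$ denote the Markov semigroup generated by $\hat\cL_{X^i}$, so $\hat P^i_s f(\bx) = \E[f(x^1,\dots,X^i_s,\dots,x^n)\mid X^i_0 = x^i]$ is an average against a probability kernel. Concavity of $\log$ then yields
\begin{align*}
\hat P^i_s(\log r)(\bx,\bt) \leq \log \hat P^i_s r(\bx,\bt).
\end{align*}
Subtracting $\log r(\bx,\bt)$, dividing by $s>0$, and sending $s\to 0^+$, the right-hand side converges to $\hat\cL_{X^i} r/r$ (via $\log(1+\varepsilon)=\varepsilon+O(\varepsilon^2)$) while the left-hand side converges to $\hat\cL_{X^i}\log r$. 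This gives $\hat\cL_{X^i} r/r - \hat\cL_{X^i}\log r \geq 0$ pointwise for each $i$. Summing over $i$ and integrating against $\bt$ and $\bx_{\bt}\sim p(\cdot,\bt)$ gives $\cI_{\mathrm{GESM}}\geq 0$. For the equality case, if $\beta_\theta(\bx,\bt)\propto p(\bx,\bt)$ then $r$ is a positive constant; every Markov generator annihilates constants, so $\hat\cL_{X^i}r = \hat\cL_{X^i}\log r = 0$ and $\cI_{\mathrm{GESM}} = 0$.

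The main obstacle is purely analytic: the Jensen-plus-limit argument needs $r$ and $\log r$ to lie in a common domain on which $\hat P^i_s$ acts nicely and the limit $s\to 0^+$ commutes with the semigroup. For the concrete instances relevant to the paper (Euclidean diffusions, CTMCs on finite state spaces, compact Riemannian diffusions) this is routine and can be checked directly: for jump generators the integrand becomes a sum of terms $Q(x,y)[r(y)/r(x) - 1 - \log(r(y)/r(x))]$, each nonnegative by $u - 1 \geq \log u$; for diffusion generators it becomes a carré-du-champ expression of the form $\tfrac{1}{2}|\nabla \log r|^2 \geq 0$. These explicit formulas double as a sanity check of the abstract Jensen argument, and together they cover every modality used in the later sections.
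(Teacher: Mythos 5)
Your proposal is correct and follows essentially the same route as the paper: Jensen's inequality applied to the Markov semigroup (conditional expectation), followed by the difference-quotient limit $s\to 0^+$ to obtain the pointwise inequality $\cL_{X^i} f/f \geq \cL_{X^i}\log f$ for $f = p/\beta_\theta$, with equality when $f$ is constant. Your explicit cancellation of the $\partial_{t^i}$ part and the closing sanity checks (the $u-1\geq\log u$ form for jump generators and the carr\'e-du-champ form for diffusions) are harmless refinements of the same argument.
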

\cref{thm:gesm} states that the minimizer of $\cI_{\mathrm{GESM}}$ is $p(\bx,\bt)$ up to a multiplicative constant. In practice, we often do not directly model $\beta_{\theta}$, but instead parameterize its score functions, which are invariant to multiplicative constants. For example, in continuous diffusion, one often parameterizes $\nabla \log \beta_{\theta}$. This makes the optimization of $\cI_{\mathrm{GESM}}$ a well-defined problem with a unique minimizer in terms of score learning. In practice, one can't evaluate $\cI_{\mathrm{GESM}}$ as it's intractable due to the true marginals $p$ being unavailable a priori. Luckily, one can efficiently compute the following denoising and implicit variants of $\cI_{\mathrm{GESM}}$ for learning purposes.
\begin{theorem}\label{thm:loss-equiv} $\cI_{\mathrm{GESM}}$, $\cI_{\mathrm{GDSM}}$ and $\cI_{\mathrm{GISM}}$ are equivalent up to constants, where
\begin{align*}
& \cI_{\mathrm{GDSM}} = \\
&\underset{\substack{\bt, p_0, \\
p_{\bt|0}}}{\mathbb{E}} \Bigg[\sum_{i=1}^n \dfrac{\cL_{X^i} (p_{\bt|0}/\beta_\theta)(\bx_{\bt}, \bt)}{(p_{\bt|0}/\beta_\theta)(\bx_{\bt},\bt)} - \cL_{X^i} \log (p_{\bt|0}/\beta_\theta)(\bx_{\bt},\bt) \Bigg] \\
& \cI_{\mathrm{GISM}} = \underset{\bt, p_{\bt}}{\mathbb{E}} \Bigg[\sum_{i=1}^n \dfrac{\cL_{X^i}^* \beta_\theta(\bx_{\bt}, \bt)}{\beta_\theta(\bx_{\bt},\bt)} - \cL_{X^i}^* \log (\beta_\theta)(\bx_{\bt},\bt) \Bigg]
\end{align*}
\end{theorem}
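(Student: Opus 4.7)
The plan is to establish the two equivalences $\cI_{\mathrm{GESM}} \equiv \cI_{\mathrm{GDSM}}$ and $\cI_{\mathrm{GESM}} \equiv \cI_{\mathrm{GISM}}$ separately, lifting the classical denoising/implicit score matching identities of Hyv\"arinen and Vincent to the abstract Markov-generator setting so that a single argument works uniformly across continuous, discrete, and Riemannian modalities. Since each of the three losses is a sum over the modality index $i$, with the $i$-th summand involving only $\cL_{X^i}$ acting on the $x^i$-coordinate, it suffices to prove each identity one summand at a time and then sum over $i$.

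For $\cI_{\mathrm{GESM}} \equiv \cI_{\mathrm{GDSM}}$ the starting point is the conditional independence of the coordinate forward processes given $\bx_0$: this gives $p_{\bt|0}(\bx|\bx_0) = \prod_j p^j_{t^j|0}(x^j|x^j_0)$ and hence $p(\bx,\bt) = \int p_{\bt|0}(\bx|\bx_0)\, p_0(\bx_0)\,\rd\bx_0$. Linearity of $\cL_{X^i}$ then propagates this to $\cL_{X^i} p(\bx,\bt) = \int \cL_{X^i} p_{\bt|0}(\bx|\bx_0)\, p_0(\bx_0)\,\rd\bx_0$. I would expand each summand of $\cI_{\mathrm{GESM}}$, separate the pieces that depend on $\beta_\theta$ from those that depend only on $p$, and use a tower/Fubini argument conditioning on $\bx_0$ to match the $\beta_\theta$-dependent pieces with their counterparts in $\cI_{\mathrm{GDSM}}$; what remains depends only on $p$ and is therefore a $\theta$-independent constant.

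For $\cI_{\mathrm{GESM}} \equiv \cI_{\mathrm{GISM}}$ the strategy is to transfer $\cL_{X^i}$ off the composite $p/\beta_\theta$ onto $\beta_\theta$ alone via the adjoint identity $\int h\,(\cL_{X^i} g)\,\rd\mu_i = \int g\,(\cL_{X^i}^{\ast} h)\,\rd\mu_i$ for a suitable reference measure $\mu_i$ on $\cX^i$. The first summand becomes $\mathbb{E}_{p}\!\left[\cL_{X^i}(p/\beta_\theta)/(p/\beta_\theta)\right] = \int \beta_\theta\,\cL_{X^i}(p/\beta_\theta)\,\rd\mu = \int (p/\beta_\theta)\,\cL_{X^i}^{\ast}\beta_\theta\,\rd\mu = \mathbb{E}_{p}\!\left[\cL_{X^i}^{\ast}\beta_\theta/\beta_\theta\right]$, matching the first ISM term. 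For the second summand, splitting $\log(p/\beta_\theta) = \log p - \log\beta_\theta$ isolates a $\theta$-independent contribution, and a further adjoint step converts the $\log\beta_\theta$ piece into $\mathbb{E}_p[\cL_{X^i}^{\ast}\log\beta_\theta]$ up to a $\beta_\theta$-free residual.

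The main obstacle is algebraic: a general Markov generator $\cL$ does not satisfy a Leibniz rule, so identities such as $\cL(fg)$ or $\cL(1/f)$ come with a bilinear correction (the carr\'e du champ $\Gamma(f,g)$ in the diffusion case and a jump analogue for CTMCs). Consequently the manipulation of $\cL_{X^i}(p/\beta_\theta)/(p/\beta_\theta) - \cL_{X^i}\log(p/\beta_\theta)$ cannot proceed by formal calculus and must be routed through the intrinsic identity $\cL \log u = \cL u/u - \Gamma(\log u,\log u)$, or through the defining representation of $\cL$ on each state space. Secondary care is needed to justify the adjoint step case-by-case (summation-by-parts on discrete $\cX^i$, Stokes on compact Riemannian manifolds, and sufficient decay in the Euclidean case), to verify the Fubini exchanges between the $\bt$-average over $[0,T]^n$, the $p_0$-integration, and the spatial integrals, and to handle the time component of the augmented generator $\cL_{X^i} = \partial_{t^i} + \hat\cL_{X^i}$, whose $\partial_{t^i}$ contribution should cancel between the two terms of each loss.
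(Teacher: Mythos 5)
Your proposal is correct and uses essentially the same toolkit as the paper: multiply through by the density so that $\E_{p}\bigl[\cL_{X^i}(p/\beta_\theta)/(p/\beta_\theta)\bigr]$ becomes $\int \beta_\theta\,\cL_{X^i}(p/\beta_\theta)$, pass to the adjoint, split $\log(p/\beta_\theta)=\log p-\log\beta_\theta$, and absorb every $\theta$-independent term into the constant. The only structural difference is the pairing: the paper routes both equivalences through $\cI_{\mathrm{GISM}}$ (proving $\cI_{\mathrm{GDSM}}\equiv\cI_{\mathrm{GISM}}$ and $\cI_{\mathrm{GESM}}\equiv\cI_{\mathrm{GISM}}$, each via the adjoint followed by collapsing $\E_{p_0}\E_{p_{\bt|0}}$ into $\E_{p_\bt}$), whereas you prove $\cI_{\mathrm{GESM}}\equiv\cI_{\mathrm{GDSM}}$ directly by linearity of $\cL_{X^i}$ in $p=\int p_{\bt|0}\,p_0\,\rd\bx_0$ plus the tower rule --- this is the classical Vincent argument and is, if anything, slightly more elementary for that leg since it avoids the adjoint and its attendant boundary-term justifications. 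One correction to your stated ``main obstacle'': no Leibniz or chain rule for $\cL$, and hence no carr\'e du champ correction, is needed anywhere in this theorem. Every manipulation is either pointwise algebra ($p\cdot\frac{\cL(p/\beta_\theta)}{p/\beta_\theta}=\beta_\theta\,\cL(p/\beta_\theta)$), linearity of $\cL_{X^i}$ applied to a sum or integral of its argument, or the adjoint identity; the identity $\cL\log u=\cL u/u-\Gamma(\log u,\log u)$ only becomes relevant when one computes the explicit form of $\Phi$ for a specific generator (as in Prop.~3) or proves nonnegativity (Thm.~1). Your observation that the $\partial_{t^i}$ part of the augmented generator cancels between the two terms of each summand is correct and worth keeping, and your list of secondary verifications (integration by parts per state space, Fubini) matches what the paper implicitly assumes.
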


\subsection{Continuous-Discrete Multimodal Diffusion}
\label{sec:cont_discrete_diffusion}
We consider a distribution $p_{\text{data}}(x,y)$ where $x \in \R^d$, $y\in \mathbb{X}$, where $\mathbb{X}$ is a finite space. Important applications include text-image joint generation, mixed-type tabular data synthesis, etc. In this case, the natural choices on each state space would be continuous diffusion on $\mathbb{R}^{d}$ and discrete diffusion on $\mathbb{X}$. This results in the following forward process,
\begin{align}
\label{eq:forward}
   \begin{cases}
    \rd X_t = f(X_t, t) \rd t + g(t) \rd B_t\\
    Y_s \sim \operatorname{CTMC}(Q_{s}), \; (X_0, Y_0) \sim p_{\text{data}}(x, y) 
\end{cases}
\end{align}
Let's denote the joint marginal of $(X_t, Y_s)$ as $p_{t,s}(X_t, Y_s)$. with these choices of Markov processes, $\mathcal{L}_{X} = \partial_t + f \cdot \nabla + \frac{1}{2} g^2(t) \Delta$, and $\mathcal{L}_{Y} = \partial_s + Q_s$. Therefore, we can compute the generalized denoising score matching objective as,
\begin{proposition}\label{prop:cont_disc_simp}
For forward process \cref{eq:forward}, $\cI_{\mathrm{GDSM}}$ is equivalent to the following objective,
\begin{align*}
& \underset{\substack{t, s, x_0, y_0 \sim p_{0} \\ x_t, y_s \sim p_{t, s|0}}}{\mathbb{E}}\Big[ \frac{1}{2} g^{2}(t) \| s_{\theta}^{X} - \nabla \log p_{t}(x_t | x_0) \|^2 + \\
& \qquad \sum_{y \neq y_s}Q_s(y_s, y) \Big((\boldsymbol{s}^{Y}_{\theta})_{y} - \frac{p_{s}(y|y_0)}{p_s(y_s|y_0)} \log (\boldsymbol{s}^{Y}_{\theta})_y  \Big) \Big]
\end{align*}
where $\boldsymbol{s}_{\theta}^{X}$,$\boldsymbol{s}_{\theta}^{Y}$ is the learned continuous/discrete score.
\end{proposition}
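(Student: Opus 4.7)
The plan is to expand $\cI_{\mathrm{GDSM}}$ for $n=2$ with the two generators $\mathcal{L}_X$ and $\mathcal{L}_Y$, and reduce the two resulting summands separately. The key structural input is the conditional factorization
$$p_{\bt|0}(x_t, y_s\mid x_0, y_0) = p_{t|0}(x_t\mid x_0)\, p_{s|0}(y_s\mid y_0),$$
which follows from the forward-process definition in \cref{eq:forward} (the two forward processes are independent given their initial conditions). A preliminary observation valid for both modalities is that the time-derivative pieces $\partial_t$ and $\partial_s$ inside the generators contribute identically to $h^{-1}\mathcal{L}h$ and $\mathcal{L}\log h$ via $\partial_\tau\log h = h^{-1}\partial_\tau h$ and therefore cancel inside each $\Phi$-functional; only the spatial operators contribute.

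For the continuous summand, the first-order drift $f(x,t)\cdot\nabla$ in $\mathcal{L}_X$ cancels by the same chain-rule mechanism as the time derivative, leaving only the Laplacian. Applying the identity $\Delta\log h = h^{-1}\Delta h - \|\nabla\log h\|^2$ yields $\tfrac12 g^2(t)\,\|\nabla_x\log(p_{\bt|0}/\beta_\theta)\|^2$. Since the factor $p_{s|0}(y_s|y_0)$ has zero $x$-gradient under the conditional factorization, this further collapses to $\tfrac12 g^2(t)\,\|\nabla\log p_{t|0}(x_t|x_0) - \nabla_x\log\beta_\theta\|^2$; identifying $s^X_\theta := \nabla_x\log\beta_\theta$ reproduces the first summand of the proposition.

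For the discrete summand, inserting the CTMC action $\mathcal{L}_Y h(y_s) = \sum_{y\neq y_s} Q_s(y_s,y)[h(y)-h(y_s)]$ into $\Phi_Y$ and using the factorization to cancel the continuous factor $p_{t|0}(x_t|x_0)$ in the ratio $f(y)/f(y_s)$ leaves an expression involving only two quantities: the true concrete score $\sigma_y := p_{s|0}(y|y_0)/p_{s|0}(y_s|y_0)$ and the model ratio $(s^Y_\theta)_y := \beta_\theta(x_t,y)/\beta_\theta(x_t,y_s)$. The main obstacle lies here: the raw output of $\Phi_Y$ is a Bregman-type divergence $\sigma_y/(s^Y_\theta)_y - 1 - \log(\sigma_y/(s^Y_\theta)_y)$, whereas the proposition's form is the denoising score entropy $(s^Y_\theta)_y - \sigma_y\log(s^Y_\theta)_y$. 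The two share the unique minimizer $(s^Y_\theta)_y = \sigma_y$ but are not literally equal modulo $\theta$-independent constants pointwise. To bridge this gap I would invoke the equivalence $\cI_{\mathrm{GESM}}\equiv\cI_{\mathrm{GDSM}}\equiv\cI_{\mathrm{GISM}}$ from \cref{thm:loss-equiv} and derive the discrete summand via $\cI_{\mathrm{GISM}}$: computing $\mathcal{L}_Y^*\beta_\theta/\beta_\theta - \mathcal{L}_Y^*\log\beta_\theta$ with $\mathcal{L}_Y^* = Q_s$ the forward CTMC operator, taking the expectation under $p_t$, and discarding $\theta$-independent terms yields the stated denoising score entropy directly, which combined with the continuous summand gives the proposition.
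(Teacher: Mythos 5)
Your decomposition and your treatment of the continuous summand coincide with the paper's proof: the first-order pieces ($\partial_t$, $\partial_s$, and $f\cdot\nabla$) cancel inside the $\Phi$-functional by the chain rule, the Laplacian term reduces via $\Delta\log h = h^{-1}\Delta h - \|\nabla\log h\|^2$ to $\tfrac12 g^2(t)\,\|\nabla_x\log(p_{\bt|0}/\beta_\theta)\|^2$, and the conditional independence of $X_t$ and $Y_s$ given $(X_0,Y_0)$ collapses $\nabla_x\log p_{\bt|0}$ to $\nabla_x\log p_{t|0}(x_t\mid x_0)$. You also correctly diagnose the one genuinely nontrivial point: $\Phi_Y(p_{\bt|0}/\beta_\theta)$ produces $\sum_{y}Q_s(y,y_s)\bigl(\sigma_y/(s^Y_\theta)_y-\log(\sigma_y/(s^Y_\theta)_y)\bigr)$, which is not pointwise equal, modulo $\theta$-independent terms, to the denoising score entropy in the statement.

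The bridge you propose for that mismatch, however, does not work as stated. $\cI_{\mathrm{GISM}}$ is an expectation over the \emph{marginal} $p_{\bt}$ and contains no conditional law whatsoever, so computing $\cL_Y^*\beta_\theta/\beta_\theta-\cL_Y^*\log\beta_\theta$ and discarding $\theta$-independent terms cannot ``directly'' produce the weight $\sigma_y=p_s(y\mid y_0)/p_s(y_s\mid y_0)$ that multiplies $\log(s^Y_\theta)_y$ in the proposition; that factor can only enter once you condition on $y_0$ again, at which point you are redoing the GISM-to-GDSM conversion. The mechanism that actually closes the gap --- and the one the paper uses --- stays inside $\cI_{\mathrm{GDSM}}$: drop the $\theta$-independent term $-\log\sigma_y$, write the expectation over $y_s\sim p_{s|0}(\cdot\mid y_0)$ as an explicit double sum over $(y,y_s)$, and relabel $y\leftrightarrow y_s$. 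Under this relabeling, $p(y\mid y_0)\,Q_s(y,y_s)\,\beta_\theta(y_s)/\beta_\theta(y)$ becomes $p(y_s\mid y_0)\,Q_s(y_s,y)\,(s^Y_\theta)_y$, and the surviving log term acquires exactly the weight $\sigma_y$, giving the stated score entropy (the diagonal $y=y_s$ term being $\theta$-independent). This is the same discrete integration by parts your GISM detour gestures at, but it must be performed at the level of the conditional law $p_{s|0}(\cdot\mid y_0)$ rather than the marginal; with that substitution your argument goes through and matches the paper's.
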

Importantly, \cref{prop:cont_disc_simp} shows an amazing result that score functions of \textbf{multimodal joint marginal} $p(\cdot, \cdot, t, s)$ can be learned through score matching with \textbf{unimodal conditional score} for each modality. This is a non-trivial result as $\cI_{\mathrm{GDSM}}$ suggests that, to perform multimodal generation, we need to match the score network with conditional scores of the joint distribution such as $\nabla \log p_{t,s}(x_t, y_s | x_0, y_0)$ instead of $\nabla \log p_{t}(x_t | x_0)$. However, thanks to Bayes' rule and the design of independent noise injection per modality, the two conditional scores are \textbf{identical}. Thus, while learning multimodal diffusion models may seem as simple as jointly optimizing a sum of unimodal diffusion model training objectives, the theoretical support for such a naive approach is grounded much deeper. 

With the learned score functions, we can `invert' the forward process for generative purposes, which is stated in \cref{prop:backward_text_image}. 

\begin{proposition}
\label{prop:backward_text_image}
The following process $(X_t, Y_s)$ has marginal distribution equals to $p(x,y, T-t, T-s)$,
\begin{align*}
   \begin{cases}
    \begin{aligned}
        \rd X_t &= - f(X_t, T-t) + g^2(T-t) \nabla \log \overleftarrow{p}_{t, s}(X_t, Y_s) \rd t \\
        & \quad + g(T-t) \rd B_t
    \end{aligned} \\
    Y_s \sim \operatorname{CTMC}\big(\overleftarrow{Q}(X_t, t, s)\big), \; (X_0, Y_0) \sim p(x, y, T, T)
\end{cases}
\end{align*}
where $\overleftarrow{p}_{t,s} = p_{T-t, T-s}$, $\overleftarrow{Q}(X_t, t, s)$ is a rate matrix with $y', y$ entry being $\frac{\overleftarrow{p}_{t,s}(X_t, y')}{\overleftarrow{p}_{t,s}(X_t, y)}(Q_{T-s})_{yy'}$ when $y' \neq y$.
\end{proposition}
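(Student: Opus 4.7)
The strategy is to verify that the joint density $q(x,y,t,s)$ of the proposed backward process coincides with $\tilde p(x,y,t,s) := p(x,y,T-t,T-s)$. Because the proposed process evolves under two \emph{independent} clocks $t$ and $s$, its joint Kolmogorov forward equation splits: $\partial_t q$ involves only the continuous generator (at parameter $T-t$), while $\partial_s q$ involves only the CTMC generator (at parameter $T-s$). Hence it suffices to verify that $\tilde p$ satisfies these two evolution equations, with initial condition matching $q(x,y,0,0) = p(x,y,T,T)$, which is immediate from the prescription $(X_0, Y_0) \sim p(\cdot,\cdot,T,T)$.

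For the continuous direction, I would start from the forward Fokker--Planck relation $\partial_{t'} p(x,y,t',s) = -\nabla_x \cdot [f(x,t') p] + \tfrac{1}{2} g^2(t') \Delta_x p$, which holds for each fixed $(y,s)$ because $Y_s$ is independent of the Brownian noise conditional on the initial data. Substituting $t' = T-t$ (which flips the sign through $\partial_t \tilde p = -\partial_{t'} p|_{t'=T-t}$) and applying Anderson's reversal identity to $\tilde p$ viewed as a normalized conditional density in $x$ given $(y,s)$, I would obtain
\[
\partial_t \tilde p = -\nabla_x \cdot \big[(-f(x,T-t) + g^2(T-t)\, \nabla_x \log \tilde p)\, \tilde p\big] + \tfrac{1}{2} g^2(T-t) \Delta_x \tilde p .
\]
The key observation is that $\nabla_x \log \tilde p(x,y,t,s) = \nabla_x \log \tilde p(x \mid y,s)$, so the joint score appearing in the SDE drift agrees with the conditional score that Anderson's formula requires. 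This matches the Kolmogorov equation of the proposed reverse SDE.

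For the discrete direction, by the analogous conditional independence we have $\partial_{s'} p(x,y,t,s') = \sum_{y'} Q_{s'}(y,y')\, p(x,y',t,s')$ for each fixed $(x,t)$. Applying Kelly's reversal for CTMCs to the $s$-marginal conditional on $(x,t)$ yields
\[
\partial_s \tilde p(x,y,t,s) = \sum_{y'} \overleftarrow{Q}(x,t,s)_{yy'}\, \tilde p(x,y',t,s),
\]
with off-diagonal rates $\overleftarrow{Q}_{yy'} = \frac{\tilde p(x,y',t,s)}{\tilde p(x,y,t,s)} Q_{T-s}(y',y)$ and diagonal entries chosen so that row sums vanish. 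This is precisely the Kolmogorov equation for the proposed CTMC driven by $\overleftarrow{Q}(X_t,t,s)$, so both $\partial_t$- and $\partial_s$-evolutions of $\tilde p$ agree with those of $q$, and uniqueness closes the argument.

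The main obstacle, and the point demanding most care, is justifying that the two time-reversals can be performed \emph{independently} in $t$ and $s$. This relies on the fact that the two forward noising mechanisms are independent conditional on the initial joint datum, which forces the joint generator to decompose as $\mathcal{L}_X + \mathcal{L}_Y$ and decouples the two Kolmogorov equations. A second subtle point is that the reverse SDE and reverse CTMC must use the \emph{joint} score and rate, not the marginals: at a general pair $(t,s)$ the ``frozen'' value of the other modality carries information about the current one, and the joint score/rate absorbs exactly this conditioning, mirroring the role played by conditional scores in the unimodal conditional case.
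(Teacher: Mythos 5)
Your proposal is correct and follows essentially the same route as the paper: both verify that $\tilde p(x,y,t,s)=p(x,y,T-t,T-s)$ solves the two decoupled Kolmogorov forward equations of the proposed backward dynamics (one in $t$ via the adjoint of the reverse SDE generator, one in $s$ via the adjoint of the reverse CTMC generator), match the initial condition at $(t,s)=(0,0)$, and conclude by uniqueness. The only cosmetic difference is that you invoke Anderson's and Kelly's reversal identities per modality where the paper carries out the corresponding algebra (e.g.\ $\nabla\cdot(\tilde p\,\nabla\log\tilde p)=\Delta\tilde p$ and the cancellation from $\sum_{\hat y}Q_{T-s}(\hat y,y)=0$) explicitly; the conditional-independence decoupling you highlight is exactly the content of the paper's forward Fokker--Planck lemma.
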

We note that this `backward process' is not the time reversal of the forward process in a strict sense, as it involves introducing multiple time variables into the system. However, this enables us to design versatile conditional and unconditional generative sampling algorithms by choosing different value combinations of times $(t,s)$. We defer the detailed discussion of design choices and sampling algorithms for continuous-discrete multimodal diffusion to \cref{app:cont_disc}.

\noindent \textbf{Multimodal unconditional generation. } To jointly generate clean data distributed as $p_{\text{data}}(x,y)$, we introduce a simulation time variable $u \in [0,T]$, and pick a time parameterization $t = \alpha_1(u)$, $s = \alpha_2(u)$ such as $\alpha_{i}: [0,T] \to [0,T]$ is continuous, non-decreasing with $\alpha_i(0) = 0, \alpha_i(T) = T$ for $i = 1, 2$. With this time-reparameterization for both time variables, we make the backward process a valid, ready-to-simulate process. This enables us to start from pure noise and generate samples from the data distribution unconditionally for both modalities. We can also choose a singular time re-parameterization so that the simulation amounts to a 2-stage approach for multimodal generation, where we first generate one modality and sample the rest conditionally based on the generated sample.

\noindent \textbf{Unimodal conditional generation. } This framework with decoupled time also enables conditional generation of modalties by simulating the associated backward process. We have the following simple but important observations,
\begin{itemize}
\vspace{-3pt}
    \item Given a partially noisy text $Y_{s}$ and its noise level $s$, simulating the $X$ backward dynamics generates a sample $X_{T} \sim p_{\text{data}}(x | Y_{s}, s)$
    \item Given a partially noisy image $X_t$ and its noise level $t$, simulating the $Y$ backward dynamics generates a sample $Y_{T} \sim p_{\text{data}}(y | X_{t}, t)$.
\end{itemize}
Note that the choices of $t$ or $s$ in the conditioning are not restricted. When picking $t$ or $s$ as $T$, this is equivalent to single-modality unconditional generation, as $X_{T}$ or $Y_{T}$ are pure noise. When picking $t$ or $s$ as $0$, this is equivalent to conditional generation, as $X_0$ or $Y_0$ are clean data samples. More interestingly, when picking $0< s, t < T$ as conditions, we generate samples based on partially noised conditions. This gives rise to the following new guidance mechanism for enhancing generation quality.

\begin{figure}[!t]
    \centering
    \includegraphics[width=0.45\textwidth]{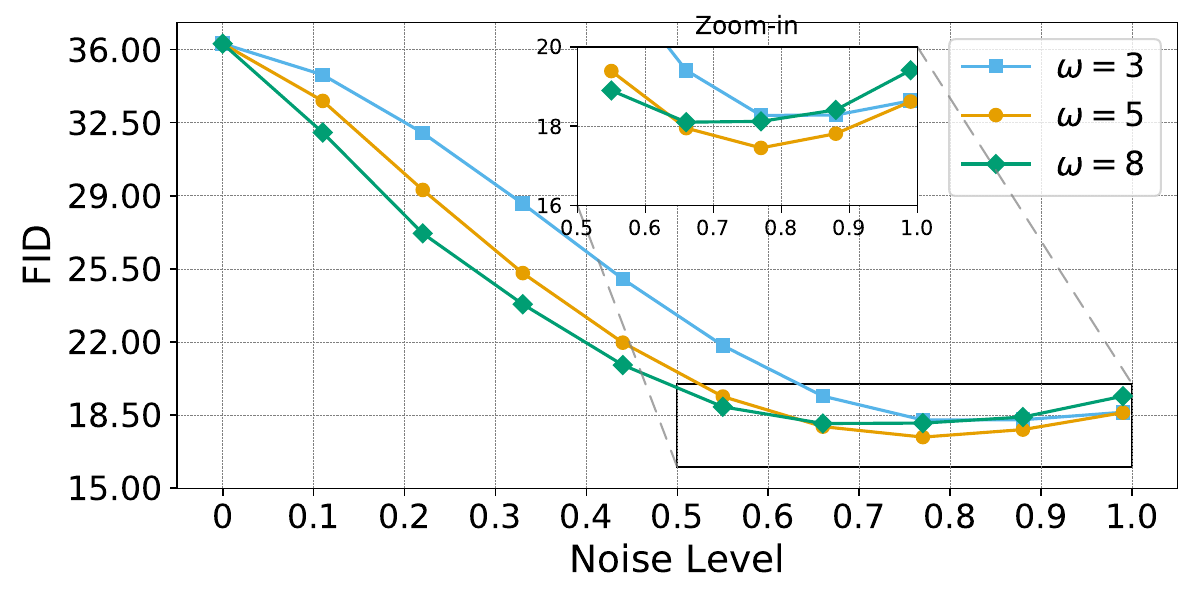}
    \vspace{-10pt}
\caption{Performance of noisy guidance on MS-COCO FID-$10$K. We note that using partially noised conditions results in a better performance. A guidance interval of $t\in [0.3,0.8]$ was used.}
\label{fig:noisy-guidance}
\vspace{-2em}
\end{figure}
\begin{table*}[!t]
    \centering
    \vspace{-1.2em}
    \caption{Results on the \textbf{text to image conditional generation} on MS-COCO. We mark the extra encoders leveraged by each model with the corresponding sizes and types. SR: super resolution, TE: text encoder, VAE: variational autoencoder, VE: visual encoder, VQ-GAN: Vector Quantized GAN, VQ-VAE: vector-quantized variational autoencoder.}\label{tab:fid_ms}
    \resizebox{\linewidth}{!}{
    \begin{tabular}{lcccc}
    \toprule
        Model & FID & Number of Images & \#Params & Extra Encoders\\
    \midrule
    Models for Text-to-Image generation only\\ 
    \midrule
        \quad DALL-E 2~\citep{ramesh2022hierarchical} & 10.39 & 650M & 6.5B & 123M (TE) + 700M (SR) \\
        \quad Imagen~\citep{saharia2022photorealistic} & 7.27 & 860M & 3B & 4.6B (TE) + 600M (SR) \\
        \quad Stable Diffusion \citep{rombach2022high} & 12.63 & 400M & 1.45B &  123M (TE) + 83M (VAE) \\
        \quad PixArt-$\alpha$ XL/2 \citep{chen2023pixart} & 7.32 & 25M & 600M & 123M (TE) + 83M (VAE) \\
        \quad MMDiT-improved \citep{ifriqi2024improved} & 6.79 & 12M &  600M & 123M (TE) + 83M (VAE)  \\
    \midrule
    Models for multimodal generation and understanding\\
    \midrule
        \quad Show-o \citep{xie2024show} & 9.24 & 35M & 1.3B &  115M (VE) + 307M (VQ-VAE) \\
        \quad Transfusion \citep{zhou2024transfusion} & 6.78 & 692M & 7B  & 86M (VAE) \\
        \quad Chameleon \citep{team2024chameleon} & 26.7 & 600M & 7B  & 307M (VQ-GAN)\\
        \quad JetFormer \citep{tschannen2024jetformer} & 20.86 & 1B & 2.75B &  ---\\
    \midrule
    Models for multimodal generation only \\
    \midrule
        \quad Versatile Diffusion \citep{xu2023versatile} & 11.10 & 400M & 1.45B &  123M (TE) + 83M (VAE) + 110M (TE) \\
        \quad UniD3 \citep{hu2022unified} & 25.11 & 592K & 600M & 123M (TE) + 307M (VQ-GAN) \\
    \midrule
        \quad Our model & 16.16 & 12M & 481M  & 83M (VAE) \\
    \bottomrule
    \end{tabular}}
    \vspace{-1em}
\label{tab:fid}
\end{table*}

\subsection{Noisy Guidance} 
Guidance techniques have been a core component in modern diffusion models for improving generation quality \citep[e.g.,][]{dhariwal2021diffusion, ho2021classifierfree, kynkaanniemi2024applying, li2024self}. For continuous diffusion models, with strength $\omega$, the classifier-free guidance is obtained by interpolating the unconditional and conditional score functions,
\begin{align}
\label{eq:cfg}
    \omega s_{\theta}(x_t, t, c) + (1 - \omega) s_{\theta}(x_t, t, \emptyset)
\end{align}
One perspective to understand the effectiveness of guidance methods is to view the unconditional score function as a conditional model with a fully-noised condition input, and the interpolation effectively serves as a correction of the conditional scores. However, the unconditional score might not be the best choice of correctors, as it causes an excessive trade-off between fidelity and diversity, resulting in a significant loss in the latter \citep{karras2024guiding}. This raises an interesting question about whether CFG can be improved by finding a better alternative to the unconditional score in \cref{eq:cfg}. In fact, within our framework, we notice that
\begin{align*}
s_{\theta}^{X}(x_t, y_s, t, s) \approx \nabla_{x} \log p_{t,s}(x_t, y_s) = \nabla_{x} \log p(x_t, t|y_s, s)
\end{align*}
The last equality results from Bayes' theorem, and it shows that our model in fact learns conditional scores \textbf{at all noise levels}. Leveraging this fact, we propose a new form of guidance named \textbf{noisy guidance}, where the unconditional score in \eqref{eq:cfg} is replaced with a class of conditional models with conditions noised to different levels:
\begin{align} \label{eq:noisy-guidance}
   &\omega s_\theta(x_t, y_s,t, s) + (1-\omega) s_\theta(x_t, y_\sigma, t, \sigma)  & \sigma > s
\end{align}
We note that the noisy guidance framework is a more general one, as it recovers the vanilla setting of CFG when $s = 0$ and $ \sigma = T$. 
The scenario of conditional generation corresponds to $s = 0$ (a clean condition $y_0$ is given), and we choose $T > \sigma > 0$ to improve generation quality with a \textbf{partially conditioned} guiding model. More interestingly, noisy guidance can even be applied to \textbf{unconditional generation} in an unsupervised way where $s > 0$. In this case, while $s$ is changing (since we are also generating $y_s$), we can still apply guidance in this process by adaptively picking $\sigma$ as long as $\sigma > s$. These observations indicate the power and robustness of noisy guidance as a by-product of our proposed multimodal diffusion model learning framework.

\begin{table*}[!t]
\begin{center}
\vspace{-1.2em}
\caption{Performance on the \textbf{Trend} metric in percentage (\%). Higher values indicate better performance. Best performance in \textbf{bold}. Second best in \underline{underline}.}
\label{tab:trend}
\scalebox{0.85}{
\begin{tabular}{@{}cc|cccccc@{}}
\toprule

Methods &\#Parameters & Adult & Default & Shoppers & Magic & Beijing & News \\ \midrule
GOGGLE \citep{liu2023goggle} &$\sim$ 5.6M &54.71 &78.06 &76.10 &90.53 &54.06 &76.81 \\
STaSy \citep{kim2022stasy} &$\sim$ 10.3M &85.49\scriptsize{$\pm$0.25} &94.04\scriptsize{$\pm$0.26} &91.51\scriptsize{$\pm$0.15} &93.39\scriptsize{$\pm$0.53} &92.00\scriptsize{$\pm$0.10} &96.93\scriptsize{$\pm$0.04} \\
CoDi \citep{lee2023codi} &$\sim$ 25.0M &77.51\scriptsize{$\pm$0.08} &31.59\scriptsize{$\pm$0.05} &82.22\scriptsize{$\pm$0.11} &93.47\scriptsize{$\pm$0.25} &92.93\scriptsize{$\pm$0.15} &88.90\scriptsize{$\pm$0.01} \\
TabDDPM \citep{kotelnikov2023tabddpm} &$\sim$ 11.7M &96.99\scriptsize{$\pm$0.25} &95.11\scriptsize{$\pm$0.10} &93.39\scriptsize{$\pm$0.16} &98.30\scriptsize{$\pm$0.22} &97.20\scriptsize{$\pm$0.09} &86.84\scriptsize{$\pm$0.11} \\ 
T{\small AB}S{\small YN}  \citep{zhang2023mixed} &$\sim$ 10.7M  &\underline{98.46}\scriptsize{$\pm$0.27} &\textbf{97.95}\scriptsize{$\pm$0.12} &\underline{97.93}\scriptsize{$\pm$0.21} &\underline{98.94}\scriptsize{$\pm$0.31} &\textbf{97.76}\scriptsize{$\pm$0.28} &\underline{98.56}\scriptsize{$\pm$0.03}  \\
T{\small AB}S{\small YN}  (reproduced) &$\sim$ 10.7M &98.29\scriptsize{$\pm$0.22} &95.25 \scriptsize{$\pm$0.51} &97.82\scriptsize{$\pm$0.14} &\textbf{99.16}\scriptsize{$\pm$0.16} &94.86\scriptsize{$\pm$0.34} &98.52\scriptsize{$\pm$0.09} \\
\midrule
Our model &$\sim$ \textbf{64K} &\textbf{98.75}\scriptsize{$\pm$0.09} &\underline{96.00}\scriptsize{$\pm$1.23} &\textbf{98.24}\scriptsize{$\pm$0.13} & 98.85\scriptsize{$\pm$0.42} &\underline{97.42}\scriptsize{$\pm$0.11} &\textbf{98.57}\scriptsize{$\pm$0.16}
\\ \bottomrule
\end{tabular}
}
\end{center}
\vspace{-0.5cm}

\end{table*}

\begin{table*}[!t]
\caption{Performance on the \textbf{MLE} metric. Higher values in AUC and lower values in RMSE indicate better testing performance. Best performance in \textbf{bold}. Second best in \underline{underline}.}
\label{tab:MLE}
\begin{center}
\scalebox{0.85}{
\begin{tabular}{@{}cc|cccccc@{}}
\toprule

Methods &\#Parameters & Adult & Default & Shoppers & Magic & Beijing & News \\ 
& & (AUC$\uparrow$) & (AUC$\uparrow$) & (AUC$\uparrow$) & (AUC$\uparrow$) & (RMSE$\downarrow$) & (RMSE$\downarrow$) \\ \midrule
GOGGLE \citep{liu2023goggle} &$\sim$ 5.6M &.778\scriptsize{$\pm$0.012} &.584\scriptsize{$\pm$0.005} &.658\scriptsize{$\pm$0.052} &.654\scriptsize{$\pm$0.024} &1.090\scriptsize{$\pm$0.025} &.877\scriptsize{$\pm$0.002} \\
STaSy \citep{kim2022stasy} &$\sim$ 10.3M &.906\scriptsize{$\pm$0.001} &.752\scriptsize{$\pm$0.006} &.914\scriptsize{$\pm$0.005} &.934\scriptsize{$\pm$0.003} &.656\scriptsize{$\pm$0.014} &.871\scriptsize{$\pm$0.002} \\
CoDi \citep{lee2023codi} &$\sim$ 25.0M &.871\scriptsize{$\pm$0.006} &.525\scriptsize{$\pm$0.006} &.865\scriptsize{$\pm$0.006} &.932\scriptsize{$\pm$0.003} &.818\scriptsize{$\pm$0.021} &1.21\scriptsize{$\pm$0.005} \\
TabDDPM \citep{kotelnikov2023tabddpm} &$\sim$ 11.7M &.907\scriptsize{$\pm$0.001} &.758\scriptsize{$\pm$0.004} &.918\scriptsize{$\pm$0.005} &.935\scriptsize{$\pm$0.003} &.592\scriptsize{$\pm$0.011} &4.86\scriptsize{$\pm$3.04} \\
T{\small AB}S{\small YN} \citep{zhang2023mixed}  &$\sim$ 10.7M &\textbf{.915} \scriptsize{$\pm$0.002} &\textbf{.764}\scriptsize{$\pm$0.004} &\underline{.920}\scriptsize{$\pm$0.005} &.938\scriptsize{$\pm$0.002} &\underline{.582}\scriptsize{$\pm$0.008} &\underline{.861}\scriptsize{$\pm$0.027}  \\
T{\small AB}S{\small YN}  (reproduced)  &$\sim$ 10.7M &.910\scriptsize{$\pm$0.001} &.755\scriptsize{$\pm$0.004} &.916\scriptsize{$\pm$0.004} &\underline{.939}\scriptsize{$\pm$0.003} &.655\scriptsize{$\pm$0.012} &\textbf{.851}\scriptsize{$\pm$0.024} \\ 
\midrule
Our model &$\sim$ \textbf{64K} &\textbf{.915}\scriptsize{$\pm$0.001} &\textbf{.764}\scriptsize{$\pm$0.002} &\textbf{.924}\scriptsize{$\pm$0.003} &\textbf{.941}\scriptsize{$\pm$0.002} &\textbf{.543}\scriptsize{$\pm$0.012} &.864\scriptsize{$\pm$0.021}
\\ \bottomrule
\end{tabular}
}
\end{center}
\vspace{-0.5cm}
\end{table*}

To showcase the performance of noisy guidance, we evaluate FID-10K on MS-COCO \citep{coco} using different values of $\sigma \in [0,1]$. We find that partially denoising the caption results in an improved FID, as shown in \cref{fig:noisy-guidance}. The superior performance of noisy guidance is possibly due to using a \textbf{partially corrupted version} of the conditional model as the guiding model. This aligns closely with the idea of Autoguidance proposed in \citet{karras2024guiding}, which utilizes an under-trained, smaller model as the guiding model instead of the unconditional ones. Similar to Autoguidance, noisy guidance seeks to identify and reduce the errors made by the conditional score model by measuring its difference to the partially conditioned one, boosting the generation performance. Finally, we remark that \cref{eq:noisy-guidance} is only a special case applied to continuous diffusion, and a similar idea can be adapted to other modalities, such as discrete diffusion guidance \citep{nisonoff2024unlocking, schiff2024simple}. 

\vspace{-0.8em}
\section{Experiments}
To demonstrate the effectiveness and relevance of our framework for training multimodal diffusion models, we consider two tasks: text-image generation and mixed-type tabular data synthesis, both of which are accomplished using the continuous-discrete Multimodal diffusion discussed in \cref{sec:cont_discrete_diffusion}. The results for text-image generation are presented in \cref{sec:text_image} and the results for tabular data synthesis are presented in \cref{sec:tabular}. We also include results for combining Riemannian and discrete diffusion in \cref{app:riem_discrete} to demonstrate the generality of the framework.
\subsection{Text-Image Generation}
\label{sec:text_image}
\textbf{Architecture } We design a new score network backbone for this task based on the celebrated success of Diffusion Transformer (DiT) \citep{peebles2023scalable} and Multimodal Diffusion Transformer (MMDiT) \citep{esser2024scaling}. We first process the inputted (noisy) images and texts by passing them through an MMDiT with a per-modality unique time conditioning. MMDiT's remarkable strength in modeling cross-modal interaction, as well as allowing independent conditioning for each modality, makes it ideal for the backbone. The tokens then undergo unimodal DiTs for a more refined learning process. We present a comprehensive diagram of the backbone in \cref{fig:architecture} and refer the reader to \cref{app:text_image} for further details.

\noindent \textbf{Datasets} We train on the SAM-LLaVA dataset introduced by \citet{chen2023pixart}. This dataset is constructed by adding captions to the Segment Anything (SAM) \citep{kirillov2023segment} using LLaVA \citep{liu2024visual}, which results in rich and diverse captions. However, it suffers from hallucinations of LLava. For example, many colored images are described as being black and white. Following \cite{chen2023pixart}, we tokenize each caption with a length of $120$ tokens.

\noindent \textbf{Training \& Evaluation} We train our model using a multi-stage training strategy. We kindly refer readers to \cref{app:text_image} for more details. We evaluate FID-$30$K on MS-COCO \citep{coco}. Compared with SAM-LLaVA, MS-COCO comes with much shorter captions. To address this distribution shift in caption length between training and inference time, we draw inspiration from \citep{ifriqi2024improved} and replicate the text to increase the caption size. We also limit the number of tokens to $40$ during this evaluation. We compare our results to other methods in \cref{tab:fid_ms}. In terms of text-to-image (T2I), our methods produce similar performance compared to many other industrial-level models with larger model sizes. Notably, our model is trained on fewer samples, features a significantly smaller backbone, and does not utilize extra foundation models to aid multimodal representation learning. This reflects both the efficiency and the effectiveness of our proposed approach. We also present qualitative samples in \cref{fig:qualitative-samples}. For evaluation of image-to-text (I2T) and joint generation, please kindly see \cref{app:text-image}.

\subsection{Mixed-type Tabular Data Synthesis}
\label{sec:tabular}

\noindent \textbf{Architecture } We devise a score network based on DiT for this task, where both discrete and continuous tabular data are fed into the transformer after simple dimension rescaling. Our design aims at achieving early fusion of both modalities for more efficient learning. See \cref{app:tabular_training} for details.

\noindent \textbf{Datasets } We experiment on $6$ real-world tabular datasets acquired from UCI Machine Learning Repository\footnote{https://archive.ics.uci.edu/}. Every dataset contains columns of numerical or categorical features, associated with binary classification tasks or regression tasks. Detailed descriptions of datasets are in \cref{app:tabular_description}.

\noindent\textbf{Training \& Evaluation } We describe the score model architecture and training settings in \cref{app:tabular_training}. For evaluation, we follow the same setting as \citet{zhang2023mixed}. We evaluate the Trend, Machine Learning Efficiency (MLE), Shape, Precision, and Recall of the generated data. We present results on \textbf{Trend} in \cref{tab:trend} and \textbf{MLE} in \cref{tab:MLE}. For more results on other metrics, please see \cref{app:tabular_eval}. All experiments are repeated $20$ times for robustness.

As evident from \cref{tab:trend} and \cref{tab:MLE}, our model performs either the best or second-best, with a negligible gap, among most datasets. Notably, our model has only $\sim$ 64K parameters, which is \textbf{100 to 200 times smaller} than models used in other methods. The significant reduction in model size stems from the fact that our method operates natively on the state space of mixed-type tabular data, eliminating parameter-heavy encoders like VAE. Our newly designed transformer-based score network leverages the concept of early fusion \citep{team2024chameleon}, which also enhances parameter efficiency. This network learns a joint embedding between modalities starting from the first attention layer and is more parameter-efficient. These results again showcase the efficiency and advantage of our proposed multimodal diffusion model framework.
\vspace{-1em}
\section{More Information on Related Works}

\noindent \textbf{Multimodal Generative Models. } Various works in the literature have attempted joint generation of multimodal data. Many existing methods approach the task by leveraging autoregressive models by first tokenizing multimodal data into discrete tokens and then generating them sequentially \citep{team2024chameleon, xie2024show, zhou2024transfusion, tschannen2024jetformer, lu2024unified, ge2024seed, wu2024janus, wang2024emu3}. Another portion of the algorithms is built on the versatile capability of diffusion/flow-based methods to generate latent representations of multimodal data \citep{lee2023codi, bao2023one, ruan2023mm, hu2022unified, zhang2023mixed, kim2024versatile, chen2024diffusion, li2024omniflow, swerdlow2025unified, hayes2025simulating}. One thing in common among the aforementioned approaches is that they all extensively utilize tokenizers or encoders to produce a unimodal latent space for multimodal data, which are not modular and heavily tailored to specific applications with little theoretical support. In contrast to these works in the literature, we propose a general multimodal diffusion learning framework in this paper, which is flexible for generating data on arbitrary state spaces. This is achieved by minimizing the need for external, modality-specific tokenizers and encoders, while keeping the generation in the native spaces of the targeted data.

\textbf{Decoupled Time Variables. } It's worth noting that the decoupled time design, essential to our proposed framework, has in fact been explored by many application-driven works in the literature, such as UniDiffuser \citep{bao2023all}, MultiFlow \citep{campbell2024generative}, AVDiT \citep{kim2024versatile}, and OmniFlow \citep{li2024omniflow}. While these methods all leverage the multiple time variable design to achieve any-to-any generation, they mostly consider this design as a trick and do not investigate it from an algorithmic perspective. Our work contributes to this literature by deriving the unified training objective and backward generative process in the presence of multiple time variables, providing theoretical justification for the validity of this design. To the best of our knowledge, our work is the first to formalize this idea and generalize it to an arbitrary number of modalities. 

A highly relevant work worth discussing in more detail is UniDiffuser \citep{bao2023one}, which addresses the same text-image joint generation task considered in this paper while also utilizing decoupled time variables. A fundamental difference in algorithm design is that UniDiffuser purely relies on continuous diffusion models in the CLIP latent space, which is shared by both text and images, whereas our proposed continuous-discrete diffusion operates natively on the product space of Euclidean and finite-state spaces. 

\noindent \textbf{Diffusion Models in General State Spaces.} Theoretical results have demonstrated that diffusion models can be generalized to denoising Markov models, a class of generative models constructed based on the notion of Markov processes \citep{benton2022denoising, ren2025unified}. These works considered unimodal diffusions on general state spaces with a single time variable. Our work extends the framework of denoising Markov models by incorporating multimodal diffusion models on the product of different state spaces, as well as multiple time variables. In the literature, there are also MultiFlow \citep{campbell2024generative} and Generator Matching \citep{holderrieth2024generator}, which are multimodal extensions of flow-based methods in native state spaces. Their algorithm construction specifically focused on the task of protein sequence-structure co-generation. In contrast, we present a general recipe for multimodal diffusion models that does not initially target specific tasks, despite empirically validating the proposed framework on two examples: text-image generation and mix-type tabular data synthesis, leveraging our newly proposed Continuous-Discrete Diffusion.
\vspace{-1em}
\section{Conclusions, Limitations and Future Works}
We propose a novel framework for constructing multimodal diffusion models on general state spaces. We experiment on text-image and tabular data generation, and our approach achieves competitive performances with a significantly smaller model size. One limitation of this work is that we didn't explore the possibility of utilizing pretrained unimodal diffusion models as initialization of multimodal diffusion training, which could further boost training efficiency. We leave this as a future direction for investigation.

\section*{Acknowledgements}
KR, YZ and MT are grateful for partial supports by NSF Grant DMS-1847802, Cullen-Peck Scholarship, and Emory-GT AI.Humanity Award. FY is grateful for partial support by UAlbany-IBM research funding. This research was supported in part by the AI Computing Cluster at the University at Albany.

\section*{Impact Statement}
This paper presents work aimed at advancing the field of Machine Learning. There are many potential societal consequences of our work, none of which we feel must be specifically highlighted here.

\bibliography{bibliography}

\begin{thebibliography}{88}
\providecommand{\natexlab}[1]{#1}
\providecommand{\url}[1]{\texttt{#1}}
\expandafter\ifx\csname urlstyle\endcsname\relax
  \providecommand{\doi}[1]{doi: #1}\else
  \providecommand{\doi}{doi: \begingroup \urlstyle{rm}\Url}\fi

\bibitem[Anderson(1982)]{anderson1982reverse}
Anderson, B.~D.
\newblock Reverse-time diffusion equation models.
\newblock \emph{Stochastic Processes and their Applications}, 12\penalty0 (3):\penalty0 313--326, 1982.

\bibitem[Bao et~al.(2023{\natexlab{a}})Bao, Nie, Xue, Cao, Li, Su, and Zhu]{bao2023all}
Bao, F., Nie, S., Xue, K., Cao, Y., Li, C., Su, H., and Zhu, J.
\newblock All are worth words: A vit backbone for diffusion models.
\newblock In \emph{Proceedings of the IEEE/CVF conference on computer vision and pattern recognition}, pp.\  22669--22679, 2023{\natexlab{a}}.

\bibitem[Bao et~al.(2023{\natexlab{b}})Bao, Nie, Xue, Li, Pu, Wang, Yue, Cao, Su, and Zhu]{bao2023one}
Bao, F., Nie, S., Xue, K., Li, C., Pu, S., Wang, Y., Yue, G., Cao, Y., Su, H., and Zhu, J.
\newblock One transformer fits all distributions in multi-modal diffusion at scale.
\newblock In \emph{International Conference on Machine Learning}, pp.\  1692--1717. PMLR, 2023{\natexlab{b}}.

\bibitem[Benton et~al.(2024)Benton, Shi, De~Bortoli, Deligiannidis, and Doucet]{benton2022denoising}
Benton, J., Shi, Y., De~Bortoli, V., Deligiannidis, G., and Doucet, A.
\newblock From denoising diffusions to denoising markov models.
\newblock \emph{Journal of the Royal Statistical Society Series B: Statistical Methodology}, 86\penalty0 (2):\penalty0 286--301, 2024.

\bibitem[Campbell et~al.(2022)Campbell, Benton, De~Bortoli, Rainforth, Deligiannidis, and Doucet]{campbell2022continuous}
Campbell, A., Benton, J., De~Bortoli, V., Rainforth, T., Deligiannidis, G., and Doucet, A.
\newblock A continuous time framework for discrete denoising models.
\newblock \emph{Advances in Neural Information Processing Systems}, 35:\penalty0 28266--28279, 2022.

\bibitem[Campbell et~al.(2024)Campbell, Yim, Barzilay, Rainforth, and Jaakkola]{campbell2024generative}
Campbell, A., Yim, J., Barzilay, R., Rainforth, T., and Jaakkola, T.
\newblock Generative flows on discrete state-spaces: Enabling multimodal flows with applications to protein co-design.
\newblock In \emph{International Conference on Machine Learning}, pp.\  5453--5512. PMLR, 2024.

\bibitem[Chang et~al.(2022)Chang, Zhang, Jiang, Liu, and Freeman]{chang2022maskgit}
Chang, H., Zhang, H., Jiang, L., Liu, C., and Freeman, W.~T.
\newblock Maskgit: Masked generative image transformer.
\newblock In \emph{Proceedings of the IEEE/CVF conference on computer vision and pattern recognition}, pp.\  11315--11325, 2022.

\bibitem[Chen et~al.(2024{\natexlab{a}})Chen, Ding, Sisman, Xu, Xie, Yao, Tran, and Zeng]{chen2024diffusion}
Chen, C., Ding, H., Sisman, B., Xu, Y., Xie, O., Yao, B.~Z., Tran, S.~D., and Zeng, B.
\newblock Diffusion models for multi-modal generative modeling.
\newblock 2024{\natexlab{a}}.
\newblock URL \url{https://arxiv.org/abs/2407.17571}.

\bibitem[Chen et~al.(2023)Chen, Yu, Ge, Yao, Xie, Wu, Wang, Kwok, Luo, Lu, et~al.]{chen2023pixart}
Chen, J., Yu, J., Ge, C., Yao, L., Xie, E., Wu, Y., Wang, Z., Kwok, J., Luo, P., Lu, H., et~al.
\newblock Pixart-alpha: Fast training of diffusion transformer for photorealistic text-to-image synthesis.
\newblock \emph{arXiv preprint arXiv:2310.00426}, 2023.

\bibitem[Chen \& Lipman(2024)Chen and Lipman]{chen2024flow}
Chen, R.~T. and Lipman, Y.
\newblock Flow matching on general geometries.
\newblock In \emph{The Twelfth International Conference on Learning Representations}, 2024.

\bibitem[Chen et~al.(2024{\natexlab{b}})Chen, Goldstein, Hua, Albergo, Boffi, and Vanden-Eijnden]{chen2024probabilistic}
Chen, Y., Goldstein, M., Hua, M., Albergo, M.~S., Boffi, N.~M., and Vanden-Eijnden, E.
\newblock Probabilistic forecasting with stochastic interpolants and f{\"o}llmer processes.
\newblock In \emph{Forty-first International Conference on Machine Learning}, 2024{\natexlab{b}}.

\bibitem[Cheng et~al.(2025)Cheng, Lo, Lee, Miret, and Aspuru-Guzik]{cheng2024stiefel}
Cheng, A., Lo, A., Lee, K. L.~K., Miret, S., and Aspuru-Guzik, A.
\newblock Stiefel flow matching for moment-constrained structure elucidation.
\newblock In \emph{The Thirteenth International Conference on Learning Representations}, 2025.
\newblock URL \url{https://openreview.net/forum?id=84WmbzikPP}.

\bibitem[Chi et~al.(2023)Chi, Xu, Feng, Cousineau, Du, Burchfiel, Tedrake, and Song]{chi2023diffusion}
Chi, C., Xu, Z., Feng, S., Cousineau, E., Du, Y., Burchfiel, B., Tedrake, R., and Song, S.
\newblock Diffusion policy: Visuomotor policy learning via action diffusion.
\newblock \emph{The International Journal of Robotics Research}, pp.\  02783649241273668, 2023.

\bibitem[De~Bortoli et~al.(2022)De~Bortoli, Mathieu, Hutchinson, Thornton, Teh, and Doucet]{de2022riemannian}
De~Bortoli, V., Mathieu, E., Hutchinson, M., Thornton, J., Teh, Y.~W., and Doucet, A.
\newblock Riemannian score-based generative modelling.
\newblock \emph{Advances in Neural Information Processing Systems}, 35:\penalty0 2406--2422, 2022.

\bibitem[Dhariwal \& Nichol(2021)Dhariwal and Nichol]{dhariwal2021diffusion}
Dhariwal, P. and Nichol, A.
\newblock Diffusion models beat gans on image synthesis.
\newblock \emph{Advances in neural information processing systems}, 34:\penalty0 8780--8794, 2021.

\bibitem[Esser et~al.(2021)Esser, Rombach, and Ommer]{esser2021taming}
Esser, P., Rombach, R., and Ommer, B.
\newblock Taming transformers for high-resolution image synthesis.
\newblock In \emph{Proceedings of the IEEE/CVF conference on computer vision and pattern recognition}, pp.\  12873--12883, 2021.

\bibitem[Esser et~al.(2024)Esser, Kulal, Blattmann, Entezari, M{\"u}ller, Saini, Levi, Lorenz, Sauer, Boesel, et~al.]{esser2024scaling}
Esser, P., Kulal, S., Blattmann, A., Entezari, R., M{\"u}ller, J., Saini, H., Levi, Y., Lorenz, D., Sauer, A., Boesel, F., et~al.
\newblock Scaling rectified flow transformers for high-resolution image synthesis.
\newblock In \emph{Forty-first International Conference on Machine Learning}, 2024.

\bibitem[Gat et~al.(2024)Gat, Remez, Shaul, Kreuk, Chen, Synnaeve, Adi, and Lipman]{gat2024discrete}
Gat, I., Remez, T., Shaul, N., Kreuk, F., Chen, R.~T., Synnaeve, G., Adi, Y., and Lipman, Y.
\newblock Discrete flow matching.
\newblock \emph{arXiv preprint arXiv:2407.15595}, 2024.

\bibitem[Ge et~al.(2024)Ge, Zhao, Zhu, Ge, Yi, Song, Li, Ding, and Shan]{ge2024seed}
Ge, Y., Zhao, S., Zhu, J., Ge, Y., Yi, K., Song, L., Li, C., Ding, X., and Shan, Y.
\newblock Seed-x: Multimodal models with unified multi-granularity comprehension and generation.
\newblock \emph{arXiv preprint arXiv:2404.14396}, 2024.

\bibitem[Hayes et~al.(2025)Hayes, Rao, Akin, Sofroniew, Oktay, Lin, Verkuil, Tran, Deaton, Wiggert, et~al.]{hayes2025simulating}
Hayes, T., Rao, R., Akin, H., Sofroniew, N.~J., Oktay, D., Lin, Z., Verkuil, R., Tran, V.~Q., Deaton, J., Wiggert, M., et~al.
\newblock Simulating 500 million years of evolution with a language model.
\newblock \emph{Science}, pp.\  eads0018, 2025.

\bibitem[Ho \& Salimans(2021)Ho and Salimans]{ho2021classifierfree}
Ho, J. and Salimans, T.
\newblock Classifier-free diffusion guidance.
\newblock In \emph{NeurIPS Workshop on Deep Generative Models and Downstream Applications}, 2021.
\newblock URL \url{https://openreview.net/forum?id=qw8AKxfYbI}.

\bibitem[Ho et~al.(2020)Ho, Jain, and Abbeel]{ho2020denoising}
Ho, J., Jain, A., and Abbeel, P.
\newblock Denoising diffusion probabilistic models.
\newblock \emph{Advances in neural information processing systems}, 33:\penalty0 6840--6851, 2020.

\bibitem[Ho et~al.(2022)Ho, Salimans, Gritsenko, Chan, Norouzi, and Fleet]{ho2022video}
Ho, J., Salimans, T., Gritsenko, A., Chan, W., Norouzi, M., and Fleet, D.~J.
\newblock Video diffusion models.
\newblock \emph{Advances in Neural Information Processing Systems}, 35:\penalty0 8633--8646, 2022.

\bibitem[Holderrieth et~al.(2024)Holderrieth, Havasi, Yim, Shaul, Gat, Jaakkola, Karrer, Chen, and Lipman]{holderrieth2024generator}
Holderrieth, P., Havasi, M., Yim, J., Shaul, N., Gat, I., Jaakkola, T., Karrer, B., Chen, R.~T., and Lipman, Y.
\newblock Generator matching: Generative modeling with arbitrary markov processes.
\newblock \emph{arXiv preprint arXiv:2410.20587}, 2024.

\bibitem[Hoogeboom et~al.(2024)Hoogeboom, Mensink, Heek, Lamerigts, Gao, and Salimans]{hoogeboom2024simpler}
Hoogeboom, E., Mensink, T., Heek, J., Lamerigts, K., Gao, R., and Salimans, T.
\newblock Simpler diffusion (sid2): 1.5 fid on imagenet512 with pixel-space diffusion.
\newblock \emph{arXiv preprint arXiv:2410.19324}, 2024.

\bibitem[Hu et~al.(2022)Hu, Zheng, Zheng, Cham, Wang, Yang, Tao, and Suganthan]{hu2022unified}
Hu, M., Zheng, C., Zheng, H., Cham, T.-J., Wang, C., Yang, Z., Tao, D., and Suganthan, P.~N.
\newblock Unified discrete diffusion for simultaneous vision-language generation.
\newblock \emph{arXiv preprint arXiv:2211.14842}, 2022.

\bibitem[Huang et~al.(2022)Huang, Aghajohari, Bose, Panangaden, and Courville]{huang2022riemannian}
Huang, C.-W., Aghajohari, M., Bose, J., Panangaden, P., and Courville, A.~C.
\newblock Riemannian diffusion models.
\newblock \emph{Advances in Neural Information Processing Systems}, 35:\penalty0 2750--2761, 2022.

\bibitem[Hyv{\"a}rinen \& Dayan(2005)Hyv{\"a}rinen and Dayan]{hyvarinen2005estimation}
Hyv{\"a}rinen, A. and Dayan, P.
\newblock Estimation of non-normalized statistical models by score matching.
\newblock \emph{Journal of Machine Learning Research}, 6\penalty0 (4), 2005.

\bibitem[Ifriqi et~al.(2024)Ifriqi, Astolfi, Hall, Askari-Hemmat, Benchetrit, Havasi, Muckley, Alahari, Romero-Soriano, Verbeek, et~al.]{ifriqi2024improved}
Ifriqi, T.~B., Astolfi, P., Hall, M., Askari-Hemmat, R., Benchetrit, Y., Havasi, M., Muckley, M., Alahari, K., Romero-Soriano, A., Verbeek, J., et~al.
\newblock On improved conditioning mechanisms and pre-training strategies for diffusion models.
\newblock In \emph{NeurIPS 2024-Thirty-eighth Conference on Neural Information Processing Systems}, 2024.

\bibitem[Jin et~al.(2025)Jin, Sun, Li, Xu, Jiang, Zhuang, Huang, Song, Mu, and Lin]{jin2024pyramidal}
Jin, Y., Sun, Z., Li, N., Xu, K., Jiang, H., Zhuang, N., Huang, Q., Song, Y., Mu, Y., and Lin, Z.
\newblock Pyramidal flow matching for efficient video generative modeling.
\newblock In \emph{The Thirteenth International Conference on Learning Representations}, 2025.
\newblock URL \url{https://openreview.net/forum?id=66NzcRQuOq}.

\bibitem[Karras et~al.(2022)Karras, Aittala, Aila, and Laine]{karras2022elucidating}
Karras, T., Aittala, M., Aila, T., and Laine, S.
\newblock Elucidating the design space of diffusion-based generative models.
\newblock \emph{Advances in neural information processing systems}, 35:\penalty0 26565--26577, 2022.

\bibitem[Karras et~al.(2024{\natexlab{a}})Karras, Aittala, Kynk{\"a}{\"a}nniemi, Lehtinen, Aila, and Laine]{karras2024guiding}
Karras, T., Aittala, M., Kynk{\"a}{\"a}nniemi, T., Lehtinen, J., Aila, T., and Laine, S.
\newblock Guiding a diffusion model with a bad version of itself.
\newblock \emph{arXiv preprint arXiv:2406.02507}, 2024{\natexlab{a}}.

\bibitem[Karras et~al.(2024{\natexlab{b}})Karras, Aittala, Lehtinen, Hellsten, Aila, and Laine]{karras2024analyzing}
Karras, T., Aittala, M., Lehtinen, J., Hellsten, J., Aila, T., and Laine, S.
\newblock Analyzing and improving the training dynamics of diffusion models.
\newblock In \emph{Proceedings of the IEEE/CVF Conference on Computer Vision and Pattern Recognition}, pp.\  24174--24184, 2024{\natexlab{b}}.

\bibitem[Kelly(2011)]{kelly2011reversibility}
Kelly, F.~P.
\newblock \emph{Reversibility and stochastic networks}.
\newblock Cambridge University Press, 2011.

\bibitem[Kim et~al.(2024)Kim, Martinez, Su, Jou, Lezama, Gupta, Yu, Jiang, Jansen, Walker, et~al.]{kim2024versatile}
Kim, G., Martinez, A., Su, Y.-C., Jou, B., Lezama, J., Gupta, A., Yu, L., Jiang, L., Jansen, A., Walker, J.~C., et~al.
\newblock A versatile diffusion transformer with mixture of noise levels for audiovisual generation.
\newblock In \emph{The Thirty-eighth Annual Conference on Neural Information Processing Systems}, 2024.

\bibitem[Kim et~al.(2022)Kim, Lee, and Park]{kim2022stasy}
Kim, J., Lee, C., and Park, N.
\newblock Stasy: Score-based tabular data synthesis.
\newblock \emph{arXiv preprint arXiv:2210.04018}, 2022.

\bibitem[Kingma(2013)]{kingma2013auto}
Kingma, D.~P.
\newblock Auto-encoding variational bayes.
\newblock \emph{arXiv preprint arXiv:1312.6114}, 2013.

\bibitem[Kirillov et~al.(2023)Kirillov, Mintun, Ravi, Mao, Rolland, Gustafson, Xiao, Whitehead, Berg, Lo, et~al.]{kirillov2023segment}
Kirillov, A., Mintun, E., Ravi, N., Mao, H., Rolland, C., Gustafson, L., Xiao, T., Whitehead, S., Berg, A.~C., Lo, W.-Y., et~al.
\newblock Segment anything.
\newblock In \emph{Proceedings of the IEEE/CVF International Conference on Computer Vision}, pp.\  4015--4026, 2023.

\bibitem[Kotelnikov et~al.(2023)Kotelnikov, Baranchuk, Rubachev, and Babenko]{kotelnikov2023tabddpm}
Kotelnikov, A., Baranchuk, D., Rubachev, I., and Babenko, A.
\newblock Tabddpm: Modelling tabular data with diffusion models.
\newblock In \emph{International Conference on Machine Learning}, pp.\  17564--17579. PMLR, 2023.

\bibitem[Kynk{\"a}{\"a}nniemi et~al.(2024)Kynk{\"a}{\"a}nniemi, Aittala, Karras, Laine, Aila, and Lehtinen]{kynkaanniemi2024applying}
Kynk{\"a}{\"a}nniemi, T., Aittala, M., Karras, T., Laine, S., Aila, T., and Lehtinen, J.
\newblock Applying guidance in a limited interval improves sample and distribution quality in diffusion models.
\newblock \emph{arXiv preprint arXiv:2404.07724}, 2024.

\bibitem[Lee et~al.(2023)Lee, Kim, and Park]{lee2023codi}
Lee, C., Kim, J., and Park, N.
\newblock Codi: Co-evolving contrastive diffusion models for mixed-type tabular synthesis.
\newblock In \emph{International Conference on Machine Learning}, pp.\  18940--18956. PMLR, 2023.

\bibitem[Li et~al.(2025)Li, Shitole, Chien, Man, Wang, Sridharan, Zhang, Krishna, and Li]{li2024layerdag}
Li, M., Shitole, V., Chien, E., Man, C., Wang, Z., Sridharan, S., Zhang, Y., Krishna, T., and Li, P.
\newblock Layerdag: A layerwise autoregressive diffusion model for directed acyclic graph generation.
\newblock In \emph{The Thirteenth International Conference on Learning Representations}, 2025.
\newblock URL \url{https://openreview.net/forum?id=kam84eEmub}.

\bibitem[Li et~al.(2024{\natexlab{a}})Li, Kallidromitis, Gokul, Liao, Kato, Kozuka, and Grover]{li2024omniflow}
Li, S., Kallidromitis, K., Gokul, A., Liao, Z., Kato, Y., Kozuka, K., and Grover, A.
\newblock Omniflow: Any-to-any generation with multi-modal rectified flows.
\newblock \emph{arXiv preprint arXiv:2412.01169}, 2024{\natexlab{a}}.

\bibitem[Li et~al.(2024{\natexlab{b}})Li, Luo, Chen, Ma, and Qi]{li2024self}
Li, T., Luo, W., Chen, Z., Ma, L., and Qi, G.-J.
\newblock Self-guidance: Boosting flow and diffusion generation on their own.
\newblock \emph{arXiv preprint arXiv:2412.05827}, 2024{\natexlab{b}}.

\bibitem[Lin et~al.(2014)Lin, Maire, Belongie, Bourdev, Girshick, Hays, Perona, Ramanan, Doll{\'{a}}r, and Zitnick]{coco}
Lin, T., Maire, M., Belongie, S., Bourdev, L., Girshick, R., Hays, J., Perona, P., Ramanan, D., Doll{\'{a}}r, P., and Zitnick, C.
\newblock Microsoft {COCO:} common objects in context.
\newblock In \emph{ECCV}, 2014.

\bibitem[Liu et~al.(2024)Liu, Li, Wu, and Lee]{liu2024visual}
Liu, H., Li, C., Wu, Q., and Lee, Y.~J.
\newblock Visual instruction tuning.
\newblock \emph{Advances in neural information processing systems}, 36, 2024.

\bibitem[Liu et~al.(2023)Liu, Qian, Berrevoets, and van~der Schaar]{liu2023goggle}
Liu, T., Qian, Z., Berrevoets, J., and van~der Schaar, M.
\newblock Goggle: Generative modelling for tabular data by learning relational structure.
\newblock In \emph{The Eleventh International Conference on Learning Representations}, 2023.

\bibitem[Lou et~al.(2024)Lou, Meng, and Ermon]{loudiscrete}
Lou, A., Meng, C., and Ermon, S.
\newblock Discrete diffusion modeling by estimating the ratios of the data distribution.
\newblock \emph{International Conference on Machine Learning}, 2024.

\bibitem[Lu et~al.(2024)Lu, Clark, Lee, Zhang, Khosla, Marten, Hoiem, and Kembhavi]{lu2024unified}
Lu, J., Clark, C., Lee, S., Zhang, Z., Khosla, S., Marten, R., Hoiem, D., and Kembhavi, A.
\newblock Unified-io 2: Scaling autoregressive multimodal models with vision language audio and action.
\newblock In \emph{Proceedings of the IEEE/CVF Conference on Computer Vision and Pattern Recognition}, pp.\  26439--26455, 2024.

\bibitem[Luo et~al.(2024)Luo, Hao, Wei, and Zhang]{luo2024scdiffusion}
Luo, E., Hao, M., Wei, L., and Zhang, X.
\newblock scdiffusion: conditional generation of high-quality single-cell data using diffusion model.
\newblock \emph{Bioinformatics}, 40\penalty0 (9):\penalty0 btae518, 2024.

\bibitem[Lyu(2012)]{lyu2012interpretation}
Lyu, S.
\newblock Interpretation and generalization of score matching.
\newblock \emph{arXiv preprint arXiv:1205.2629}, 2012.

\bibitem[Meta(2024)]{team2024chameleon}
Meta, C.
\newblock Chameleon: Mixed-modal early-fusion foundation models.
\newblock \emph{arXiv preprint arXiv:2405.09818}, 2024.

\bibitem[Nie et~al.(2025{\natexlab{a}})Nie, Zhu, Du, Pang, Liu, Zeng, Lin, and Li]{nie2024scaling}
Nie, S., Zhu, F., Du, C., Pang, T., Liu, Q., Zeng, G., Lin, M., and Li, C.
\newblock Scaling up masked diffusion models on text.
\newblock In \emph{The Thirteenth International Conference on Learning Representations}, 2025{\natexlab{a}}.
\newblock URL \url{https://openreview.net/forum?id=WNvvwK0tut}.

\bibitem[Nie et~al.(2025{\natexlab{b}})Nie, Zhu, You, Zhang, Ou, Hu, Zhou, Lin, Wen, and Li]{nie2025large}
Nie, S., Zhu, F., You, Z., Zhang, X., Ou, J., Hu, J., Zhou, J., Lin, Y., Wen, J.-R., and Li, C.
\newblock Large language diffusion models.
\newblock \emph{arXiv preprint arXiv:2502.09992}, 2025{\natexlab{b}}.

\bibitem[Nisonoff et~al.(2024)Nisonoff, Xiong, Allenspach, and Listgarten]{nisonoff2024unlocking}
Nisonoff, H., Xiong, J., Allenspach, S., and Listgarten, J.
\newblock Unlocking guidance for discrete state-space diffusion and flow models.
\newblock \emph{arXiv preprint arXiv:2406.01572}, 2024.

\bibitem[Ou et~al.(2025)Ou, Nie, Xue, Zhu, Sun, Li, and Li]{ou2024your}
Ou, J., Nie, S., Xue, K., Zhu, F., Sun, J., Li, Z., and Li, C.
\newblock Your absorbing discrete diffusion secretly models the conditional distributions of clean data.
\newblock In \emph{The Thirteenth International Conference on Learning Representations}, 2025.
\newblock URL \url{https://openreview.net/forum?id=sMyXP8Tanm}.

\bibitem[Peebles \& Xie(2023)Peebles and Xie]{peebles2023scalable}
Peebles, W. and Xie, S.
\newblock Scalable diffusion models with transformers.
\newblock In \emph{Proceedings of the IEEE/CVF International Conference on Computer Vision}, pp.\  4195--4205, 2023.

\bibitem[Radford et~al.(2021)Radford, Kim, Hallacy, Ramesh, Goh, Agarwal, Sastry, Askell, Mishkin, Clark, et~al.]{radford2021learning}
Radford, A., Kim, J.~W., Hallacy, C., Ramesh, A., Goh, G., Agarwal, S., Sastry, G., Askell, A., Mishkin, P., Clark, J., et~al.
\newblock Learning transferable visual models from natural language supervision.
\newblock In \emph{International conference on machine learning}, pp.\  8748--8763. PMLR, 2021.

\bibitem[Ramesh et~al.(2022)Ramesh, Dhariwal, Nichol, Chu, and Chen]{ramesh2022hierarchical}
Ramesh, A., Dhariwal, P., Nichol, A., Chu, C., and Chen, M.
\newblock Hierarchical text-conditional image generation with clip latents.
\newblock \emph{arXiv preprint arXiv:2204.06125}, 1\penalty0 (2):\penalty0 3, 2022.

\bibitem[Ren et~al.(2024)Ren, Chen, Rotskoff, and Ying]{ren2024discrete}
Ren, Y., Chen, H., Rotskoff, G.~M., and Ying, L.
\newblock How discrete and continuous diffusion meet: Comprehensive analysis of discrete diffusion models via a stochastic integral framework.
\newblock \emph{arXiv preprint arXiv:2410.03601}, 2024.

\bibitem[Ren et~al.(2025{\natexlab{a}})Ren, Chen, Zhu, Guo, Chen, Rotskoff, Tao, and Ying]{ren2025fast}
Ren, Y., Chen, H., Zhu, Y., Guo, W., Chen, Y., Rotskoff, G.~M., Tao, M., and Ying, L.
\newblock Fast solvers for discrete diffusion models: Theory and applications of high-order algorithms.
\newblock \emph{arXiv preprint arXiv:2502.00234}, 2025{\natexlab{a}}.

\bibitem[Ren et~al.(2025{\natexlab{b}})Ren, Rotskoff, and Ying]{ren2025unified}
Ren, Y., Rotskoff, G.~M., and Ying, L.
\newblock A unified approach to analysis and design of denoising markov models.
\newblock \emph{arXiv preprint arXiv:2504.01938}, 2025{\natexlab{b}}.

\bibitem[Rojas et~al.(2025)Rojas, Tian, Tao, Nevmyvaka, and Deng]{rojas2025variational}
Rojas, K., Tian, Y., Tao, M., Nevmyvaka, Y., and Deng, W.
\newblock Variational schrödinger momentum diffusion.
\newblock \emph{arXiv preprint arXiv:2501.16675}, 2025.

\bibitem[Rombach et~al.(2022)Rombach, Blattmann, Lorenz, Esser, and Ommer]{rombach2022high}
Rombach, R., Blattmann, A., Lorenz, D., Esser, P., and Ommer, B.
\newblock High-resolution image synthesis with latent diffusion models.
\newblock In \emph{Proceedings of the IEEE/CVF conference on computer vision and pattern recognition}, pp.\  10684--10695, 2022.

\bibitem[Ruan et~al.(2023)Ruan, Ma, Yang, He, Liu, Fu, Yuan, Jin, and Guo]{ruan2023mm}
Ruan, L., Ma, Y., Yang, H., He, H., Liu, B., Fu, J., Yuan, N.~J., Jin, Q., and Guo, B.
\newblock Mm-diffusion: Learning multi-modal diffusion models for joint audio and video generation.
\newblock In \emph{Proceedings of the IEEE/CVF Conference on Computer Vision and Pattern Recognition}, pp.\  10219--10228, 2023.

\bibitem[Saharia et~al.(2022)Saharia, Chan, Saxena, Li, Whang, Denton, Ghasemipour, Gontijo~Lopes, Karagol~Ayan, Salimans, et~al.]{saharia2022photorealistic}
Saharia, C., Chan, W., Saxena, S., Li, L., Whang, J., Denton, E.~L., Ghasemipour, K., Gontijo~Lopes, R., Karagol~Ayan, B., Salimans, T., et~al.
\newblock Photorealistic text-to-image diffusion models with deep language understanding.
\newblock \emph{Advances in neural information processing systems}, 35:\penalty0 36479--36494, 2022.

\bibitem[Sahoo et~al.(2024)Sahoo, Arriola, Schiff, Gokaslan, Marroquin, Chiu, Rush, and Kuleshov]{sahoo2024simple}
Sahoo, S., Arriola, M., Schiff, Y., Gokaslan, A., Marroquin, E., Chiu, J., Rush, A., and Kuleshov, V.
\newblock Simple and effective masked diffusion language models.
\newblock \emph{Advances in Neural Information Processing Systems}, 37:\penalty0 130136--130184, 2024.

\bibitem[Schiff et~al.(2024)Schiff, Sahoo, Phung, Wang, Boshar, Dalla-torre, de~Almeida, Rush, Pierrot, and Kuleshov]{schiff2024simple}
Schiff, Y., Sahoo, S.~S., Phung, H., Wang, G., Boshar, S., Dalla-torre, H., de~Almeida, B.~P., Rush, A., Pierrot, T., and Kuleshov, V.
\newblock Simple guidance mechanisms for discrete diffusion models.
\newblock \emph{arXiv preprint arXiv:2412.10193}, 2024.

\bibitem[Shi et~al.(2024)Shi, Han, Wang, Doucet, and Titsias]{shi2024simplified}
Shi, J., Han, K., Wang, Z., Doucet, A., and Titsias, M.
\newblock Simplified and generalized masked diffusion for discrete data.
\newblock \emph{Advances in neural information processing systems}, 37:\penalty0 103131--103167, 2024.

\bibitem[Sohl-Dickstein et~al.(2015)Sohl-Dickstein, Weiss, Maheswaranathan, and Ganguli]{sohl2015deep}
Sohl-Dickstein, J., Weiss, E., Maheswaranathan, N., and Ganguli, S.
\newblock Deep unsupervised learning using nonequilibrium thermodynamics.
\newblock In \emph{International conference on machine learning}, pp.\  2256--2265. PMLR, 2015.

\bibitem[Song et~al.(2020)Song, Sohl-Dickstein, Kingma, Kumar, Ermon, and Poole]{song2020score}
Song, Y., Sohl-Dickstein, J., Kingma, D.~P., Kumar, A., Ermon, S., and Poole, B.
\newblock Score-based generative modeling through stochastic differential equations.
\newblock \emph{arXiv preprint arXiv:2011.13456}, 2020.

\bibitem[Sriram et~al.(2024)Sriram, Miller, Chen, and Wood]{sriram2024flowllm}
Sriram, A., Miller, B.~K., Chen, R.~T., and Wood, B.~M.
\newblock Flowllm: Flow matching for material generation with large language models as base distributions.
\newblock In \emph{The Thirty-eighth Annual Conference on Neural Information Processing Systems}, 2024.

\bibitem[Swerdlow et~al.(2025)Swerdlow, Prabhudesai, Gandhi, Pathak, and Fragkiadaki]{swerdlow2025unified}
Swerdlow, A., Prabhudesai, M., Gandhi, S., Pathak, D., and Fragkiadaki, K.
\newblock Unified multimodal discrete diffusion.
\newblock \emph{arXiv preprint arXiv:2503.20853}, 2025.

\bibitem[Tschannen et~al.(2024)Tschannen, Pinto, and Kolesnikov]{tschannen2024jetformer}
Tschannen, M., Pinto, A.~S., and Kolesnikov, A.
\newblock Jetformer: An autoregressive generative model of raw images and text.
\newblock \emph{arXiv preprint arXiv:2411.19722}, 2024.

\bibitem[Van Den~Oord et~al.(2017)Van Den~Oord, Vinyals, et~al.]{van2017neural}
Van Den~Oord, A., Vinyals, O., et~al.
\newblock Neural discrete representation learning.
\newblock \emph{Advances in neural information processing systems}, 30, 2017.

\bibitem[Vincent(2011)]{vincent2011connection}
Vincent, P.
\newblock A connection between score matching and denoising autoencoders.
\newblock \emph{Neural computation}, 23\penalty0 (7):\penalty0 1661--1674, 2011.

\bibitem[Wang et~al.(2024{\natexlab{a}})Wang, Zhang, Luo, Sun, Cui, Wang, Zhang, Wang, Li, Yu, et~al.]{wang2024emu3}
Wang, X., Zhang, X., Luo, Z., Sun, Q., Cui, Y., Wang, J., Zhang, F., Wang, Y., Li, Z., Yu, Q., et~al.
\newblock Emu3: Next-token prediction is all you need.
\newblock \emph{arXiv preprint arXiv:2409.18869}, 2024{\natexlab{a}}.

\bibitem[Wang et~al.(2024{\natexlab{b}})Wang, Zheng, Ye, Xue, Huang, and Gu]{wang2024diffusion}
Wang, X., Zheng, Z., Ye, F., Xue, D., Huang, S., and Gu, Q.
\newblock Diffusion language models are versatile protein learners.
\newblock In \emph{International Conference on Machine Learning}, pp.\  52309--52333. PMLR, 2024{\natexlab{b}}.

\bibitem[Watson et~al.(2023)Watson, Juergens, Bennett, Trippe, Yim, Eisenach, Ahern, Borst, Ragotte, Milles, et~al.]{watson2023novo}
Watson, J.~L., Juergens, D., Bennett, N.~R., Trippe, B.~L., Yim, J., Eisenach, H.~E., Ahern, W., Borst, A.~J., Ragotte, R.~J., Milles, L.~F., et~al.
\newblock De novo design of protein structure and function with rfdiffusion.
\newblock \emph{Nature}, 620\penalty0 (7976):\penalty0 1089--1100, 2023.

\bibitem[Wu et~al.(2024)Wu, Chen, Wu, Ma, Liu, Pan, Liu, Xie, Yu, Ruan, et~al.]{wu2024janus}
Wu, C., Chen, X., Wu, Z., Ma, Y., Liu, X., Pan, Z., Liu, W., Xie, Z., Yu, X., Ruan, C., et~al.
\newblock Janus: Decoupling visual encoding for unified multimodal understanding and generation.
\newblock \emph{arXiv preprint arXiv:2410.13848}, 2024.

\bibitem[Xie et~al.(2024)Xie, Mao, Bai, Zhang, Wang, Lin, Gu, Chen, Yang, and Shou]{xie2024show}
Xie, J., Mao, W., Bai, Z., Zhang, D.~J., Wang, W., Lin, K.~Q., Gu, Y., Chen, Z., Yang, Z., and Shou, M.~Z.
\newblock Show-o: One single transformer to unify multimodal understanding and generation.
\newblock \emph{arXiv preprint arXiv:2408.12528}, 2024.

\bibitem[Xu et~al.(2023)Xu, Wang, Zhang, Wang, and Shi]{xu2023versatile}
Xu, X., Wang, Z., Zhang, G., Wang, K., and Shi, H.
\newblock Versatile diffusion: Text, images and variations all in one diffusion model.
\newblock In \emph{Proceedings of the IEEE/CVF International Conference on Computer Vision}, pp.\  7754--7765, 2023.

\bibitem[Xu et~al.(2024)Xu, Qiu, Chen, Chen, Fan, Pan, Zeng, Das, and Tong]{xu2024discrete}
Xu, Z., Qiu, R., Chen, Y., Chen, H., Fan, X., Pan, M., Zeng, Z., Das, M., and Tong, H.
\newblock Discrete-state continuous-time diffusion for graph generation.
\newblock In \emph{The Thirty-eighth Annual Conference on Neural Information Processing Systems}, 2024.

\bibitem[Yim et~al.(2023)Yim, Trippe, De~Bortoli, Mathieu, Doucet, Barzilay, and Jaakkola]{yim2023se}
Yim, J., Trippe, B.~L., De~Bortoli, V., Mathieu, E., Doucet, A., Barzilay, R., and Jaakkola, T.
\newblock Se (3) diffusion model with application to protein backbone generation.
\newblock In \emph{International Conference on Machine Learning}, pp.\  40001--40039. PMLR, 2023.

\bibitem[Zhang et~al.(2023)Zhang, Zhang, Srinivasan, Shen, Qin, Faloutsos, Rangwala, and Karypis]{zhang2023mixed}
Zhang, H., Zhang, J., Srinivasan, B., Shen, Z., Qin, X., Faloutsos, C., Rangwala, H., and Karypis, G.
\newblock Mixed-type tabular data synthesis with score-based diffusion in latent space.
\newblock \emph{arXiv preprint arXiv:2310.09656}, 2023.

\bibitem[Zhou et~al.(2024)Zhou, Yu, Babu, Tirumala, Yasunaga, Shamis, Kahn, Ma, Zettlemoyer, and Levy]{zhou2024transfusion}
Zhou, C., Yu, L., Babu, A., Tirumala, K., Yasunaga, M., Shamis, L., Kahn, J., Ma, X., Zettlemoyer, L., and Levy, O.
\newblock Transfusion: Predict the next token and diffuse images with one multi-modal model.
\newblock \emph{arXiv preprint arXiv:2408.11039}, 2024.

\bibitem[Zhu et~al.(2025{\natexlab{a}})Zhu, Zhu, Tao, and Qiu]{zhu2025diffusion}
Zhu, S., Zhu, Y., Tao, M., and Qiu, P.
\newblock Diffusion generative modeling for spatially resolved gene expression inference from histology images.
\newblock In \emph{The Thirteenth International Conference on Learning Representations}, 2025{\natexlab{a}}.
\newblock URL \url{https://openreview.net/forum?id=FtjLUHyZAO}.

\bibitem[Zhu et~al.(2025{\natexlab{b}})Zhu, Chen, Kong, Theodorou, and Tao]{zhu2024trivialized}
Zhu, Y., Chen, T., Kong, L., Theodorou, E., and Tao, M.
\newblock Trivialized momentum facilitates diffusion generative modeling on lie groups.
\newblock In \emph{The Thirteenth International Conference on Learning Representations}, 2025{\natexlab{b}}.
\newblock URL \url{https://openreview.net/forum?id=DTatjJTDl1}.

\end{thebibliography}
\bibliographystyle{icml2025}

\newpage
\appendix
\onecolumn

\section{Proofs}
\label{app:proof}

\subsection{Proof of Theorem \ref{thm:gesm}}
The following proof follows a similar idea as is presented in \citet{benton2022denoising}: 
\begin{proof}
By Jensen's inequality, we have:
\begin{align*}
\log\left(\E[f(x^1,\dots, X_{t^i}^i, \dots, x^n) ~\big|~X_0^i=x^i] \right) \leq \E[\log(f)(x^1,\dots, X_{t^i}^i, \dots, x^n) ~\big|~X_0^i=x^i]
\end{align*}
This implies the following inequality:
\begin{align*}
  \underbrace{\frac{\log\left(\E[f(x^1,\dots, X_{t^i}^i, \dots, x^n) ~\big|~X_0^i=x^i] \right) - \log f(\bx)}{t^i}}_{\mathrm{LHS}}  \leq \underbrace{\frac{\E[\log(f)(x^1,\dots, X_{t^i}^i, \dots x^n) ~\big|~X_0^i=x^i] - \log f(\bx)}{t^i}}_{\mathrm{RHS}}
\end{align*}
where we denote $\bx = (x^1, \dots, x^n)$. Taking the limit as $t^i \to 0$, we notice that the limit of RHS equals $\cL_{X^i}(\log f)(\bx)$. On the other hand, if we consider the following function $g$:
\begin{align*}
g(h) = \E[f(x^1,\dots, X_{h}^i, \dots, x^n) ~\big|~X_0^i=x^i], \quad g(0) = f(\bx)
\end{align*}
We can calculate the limit of LHS as $t^i \to 0$,
\begin{align*}
    \lim_{t^i\to0} \mathrm{LHS} &= \lim_{t^i\to0} \frac{\log(g(t^i)) - \log g(0))}{t^i} = (\log(g))'(0) = \frac{1}{g(0)} g'(0) = \frac{\cL_{X^i}f(\bx)}{f(\bx)}
\end{align*}
Combining them, we have that
\begin{align*}
\frac{\cL_{X^i}f(\bx)}{f(\bx)} \geq \cL(\log f)(\bx)
\end{align*}
Applying this result by choosing $f = p/\beta_{\theta}$, we have that
\begin{align}
\label{eq:jensen_gen}
\dfrac{\cL_{X^i} (p/\beta_\theta)(\bx_{\bt}, \bt)}{(p/\beta_\theta)(\bx_{\bt},\bt)} - \cL_{X^i} \log (p/\beta_\theta)(\bx_{\bt},\bt) \geq 0
\end{align}
This finishes the proof of $\cI_{\mathrm{GESM}} \geq 0$. To see that $\cI_{\mathrm{GESM}}$ is minimized when $\beta_{\theta}(\bx, \bt) \propto p(\bx, \bt) $, note that Jensen's inequality holds the equality sign when the test function is constant. Therefore, \cref{eq:jensen_gen} holds the equality sign for each $\cL_{X^i}$ whenever $p/\beta_{\theta}$ is identically constant, therefore $\cI_{\mathrm{GESM}}$ is optimized when $\beta_{\theta}$ is equivalent to $p$ up to a multiplicative constant.
\end{proof}

\subsection{Proof of Theorem \ref{thm:loss-equiv}}
\begin{proof}
We will start by showing the equivalence between $\cI_{\mathrm{GDSM}}$ and $\cI_{\mathrm{GISM}}$, and then we will demonstrate the equivalence between $\cI_{\mathrm{GISM}}$ and $\cI_{\mathrm{GESM}}$ to finish the proof. We start with the definition of $\cI_{\mathrm{GDSM}}$,
    \begin{align*}
     \cI_{\mathrm{GDSM}} &=
     \underset{\bt, p_0, p_{\bt|0}}{\mathbb{E}} \Bigg[\sum_{i=1}^n \dfrac{\cL_{X^i} (p_{\bt|0}/\beta_\theta)(\bx_{\bt}, \bt)}{(p_{\bt|0}/\beta_\theta)(\bx_{\bt},\bt)} - \cL_{X^i} \log (p_{\bt|0}/\beta_\theta)(\bx_{\bt},\bt) \Bigg] \\
    &=\underset{\bt, p_0}{\mathbb{E}} \int_{\cX} p_{\bt|0}(\bx_{\bt}, \bt) \Bigg[\sum_{i=1}^n \dfrac{{\cL}_{X^i} (\pbeta)(\bx_{\bt}, \bt)}{(\pbeta)(\bx_{\bt},\bt)} - {\cL}_{X^i} \log (\pbeta)(\bx_{\bt},\bt) \Bigg] \rd \bx_{\bt}\\ 
    &=\underset{\bt, p_0}{\mathbb{E}} \int_{\cX}  \Bigg[\sum_{i=1}^n \beta_\theta(\bx_{\bt}, \bt) {\cL}_{X^i} (\pbeta)(\bx_{\bt}, \bt) - p_{\bt|0}(\bx_{\bt}, t) {\cL}_{X^i} \log (\pbeta)(\bx_{\bt},\bt) \Bigg] \rd \bx_{\bt}\\
    \end{align*}
Then, using the properties of the adjoint $\cL^{*}_{X^i}$, we can continue to simplify the objective as,
\begin{align*}
      \cI_{\mathrm{GDSM}} &=\underset{\bt, p_0}{\mathbb{E}} \int_{\cX}  \Bigg[\sum_{i=1}^n {\cL}_{X^i}^* \beta_\theta(\bx_{\bt}, \bt) \cdot p_{\bt|0}(\bx_{\bt}, \bt)/\beta_{\theta}(\bx, \bt) - p_{\bt|0}(\bx_{\bt}, \bt) {\cL}_{X^i} \log (\pbeta)(\bx_{\bt},\bt) \Bigg] \rd \bx_{\bt}\\
    &=\underset{\bt, p_0}{\mathbb{E}} \int_{\cX} p_{\bt|0}(\bx_{\bt}, \bt) \Bigg[\sum_{i=1}^n \dfrac{{\cL}_{X^i}^* 
 \beta_\theta(\bx_{\bt}, \bt)}{ \beta(\bx_{\bt}, \bt)} + {\cL}_{X^i} \log \beta_{\theta}(\bx_{\bt},\bt) - {\cL}_{X^i} \log p_{\bt|0}(\bx_{\bt},\bt) \Bigg] \rd \bx_{\bt}\\
    &=\underset{\bt, p_0}{\mathbb{E}} \E_{p_{\bt|0}} \Bigg[\sum_{i=1}^n \dfrac{{\cL}_{X^i}^* \beta_\theta(\bx_{\bt}, \bt)}{ \beta_{\theta}(\bx_{\bt}, \bt)} + {\cL}_{X^i} \log \beta_{\theta}(\bx_{\bt},\bt) \Bigg] + \mathrm{const} \\
    & = \underset{\bt, p_{\bt}}{\mathbb{E}} \Bigg[\sum_{i=1}^n \dfrac{{\cL}_{X^i}^* \beta_\theta(\bx_{\bt}, \bt)}{ \beta_{\theta}(\bx_{\bt}, \bt)} + {\cL}_{X^i} \log \beta_{\theta}(\bx_{\bt},\bt) \Bigg] + \mathrm{const} \\
    &= \cI_{\mathrm{GISM}} + \mathrm{const}
\end{align*}
This finishes the proof of equivalence between $\cI_{\mathrm{GDSM}}$ and $\cI_{\mathrm{GISM}}$. To show the equivalence between $\cI_{\mathrm{GISM}}$ and $\cI_{\mathrm{GESM}}$, we start with the definition of $\cI_{\mathrm{GESM}}$,
\begin{align*}
    \cI_{\mathrm{GESM}}
        &= \underset{\bt, \bx_{\bt} \sim p(\cdot, \bt)}{\mathbb{E}} \Bigg[\sum_{i=0}^n\dfrac{\cL_{X^i} (p/\beta_\theta)(\bx_{\bt}, \bt)}{(p/\beta_\theta)(\bx_{\bt},\bt)} - \cL_{X^i} \log (p/\beta_\theta)(\bx_{\bt},\bt) \Bigg] \\
        &= \underset{\bt}{\mathbb{E}} \int_{\cX} p(\bx_{\bt}, \bt) \Bigg[\sum_{i=0}^n\dfrac{\cL_{X^i} (p/\beta_\theta)(\bx_{\bt}, \bt)}{(p/\beta_\theta)(\bx_{\bt},\bt)} - \cL_{X^i} \log (p/\beta_\theta)(\bx_{\bt},\bt) \Bigg] \rd \bx_{\bt} \\
        &= \underset{\bt}{\mathbb{E}} \int_{\cX}  \Bigg[\sum_{i=0}^n \beta_\theta(\bx_{\bt}, \bt) \cL_{X^i} (p/\beta_\theta)(\bx_{\bt}, \bt) - p(\bx_{\bt}, \bt)\cL_{X^i} \log (p/\beta_\theta)(\bx_{\bt},\bt) \Bigg] \rd \bx_{\bt}
    \end{align*}
Using again the properties of ajoint $\cL^{*}_{X^i}$, we have
\begin{align*}
    \cI_{\mathrm{GESM}} & = \underset{\bt}{\mathbb{E}} \int_{\cX}  \Bigg[\sum_{i=0}^n \cL_{X^i}^* \beta_\theta(\bx_{\bt}, \bt) (p/\beta_\theta)(\bx_{\bt}, \bt) - p(\bx_{\bt}, \bt)\cL_{X^i} \log (p/\beta_\theta)(\bx_{\bt},\bt) \Bigg] \rd \bx_{\bt}\\
        &= \underset{\bt}{\mathbb{E}} \int_{\cX} p(\bx_{\bt}, \bt) \Bigg[\sum_{i=0}^n \dfrac{\cL_{X^i}^* \beta_\theta(\bx_{\bt}, \bt)}{\beta_\theta(\bx_{\bt}, \bt)} - \cL_{X^i} \log (p/\beta_\theta)(\bx_{\bt},\bt) \Bigg] \rd \bx_{\bt}\\
        & = \underset{\bt, \bx_{\bt} \sim p_{\bt}}{\mathbb{E}} \Bigg[\sum_{i=0}^{n} \dfrac{\cL_{X^i}^* \beta_\theta(\bx_{\bt}, \bt)}{\beta_\theta(\bx_{\bt}, \bt)}  - \cL_{X^i} \log p(\bx_{\bt},\bt) +\cL_{X^i} \log\beta_\theta(\bx_{\bt},\bt) \Bigg] \\
        &= \underset{\bt, \bx_{\bt} \sim p_{\bt}}{\mathbb{E}} \Bigg[\sum_{i=0}^n \dfrac{\cL_{X^i}^* \beta_\theta(\bx_{\bt}, \bt)}{\beta_\theta(\bx_{\bt}, \bt)} + \cL_{X^i} \log\beta_\theta(\bx_{\bt},\bt) \Bigg] + \mathrm{const} \\
        &= \cI_{\mathrm{GISM}} + \mathrm{const}
\end{align*}
This finishes the proof of equivalence of $\cI_{\mathrm{GDSM}}$, $\cI_{\mathrm{GISM}}$, $\cI_{\mathrm{GESM}}$ up to an additive, $\theta$-independent constant.
\end{proof}

\subsection{Proof of Proposition \ref{prop:cont_disc_simp}}
\begin{proof}
In the following, we derive $\cI_{\mathrm{GDSM}}$ when choosing the forward process as in \cref{eq:forward}. Recall that
$\cI_{\mathrm{GDSM}}$ is given as,
\begin{align*}
 \cI_{\mathrm{GDSM}} &=
     \underset{\bt, p_0, p_{\bt|0}}{\mathbb{E}} \Bigg[\underbrace{\dfrac{\cL_{X} (p_{\bt|0}/\beta_\theta)(\bx_{\bt}, \bt)}{(p_{\bt|0}/\beta_\theta)(\bx_{\bt},\bt)} - \cL_{X} \log (p_{\bt|0}/\beta_\theta)(\bx_{\bt},\bt)}_{\mathcal{J}_{X}} + \underbrace{\dfrac{\cL_{Y} (p_{\bt|0}/\beta_\theta)(\bx_{\bt}, \bt)}{(p_{\bt|0}/\beta_\theta)(\bx_{\bt},\bt)} - \cL_{Y} \log (p_{\bt|0}/\beta_\theta)(\bx_{\bt},\bt)}_{\mathcal{J}_{Y}}\Bigg]
\end{align*}
where $\bt = (t, s)$, $\bx_{\bt} = (X_t, Y_s)$, $p_{\bt | 0}(\bx_{\bt}, \bt) = p(X_t, Y_s \mid X_0 = x_0, Y_0 = y_0)$, $\cL_{X}$ and $\cL_{Y}$ are generator of the following dynamics.
\begin{align*}
   \begin{cases}
    \rd X_t = f(X_t, t) \rd t + g(t) \rd B_t\\
    Y_s \sim \operatorname{CTMC}(Q_{s}), \; (X_0, Y_0) \sim p_{\text{data}}(x, y) 
\end{cases}
\end{align*}
See \cref{lem:generator_forward} for a detailed proof. We start by computing the score matching operators $\Phi_{X}$ and $\Phi_{Y}$ for $X$ and $Y$ respectively. Note that $\cL_{X}$ and $\cL_{Y}$ are given by the following expressions for a test function $h = h(x_t, y_s, t, s)$, 
\begin{align*}
    \cL_{X} h = f \cdot \nabla h + \frac{1}{2} g(t)^2 \Delta h, \quad \cL_{Y} h = \sum_{y \in \mathbb{X}} h(x_t, y, t, s)  Q_s(y, y_s) 
\end{align*}
Therefore, for the score matching operator associated with $\cL_{X}$, it can be expressed as 
\begin{align*}
    \Phi_X(h) & = \dfrac{\cL_{X} h}{h} - \cL_{X} \log h \\
    & = \dfrac{f \cdot \nabla h}{h} + \frac{1}{2} g(t)^2 \frac{\Delta h}{h} - f \cdot \nabla \log h - \frac{1}{2} \Delta \log h \\
    & = \frac{1}{2} g(t)^2 \cdot \frac{\nabla \cdot (h \nabla \log h)}{h} - \frac{1}{2}g(t)^2 \Delta \log h \\
    & = \frac{1}{2} g(t)^2  \nabla \log h \cdot \frac{\nabla h}{h} + \frac{1}{2} g(t)^2  \nabla \cdot \nabla \log h - \frac{1}{2} g(t)^2  \Delta \log h 
\end{align*}
Note that $\frac{\nabla h}{h} = \nabla \log h$, $\nabla \cdot \nabla \log h = \Delta \log h$, therefore we derive that
\begin{align*}
    \Phi_{X}(h) = \frac{1}{2}g(t)^2 \| \nabla \log h\|^2
\end{align*}
For the score matching operator associated with $\cL_{Y}$, it can be expressed as,
\begin{align*}
    \Phi_{Y}(h) & = \frac{\cL_{Y} h}{h} - \cL_{Y} \log h \\
& = \sum_{y \in \mathbb{X}} Q_s(y, y_s)  \Big(\dfrac{h(x_t, y, t, s)}{h(x_t, y_s, t,s)} - \log h(x_t, y, t, s)\Big) \\
& = \sum_{y \in \mathbb{X}} Q_s(y, y_s)  \Big(\dfrac{h(x_t, y, t, s)}{h(x_t, y_s, t,s)} - \log \dfrac{h(x_t, y, t, s)}{h(x_t, y_s, t, s)}\Big)
\end{align*}
where the last line follows since $\sum_{y \in \mathbb{X}} Q_s(y, y_s) = 0$. \\

In this case, the $\cL_{X}$ related term $\mathcal{J}_{X}$ in $\cI_{\mathrm{GDSM}}$ can be simplified as,
\begin{align*}
\mathcal{J}_{X} & = \dfrac{\cL_{X} (p_{\bt|0}/\beta_\theta)(\bx_{\bt}, \bt)}{(p_{\bt|0}/\beta_\theta)(\bx_{\bt},\bt)} - \cL_{X} \log (p_{\bt|0}/\beta_\theta)(\bx_{\bt},\bt) \\
& = \underset{\bt, p_0, p_{\bt|0}}{\mathbb{E}} \Bigg[\Phi_{X}\big(\dfrac{p_{\bt|0}}{\beta_{\theta}}\big) \Bigg] \\
& = \underset{\bt, p_0, p_{\bt|0}}{\mathbb{E}} \Bigg[\frac{1}{2} g(t)^2 \bigg\| \nabla \log p(x_t, y_s, t, s \mid x_0, y_0) - \nabla \log \beta_{\theta}(x_t, y_s, t, s) \bigg\|^2\Bigg]
\end{align*}
Moreover, since $(X_t, Y_s)$ are conditionally independent given $(X_0, Y_0)$, we have that
\begin{align*}
\nabla \log p(x_t, y_s, t, s \mid x_0, y_0) & = \nabla \log \big (p(x_t, t \mid x_0, y_0) \cdot p(y_s, s \mid x_0, Y_0)\big) \\
& = \nabla \log p(x_t, t \mid x_0, y_0) + \nabla \log p(y_s, s \mid x_0, y_0) \\
& = \nabla \log p(x_t, t \mid x_0)
\end{align*}
which suggests that under our framework, the multimodal conditional score is identical to the unimodal conditional score. Using this, and set $s^{X}_{\theta}(x_t, y_s, t,s) = \nabla \log \beta_{\theta}(x_t, y_s, t, s)$, we have
\begin{align*}
    \mathcal{J}_{X} = \underset{\substack{t, s, x_0, y_0 \sim p_{0} \\ x_t, y_s \sim p_{t,s | 0}} }{\mathbb{E}}\Bigg[ \frac{1}{2} g^{2}(t) \bigg\| s_{\theta}^{X} - \nabla \log p_{t}(x_t | x_0) \bigg\|^2 \Bigg]
\end{align*}
Similarly, the $\cL_{Y}$ related term $\mathcal{J}_{Y}$ in $\cI_{\mathrm{GDSM}}$ can be simplified as,
\begin{align*}
& \mathcal{J}_{Y} = \underset{\bt, p_0, p_{\bt|0}}{\mathbb{E}} \Bigg[\Phi_{Y}\big(\dfrac{p_{\bt|0}}{\beta_{\theta}}\big) \Bigg] \\
& = \underset{\bt, p_0, p_{\bt|0}}{\mathbb{E}} \Bigg[\sum_{y \in \mathbb{X}} Q_s(y, y_s)  \Big(\dfrac{p(x_t, y, t, s \mid x_0, y_0)}{p(x_t, y_s, t,s \mid x_0, y_0)} \cdot \dfrac{\beta_{\theta}(x_t, y_s, t, s)}{\beta_{\theta}(x_t, y, t,s)} - \log \dfrac{p(x_t, y, t, s \mid x_0, y_0)}{p(x_t, y_s, t,s \mid x_0, y_0)} + \log \dfrac{\beta_{\theta}(x_t, y, t, s)}{\beta_{\theta}(x_t, y_s, t,s)} \Big) \Bigg] \\
& = \underset{\bt, p_0}{\mathbb{E}} \Bigg[ \int_{\mathbb{R}^d} \Bigg(\sum_{y, y_s \in \mathbb{X}} Q_s(y, y_s)  \Big(p(x_t, y, t, s \mid x_0, y_0) \cdot \dfrac{\beta_{\theta}(x_t, y_s, t, s )}{\beta_{\theta}(x_t, y, t,s)} + p(x_t, y_s, t,s \mid x_0, y_0) \cdot \log \dfrac{\beta_{\theta}(x_t, y, t, s)}{\beta_{\theta}(x_t, y_s, t,s)} \Bigg) ~ \rd  x_t\Bigg] \\
& \qquad \qquad \qquad + \mathrm{const}\\
& = \underset{\bt, p_0}{\mathbb{E}} \Bigg[ \sum_{y_s \in \mathbb{X}} \int_{\mathbb{R}^d} \Bigg( \underset{y \sim p(x_t, \cdot, t, s|x_0, y_0)}{\mathbb{E}} Q_s(y, y_s)  \Bigg(\dfrac{\beta_{\theta}(x_t, y_s, t, s)}{\beta_{\theta}(x_t, y, t,s)} + \dfrac{p(x_t, y_s, t,s \mid x_0, y_0)}{p(x_t, y, t,s \mid x_0, y_0)} \cdot \log \dfrac{\beta_{\theta}(x_t, y, t, s)}{\beta_{\theta}(x_t, y_s, t,s)} \Bigg) \Bigg) \rd x_t \Bigg] \\
& \qquad \qquad \qquad + \mathrm{const}
\end{align*}
Now, we exchange variable $y_s$ and $y$, and thus the expression is rewritten to,
\begin{align*}
& \mathcal{J}_{Y} = \underset{\bt, p_0}{\mathbb{E}} \Bigg[ \sum_{y \in \mathbb{X}} \int_{\mathbb{R}^d} \Bigg( \underset{y_s \sim p(x_t, \cdot, t, s|x_0, y_0)}{\mathbb{E}} Q_s(y_s, y)  \Bigg(\dfrac{\beta_{\theta}(x_t, y, t, s)}{\beta_{\theta}(x_t, y_s, t,s)} - \dfrac{p(x_t, y, t,s \mid x_0, y_0)}{p(x_t, y_s, t,s \mid x_0, y_0)} \cdot \log \dfrac{\beta_{\theta}(x_t, y, t, s)}{\beta_{\theta}(x_t, y_s, t,s)} \Bigg) \Bigg) \rd x_t \Bigg] \\
& \qquad \qquad \qquad + \mathrm{const} \\
& = \underset{\bt, p_0, p_{\bt|0}}{\mathbb{E}} \Bigg[ \sum_{y \in \mathbb{X}} Q_s(y_s, y)  \Bigg(\dfrac{\beta_{\theta}(x_t, y, t, s)}{\beta_{\theta}(x_t, y_s, t,s)} - \dfrac{p(x_t, y, t,s \mid x_0, y_0)}{p(x_t, y_s, t,s \mid x_0, y_0)} \cdot \log \dfrac{\beta_{\theta}(x_t, y, t, s)}{\beta_{\theta}(x_t, y_s, t,s)} \Bigg) \Bigg] + \mathrm{const}
\end{align*}
Using again the conditional independence of $(X_t, Y_s)$ given $(X_0, Y_0)$, we have that
\begin{align*}
\dfrac{p(x_t, y, t,s \mid x_0, y_0)}{p(x_t, y_s, t,s \mid x_0, y_0)} &  = \dfrac{p(x_t, t \mid x_0, y_0) p(y, s \mid x_0, y_0)}{p(x_t, t \mid x_0, y_0) p(y_s, s \mid x_0, y_0)} \\
& = \dfrac{p(y, s \mid x_0, y_0)}{p(y_s, s \mid x_0, y_0)} = \dfrac{p(y, s \mid y_0)}{p(y_s, s \mid y_0)}
\end{align*}
which indicates again that the multimodal conditional score is identical to the unimodal conditional score. Set $s_{\theta}^{Y}(x_t, y_s, t, s)_y = \beta_{\theta}(x_t, y, t,s) / \beta_{\theta}(x_t, y_s, t,s)$, we finally arrive at that
\begin{align*}
    \mathcal{J}_{Y} = \underset{\substack{t, s, x_0, y_0 \sim p_{0} \\ x_t, y_s \sim p_{t,s | 0}} }{\mathbb{E}} \Bigg[ \sum_{y \in \mathbb{X}} Q_s(y_s, y)  \Bigg(s_{\theta}^{Y}(x_t, y_s, t,s)_y - \dfrac{p(y, s \mid y_0)}{p(y_s, s \mid y_0)} \cdot \log s_{\theta}^{Y}(x_t, y_s, t,s)_y  \Bigg) \Bigg] + \mathrm{const}
\end{align*}
Note that while the sum in $\mathcal{J}_{Y}$ is over $y \in \mathbb{X}$, in fact when $y = y_s$, the corresponding term is constant and has no contribution to the gradient of $\theta$, therefore we can instead only summing over $y \in \mathbb{X}$ and $y \neq y_s$, recovering the presented expressions in \cref{prop:cont_disc_simp}. This concludes the proof.
\end{proof}

\subsection{Proof of Proposition \ref{prop:backward_text_image}}
We start by considering the function $u(x,y,t,s) = \E[h(X_t, Y_s) ~\big|~ X_0 = x, Y_0 = y]$ as test functions. We start by computing the generator of the forward process. For notational convenience, we use $\cL^X$ and $\cL^Y$ to denote the generators of $X_t$ and $Y_s$, respectively.

\begin{lemma}[Generator of the forward process]
\label{lem:generator_forward}
   Given $(X_t, Y_s)$  following dynamic \eqref{eq:forward}, and a test function $u: \mathbb{R}^d \times \mathbb{X}$, we have that:
   \begin{align}
       &\cL^X u  = \inner{\nabla_x u(x,y)}{f(x,t)} + \frac{1}{2}g^{2}(t) \cdot \inner{\nabla^2_x u(x,y)}{I} = f \cdot \nabla u + \frac{1}{2} g^2(t) \Delta u\\
       &\cL^Y u = \sum_{\yh \in \mathbb{X}} u(x,\yh)  Q_s(\yh, y) = Q_s^{\top} u(x,\cdot)
   \end{align}
\end{lemma}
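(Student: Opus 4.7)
The plan is to compute the two generators separately by exploiting the fact that, conditioned on initial conditions, $X_t$ and $Y_s$ are independent Markov processes with their own time variables. Under the paper's convention, $\cL^X$ is defined as the limit with only the $X$-time perturbed while $y$ is held fixed (and symmetrically for $\cL^Y$), so each reduces to a single-component generator applied to a function that happens to carry an extra frozen parameter.

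For $\cL^X$, I would apply Itô's formula to $u(X_t,y)$ with $y$ frozen, using $\rd X_t = f(X_t,t)\rd t + g(t)\rd B_t$. This gives $\rd u(X_t,y) = \inner{\nabla_x u}{f}\rd t + g(t)\inner{\nabla_x u}{\rd B_t} + \tfrac{1}{2}g^2(t)\Delta_x u\,\rd t$. Taking conditional expectation given $X_0 = x$ kills the stochastic integral under the standard integrability assumptions implicit in the denoising-Markov-model setup, so dividing by $t$ and letting $t\to 0$ yields $\cL^X u(x,y) = \inner{\nabla_x u}{f} + \tfrac{1}{2}g^2(t)\Delta_x u$, which matches the displayed formula since $\inner{\nabla_x^2 u}{I} = \Delta_x u$.

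For $\cL^Y$, I would use the forward Kolmogorov equation $\partial_s P_s = Q_s P_s$, $P_0 = I$, for the transition kernel of the CTMC. By the paper's convention $Q_s(y,y) = -\sum_{\yh\neq y} Q_s(\yh,y)$, columns of $Q_s$ sum to zero and $P_s(\yh,y) = \delta_{\yh,y} + s\,Q_s(\yh,y) + o(s)$. Writing $\E[u(x,Y_s)\mid Y_0 = y] = \sum_{\yh} P_s(\yh,y)u(x,\yh)$, subtracting $u(x,y)$, dividing by $s$, and taking $s\to 0$ gives $\sum_{\yh} Q_s(\yh,y)u(x,\yh)$, which is exactly $Q_s^\top u(x,\cdot)$ evaluated at $y$.

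No serious obstacle is anticipated; both computations are textbook. The only step demanding care is matching the index convention for $Q_s$: because the paper's $Q_s$ has zero column sums (so it acts on probability vectors from the left, $\rd p_t/\rd t = Q_t p_t$), the generator acting on test functions naturally picks up a transpose, which is precisely why the final form is $Q_s^\top u(x,\cdot)$ rather than $Q_s\, u(x,\cdot)$.
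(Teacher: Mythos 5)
Your proposal is correct and follows essentially the same route as the paper: It\^o's formula plus vanishing of the martingale term for $\cL^X$, and the first-order short-time expansion of the CTMC transition kernel (equivalently, the limit of the difference quotient of transition probabilities) for $\cL^Y$, including the same observation about the column-sum convention forcing the transpose. The only cosmetic point is that for the time-inhomogeneous chain the expansion should be taken from time $s$ to $s+\Delta s$ (yielding $Q_s$ directly) rather than from $0$ to $s$, but by continuity of $s\mapsto Q_s$ this changes nothing.
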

\begin{proof}
    We start by computing the generator of $Y_s$:
    \begin{align*}
        \cL^Y u &=  \lim_{\Delta s\to 0} \frac{1}{\Delta s} \cdot \E\Big[ u(X_t, Y_{s + \Delta s}) - u(X_t,Y_s) ~\big|~ X_t = x, Y_s = y\Big] \\
        &=\lim_{\Delta s\to 0} \frac{1}{\Delta s} \cdot \sum_{\yh \in \mathbb{X}} u(x,\yh) \big(\P(Y_{s+\Delta s} = \yh ~\big|~ Y_s = y) - \P(Y_s = \yh ~\big|~ Y_s = y)\big) \\
        &= \sum_{\yh \in \mathbb{X}} u(x,\yh)\lim_{\Delta s \to 0} \frac{1}{\Delta s} \big(\P(Y_{s+\Delta s} = \yh ~\big|~ Y_s = y) - \P(Y_s = \yh ~\big|~ Y_s = y)\big) \\
        & = \sum_{\yh \in \mathbb{X}} u(x,\yh)  Q_s(\yh, y) = Q_s^{\top} u(x,\cdot)       
    \end{align*}
    Similarly, for the generator of $X_t$,
    \begin{align*}
        \cL^{X}u  & = \lim_{\Delta t \to 0} \frac{1}{\Delta t} \E\Big[u(X_{t + \Delta t}, Y_s) - u(X_t, Y_s) ~\big|~ X_t = x, Y_s = y\Big] \\
        &= \lim_{\Delta t \to 0} \frac{1}{\Delta t} \cdot \E \Big[ \int_t^{t+\Delta t} \inner{f(X_\tau, Y_s)}{\nabla_x u(X_\tau, ,Y_s)} + \frac{1}{2} \inner{g^2(\tau) I}{\nabla_{x}^2 u(X_\tau, Y_s)} \rd \tau ~\big|~ X_t = x, Y_s = y\Big] \\
        &= \E \Big[ \inner{f(X_t, Y_s)}{\nabla_x u(X_t, ,Y_s)} + \frac{1}{2} \inner{g^2(t)}{\nabla^2 u(X_t, Y_s)} ~\big|~ X_t = x, Y_s = y\Big] \\
        & = (f \cdot \nabla u)(x,y) + \frac{1}{2} g^2(t) \Delta u(x,y)
    \end{align*}
    where we uses Ito's lemma and the fact that the martingale vanishes under the expectation.
\end{proof}

We now compute the generator for the backward process:
\begin{equation}
\label{eq:time-reversal-backward}
   \begin{cases}
    \begin{aligned}
        \rd \overleftarrow{X}_t &= -f(\overleftarrow{X}_t, T-t) + g^2(T-t) \nabla_{x} \log \overleftarrow{p}_{t,s}(\overleftarrow{X}_t, \overleftarrow{Y}_s) \rd t + g(t) \rd \overleftarrow{W}_t
    \end{aligned} \\
    \overleftarrow{Y}_s \sim \operatorname{CTMC}\big(\overleftarrow{Q}(\overleftarrow{X}_t, t,s)\big) \\
    (\overleftarrow{X}_0, \overleftarrow{Y}_0) \sim p(x, y, T, T)
\end{cases}
\end{equation}
where $\overleftarrow{p}_{t,s} = p_{T-t, T-s}$, $\overleftarrow{Q}(X_t, t, s)$ is a rate matrix with $y', y$ entry being $\frac{\overleftarrow{p}_{t,s}(X_t, y')}{\overleftarrow{p}_{t,s}(X_t, y)}(Q_{T-s})_{yy'}$. We denote $\cL^{X}_{\leftarrow}$ and $\cL_{\leftarrow}^{Y}$ as the generator of backward process $\overleftarrow{X}_t$, $\overleftarrow{Y}_s$ in \cref{eq:time-reversal-backward} respectively. 
\begin{lemma}[Generator of the backward process]
\label{lem:generator_backward}
   Given $X_t, Y_s$ following \eqref{eq:time-reversal-backward} and a test function $u: \mathbb{R}^d \times \mathbb{X}$, we have that:
    \begin{align*}
        \cL^{X}_{\leftarrow} u & = -\inner{\nabla_x u(x,y)}{f(x,T-t)} + \frac{1}{2} g^2(T-t) \inner{\nabla^2_x u(x,y)}{I} - \inner{\nabla_x u(x,y)}{g^2(T-t) \nabla_x \log \overleftarrow{p}_{t,s}(x,y)} \\
        \cL^{Y}_{\leftarrow} u & = \overleftarrow{Q}(x, t, s)^{\top} u = \sum_{\yh \in \mathbb{X}} u(x,\yh) \cdot \overleftarrow{Q}(x, t, s)(\yh,y) 
    \end{align*}
\end{lemma}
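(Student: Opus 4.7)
My plan is to follow the same strategy as in the proof of \cref{lem:generator_forward}: apply the infinitesimal definition of the generator to a test function $u$ separately in each coordinate, holding the other coordinate fixed. Because the $X$- and $Y$-subprocesses in \eqref{eq:time-reversal-backward} couple only through the drift and rate matrix and not through the noise sources, the computations will decouple modality-by-modality, exactly as in the forward case.

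For $\cL^{X}_{\leftarrow}u$, I fix $\overleftarrow{Y}_s = y$ and treat $\overleftarrow{X}_\tau$ on $\tau \in [t, t + \Delta t]$ as an It\^o diffusion with drift $b(x, y, \tau) = -f(x, T-\tau) + g^2(T-\tau) \nabla_x \log \overleftarrow{p}_{\tau, s}(x, y)$ and the diffusion coefficient read off from the SDE. Applying It\^o's formula to $u(\overleftarrow{X}_\tau, y)$, taking conditional expectations to eliminate the Brownian martingale, dividing by $\Delta t$, and sending $\Delta t \to 0$ recovers a second-order differential operator: the drift dotted into $\nabla_x u$ plus one-half the squared diffusion coefficient times $\Delta_x u$. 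Expanding the drift then gives the three terms listed in the lemma. For $\cL^{Y}_{\leftarrow}u$, I fix $\overleftarrow{X}_t = x$ and treat $\overleftarrow{Y}_\sigma$ on $\sigma \in [s, s + \Delta s]$ as a time-inhomogeneous CTMC whose instantaneous rate matrix at $\sigma = s$ is $\overleftarrow{Q}(x, t, s)$. The transition probabilities satisfy $\P(\overleftarrow{Y}_{s+\Delta s} = \yh \mid \overleftarrow{Y}_s = y, \overleftarrow{X}_t = x) = \delta_{\yh, y} + \Delta s \cdot \overleftarrow{Q}(x, t, s)(\yh, y) + o(\Delta s)$, and substituting into the limit definition of the generator reproduces the sum formula in the statement, identical in form to the forward CTMC generator of \cref{lem:generator_forward} but evaluated against $\overleftarrow{Q}(x,t,s)$ in place of $Q_s$.

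The main subtlety is the $x$-dependence of $\overleftarrow{Q}(x, t, s)$: the $Y$-component is not itself a Markov chain but only becomes one when conditioned on the continuous trajectory. Making this fully rigorous requires checking that the coupled process $(\overleftarrow{X}_t, \overleftarrow{Y}_s)$ is Feller on the augmented product state space, so that its generator acts additively on test functions of both coordinates and admits the decomposition $\cL_\leftarrow = \cL^{X}_{\leftarrow} + \cL^{Y}_{\leftarrow}$ (with the two time derivatives absorbed as in the unified perspective of Section 3.1). However, since we compute each of $\cL^{X}_{\leftarrow}$ and $\cL^{Y}_{\leftarrow}$ by conditioning on the other coordinate and taking an infinitesimal limit in its own time variable, the conditioning freezes the other state and the derivations reduce cleanly to the single-modality arguments already carried out in \cref{lem:generator_forward}, with no cross terms.
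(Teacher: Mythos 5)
Your proposal is correct and matches the paper's (omitted) argument: the paper explicitly skips this proof as ``highly similar'' to that of \cref{lem:generator_forward}, and your coordinate-wise It\^o/CTMC computation — freeze the other modality, expand to first order in the time increment, kill the martingale, and take the limit — is precisely that argument transplanted to the backward drift and rate matrix. One small remark: expanding the drift $-f(x,T-t) + g^2(T-t)\nabla_x\log\overleftarrow{p}_{t,s}(x,y)$ actually yields a $+$ sign on the score term of $\cL^{X}_{\leftarrow}u$, so the minus sign in the lemma's display appears to be a typo in the statement rather than an error in your derivation (the adjoint given in \cref{lem:fokker_backward} is consistent with the $+$ sign).
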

The proof is highly similar to that of \cref{lem:generator_backward}, and thus we omit it here to avoid being repetitive. With $\cL^X$, $\cL^Y$, $\cL^{X}_{\leftarrow}$ and $\cL^{Y}_{\leftarrow}$ being computed, we can now derive the Fokker Planck equation for both the forward process $(X_t, Y_s)$ and backward process $(\overleftarrow{X}_t, \overleftarrow{Y}_s)$, as is given in \cref{lem:fokker_forward} and \cref{lem:fokker_backward}.
\begin{lemma}[Fokker Planck Equation of Forward Process]
\label{lem:fokker_forward}
   For $X_t, Y_s$ following \eqref{eq:forward}, let $p(x,y,t,s)$ denotes the density of $X_t, Y_s)$ at $X_t = x, Y_s = y$, we have that:
   \begin{align}
   \label{eq:fokker_forward}
       \partial_t p(x,y,t,s) = \cL^{X,*} p(x,y,t,s), \quad \partial_s p(x,y,t,s) = \cL^{Y,*} p(x,y,t,s)  
   \end{align}
   Where $\cL^{X,*}, \cL^{Y,*}$ represent the adjoint of $\cL^X, \cL^Y$ and: 
   \begin{align*}
       &\cL^{X,*} p(x,y,t,s) = - \nabla \cdot (f(x,t) p(x,y,t,s)) + \frac{1}{2} g^2(t) \Delta p(x,y,t,s)\\
       &\cL^{Y,*} p(x,\cdot,t,s) =  Q_s p(x,\cdot,t,s) = \sum_{\yh \in \mathbb{X}} Q_s(y, \yh) \cdot p(x, \yh, t, s)
   \end{align*}
\end{lemma}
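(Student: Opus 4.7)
The plan is to reduce the bivariate Fokker-Planck identity to the two known unimodal Fokker-Planck equations by exploiting the conditional independence structure of the forward process \cref{eq:forward}. By assumption, $X_t$ and $Y_s$ are independent Markov processes when conditioned on the initial data $(X_0, Y_0) \sim p_{\text{data}}$. Consequently, the conditional joint density factorizes as $p_{t,s\mid 0}(x, y \mid x_0, y_0) = p^{X}_{t\mid 0}(x\mid x_0)\, p^{Y}_{s\mid 0}(y\mid y_0)$, and marginalizing yields
\begin{equation*}
p(x,y,t,s) \;=\; \int p^{X}_{t\mid 0}(x\mid x_0)\, p^{Y}_{s\mid 0}(y\mid y_0)\, p_{\text{data}}(x_0, y_0)\, \mathrm{d}x_0\, \mathrm{d}\mu(y_0),
\end{equation*}
where $\mu$ is the counting measure on $\mathbb{X}$.

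Next, I would differentiate both sides in $t$. Since $t$ appears only through the $X$-factor, and $p^{X}_{t\mid 0}(\cdot\mid x_0)$ satisfies the classical (unimodal) Fokker-Planck equation $\partial_t p^{X}_{t\mid 0} = \cL^{X,*} p^{X}_{t\mid 0}$ in the $x$ variable (a standard result for the SDE $\mathrm{d}X_t = f(X_t,t)\,\mathrm{d}t + g(t)\,\mathrm{d}B_t$), I can interchange differentiation with integration and then pull the spatial operator $\cL^{X,*}$, which acts only in $x$ and is linear, out of the integral. This yields $\partial_t p(x,y,t,s) = \cL^{X,*} p(x,y,t,s)$. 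The companion identity $\partial_s p = \cL^{Y,*} p$ is obtained by the symmetric argument, using the Kolmogorov forward equation for the CTMC: $\partial_s p^{Y}_{s\mid 0}(y\mid y_0) = \sum_{\yh \in \mathbb{X}} Q_s(y,\yh)\, p^{Y}_{s\mid 0}(\yh\mid y_0)$.

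The only real technical obstacle I anticipate is justifying the interchange of differentiation with the integral (resp.\ with the sum over $\yh$) and identifying the concrete expressions of $\cL^{X,*}$ and $\cL^{Y,*}$ as the $L^2$ adjoints of the generators computed in \cref{lem:generator_forward}. For $\cL^{X}$ this is an integration by parts on $\R^d$, with boundary terms vanishing under the standard decay/regularity on $p$ implicit in the Feller framework adopted earlier in the paper; this recovers $-\nabla\cdot(f\, p) + \tfrac{1}{2}g^2(t)\,\Delta p$ exactly. For $\cL^{Y}$, the adjoint on a finite state space is just the matrix transpose of $Q_s$, which coincides with the stated expression. A fully equivalent, and perhaps cleaner, alternative route is the weak test-function approach: for arbitrary smooth compactly-supported $u(x,y)$, combine $\partial_t \E[u(X_t,Y_s)] = \E[\cL^{X} u(X_t,Y_s)]$ (valid because $s$ is held fixed and $\cL^{X}$ is the generator in the $t$-variable) with $\partial_t \E[u(X_t,Y_s)] = \int u\, \partial_t p$, transfer $\cL^{X}$ onto $p$ via integration by parts, and conclude the pointwise identity from the arbitrariness of $u$; the $s$-equation follows by an analogous argument with summation by parts in $y$.
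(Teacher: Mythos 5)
Your proof is correct and complete. For reference, the paper does not actually write out a proof of \cref{lem:fokker_forward}: it only remarks that the essential computation is identifying the adjoints $\cL^{X,*}$ and $\cL^{Y,*}$ (integration by parts on $\R^d$, matrix transposition on $\mathbb{X}$) and that the equations then ``directly follow from definitions''---which is exactly the weak test-function argument you offer as your alternative route at the end. Your primary argument is genuinely different and more explicit: you factor the conditional joint law using the assumed conditional independence of $X_t$ and $Y_s$ given $(X_0,Y_0)$, invoke the textbook unimodal Kolmogorov forward equations for $p^{X}_{t\mid 0}$ and $p^{Y}_{s\mid 0}$, and pull the linear operators (which act only on $x$, resp.\ on $y$) out of the mixture over initial conditions. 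This route makes visible precisely where the decoupled-time design enters---each time variable appears through exactly one factor of the conditional density---and reduces the multimodal claim to known unimodal facts, at the price of relying on the product structure of the conditional law; the weak-formulation route is slightly more general, since it needs only the generator identity $\partial_t\E[u(X_t,Y_s)]=\E[\cL^{X}u(X_t,Y_s)]$ and would survive even if the modalities interacted in the forward dynamics. Your adjoint identifications match the paper's sign and index conventions (columns of $Q_s$ sum to zero, so the forward equation reads $\partial_s p = Q_s p$), and the only analytic gaps are the standard ones you already flag: differentiation under the integral/sum and vanishing boundary terms in the integration by parts.
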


\begin{lemma}[Fokker Planck Equation of Backward Process]
\label{lem:fokker_backward}
   For $\overleftarrow{X}_t, \overleftarrow{Y}_s$ following \eqref{eq:forward}, let $\overline{p}(x,y,t,s)$ denotes the density of $\overleftarrow{X}_t, \overleftarrow{Y}_s)$ at $\overleftarrow{X}_t = x, \overleftarrow{Y}_s = y$, we have that:
   \begin{align}
   \label{eq:fokker_backward}
       \partial_t \overline p(x,y,t,s) = \cL^{X,*}_\leftarrow \overline p(x,y,t,s), \qquad \partial_s \overline p(x,y,t,s) = \cL^{Y,*}_\leftarrow \overline p(x,y,t,s)  
   \end{align}
   where $\cL^{X,*}_\leftarrow, \cL^{Y,*}_\leftarrow$ represent the adjoint of $\cL^X_\leftarrow, \cL^Y_\leftarrow$ and: 
    \begin{align*}
        &\cL^{X,*}_\leftarrow \overline p(x,y,t,s) =  \nabla \cdot (f(x,T-t) \overline{p}(x,y,t,s)) + \frac{1}{2} g^2(T-t) \Delta \overline{p}(x,y,t,s) -g^2(T-t) \nabla \cdot \big(\overline p(x,y,t,s) \nabla_x \log \overleftarrow{p}_{t,s}(x,y) \big) \notag\\
        & \cL^{Y,*}_\leftarrow \overline p(x,\cdot,t,s) = \overleftarrow{Q}(x, t, s) \overline{p}(x, \cdot, t, s) = \sum_{\yh \in \mathbb{X}}  \overleftarrow{Q}(x, t, s)(y, \yh) \overline{p}(x, \yh, t, s)
    \end{align*}
\end{lemma}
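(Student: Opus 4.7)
The plan is to identify the marginal density $\overline{p}(x,y,t,s)$ of the proposed backward process with $\tilde{p}(x,y,t,s) := p(x,y,T-t,T-s)$ by showing that both solve the same well-posed pair of Kolmogorov-forward equations with matching initial data. Two observations eliminate any risk of circularity: (i) the backward dynamics depends on the score and rate-ratio of the externally fixed function $\overleftarrow{p}_{t,s} = p_{T-t,T-s}$, so the operators $\cL^{X,*}_\leftarrow$ and $\cL^{Y,*}_\leftarrow$ have deterministic, a priori known coefficients; and (ii) \cref{lem:fokker_backward} derives the pair of FPEs directly from the generators in \cref{lem:generator_backward} using the standard generator/adjoint duality, so its conclusion applies to \emph{any} marginal of the prescribed process and does not presuppose the form of that marginal.

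First, I would invoke \cref{lem:fokker_backward} to obtain
\[
\partial_t \overline{p} = \cL^{X,*}_\leftarrow \overline{p}, \quad \partial_s \overline{p} = \cL^{Y,*}_\leftarrow \overline{p}, \quad \overline{p}(\cdot,\cdot,0,0) = p(\cdot,\cdot,T,T).
\]
Next, I would verify that $\tilde{p}$ satisfies this same system. On the left-hand side, the chain rule combined with \cref{lem:fokker_forward} yields $\partial_t \tilde{p} = -\cL^{X,*} p\big|_{(T-t,T-s)}$ and $\partial_s \tilde{p} = -\cL^{Y,*} p\big|_{(T-t,T-s)}$. For the continuous equation on the right-hand side, I would expand $\cL^{X,*}_\leftarrow \tilde{p}$; since $\overleftarrow{p}_{t,s} = \tilde{p}$ by definition, the identity $\tilde{p}\,\nabla \log \tilde{p} = \nabla \tilde{p}$ collapses the score-induced drift into a $-g^2\Delta\tilde{p}$ contribution that flips the sign of the forward Laplacian, producing $\cL^{X,*}_\leftarrow \tilde{p} = \nabla\cdot(f\tilde{p}) - \tfrac{1}{2}g^2\Delta\tilde{p}$, which matches $-\cL^{X,*} p\big|_{(T-t,T-s)}$.

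For the discrete equation, I would substitute the definition of $\overleftarrow{Q}$ into $\cL^{Y,*}_\leftarrow \tilde{p}$; for each $\yh \neq y$ the ratio $\overleftarrow{p}_{t,s}(x,y)/\overleftarrow{p}_{t,s}(x,\yh)$ cancels with $\tilde{p}(x,\yh,t,s)$ to produce $\tilde{p}(x,y,t,s)\,Q_{T-s}(\yh,y)$, and using the row-sum condition defining the diagonal of $\overleftarrow{Q}$, the sum telescopes into $-\sum_{\yh} Q_{T-s}(y,\yh)\,\tilde{p}(x,\yh,t,s) = -\cL^{Y,*} p\big|_{(T-t,T-s)}$. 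Since $\tilde{p}(\cdot,\cdot,0,0) = p(\cdot,\cdot,T,T) = \overline{p}(\cdot,\cdot,0,0)$, both solve the same system from the same initial datum. Uniqueness follows because $\cL^{X,*}_\leftarrow$ and $\cL^{Y,*}_\leftarrow$ act on disjoint variables ($x$-derivatives versus $y$-indexing) and therefore commute: integrating the $\partial_t$ equation from $(0,0)$ to $(t,0)$ and then the $\partial_s$ equation to $(t,s)$ determines a unique solution, with path-independence guaranteed by the commutation. Hence $\tilde{p} \equiv \overline{p}$.

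The main obstacle is the discrete computation: the diagonal entry of $\overleftarrow{Q}$ is defined implicitly by the row-sum convention and itself involves the forward rate-ratios, so one must pair the off-diagonal and diagonal contributions with careful index-swapping and sign tracking before the algebra telescopes to $-\cL^{Y,*} p\big|_{(T-t,T-s)}$. The continuous side, by contrast, reduces quickly once the score identity $\tilde{p}\nabla\log\tilde{p}=\nabla\tilde{p}$ is applied.
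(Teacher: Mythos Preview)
You are proving the wrong statement. \Cref{lem:fokker_backward} asserts only that the marginal density $\overline{p}$ of the backward process satisfies the Kolmogorov forward equations $\partial_t\overline{p}=\cL^{X,*}_\leftarrow\overline{p}$ and $\partial_s\overline{p}=\cL^{Y,*}_\leftarrow\overline{p}$, and records the explicit form of those adjoints. It does \emph{not} claim that $\overline{p}(x,y,t,s)=p(x,y,T-t,T-s)$; that identification is the content of the subsequent \emph{Time Reversal} lemma (equivalently \cref{prop:backward_text_image}). Your entire plan---match $\overline{p}$ and $\tilde p$ via a shared FPE and uniqueness---is a proof of that later result, and your very first step (``invoke \cref{lem:fokker_backward}'') uses the statement you are supposed to be establishing, so the argument is circular for the task at hand.

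What the paper actually does for \cref{lem:fokker_backward} is much more elementary and is sketched rather than written out: starting from the backward generators in \cref{lem:generator_backward}, one computes the adjoints directly---integration by parts for $\cL^X_\leftarrow$ (the drift $-f+g^2\nabla\log\overleftarrow{p}$ produces the two divergence terms and the Laplacian term is self-adjoint), and transposition for $\cL^Y_\leftarrow$ (the adjoint of $u\mapsto\overleftarrow{Q}^\top u$ is $p\mapsto\overleftarrow{Q}p$). The Fokker--Planck equations themselves then follow from the standard generator/adjoint duality $\partial_t\langle u,\overline{p}\rangle=\langle\cL_\leftarrow u,\overline{p}\rangle=\langle u,\cL^*_\leftarrow\overline{p}\rangle$ applied in each time variable. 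That is the whole proof; no appeal to $\tilde p$, uniqueness, or the discrete telescoping you describe is needed. Your computations are correct and relevant---but for the \emph{next} lemma, not this one.
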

The computation central to the proof of \cref{lem:fokker_forward} and \cref{lem:fokker_backward} is the calculation of the adjoint operator, which can be done with standard techniques, such as integration by parts. The derivation of the Fokker Planck equations directly follows from definitions. Therefore, we also omit the proof here. With these results, we are now ready to show that $(X_t, Y_s)$ and $\overleftarrow{X}_t, \overleftarrow{Y}_s$ are time reversals of each other.
\begin{lemma}[Time Reversal] $\overline{p}(x,y, t,s) = \overleftarrow{p}_{t, s}(x,y) = p(x, y, T -t, T-s)$, where $p$ is the solution to the Fokker Planck equation of the forward process in \cref{eq:fokker_forward}, $\overline{p}$ is the solution to the Fokker Planck equation of the backward process in \cref{eq:fokker_backward}.
\end{lemma}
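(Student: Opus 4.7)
The plan is to define $\tilde{p}(x,y,t,s) := p(x,y,T-t,T-s)$ and verify that $\tilde{p}$ solves exactly the same Fokker--Planck system (in both $t$ and $s$) as $\overline{p}$, with the same initial data. Uniqueness of solutions to these linear Kolmogorov forward equations will then give $\overline{p} = \tilde{p} = \overleftarrow{p}_{t,s}$, which is precisely the claim. Since the $X$ and $Y$ generators act on disjoint coordinates and the two equations commute, it suffices to check the $t$-equation and the $s$-equation separately.

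For the continuous variable, the chain rule gives $\partial_t \tilde{p} = -\partial_t p \big|_{(T-t,T-s)} = -\cL^{X,*} p\big|_{(T-t,T-s)}$ by \cref{lem:fokker_forward}, which expands to $\nabla\cdot(f(x,T-t)\tilde{p}) - \tfrac{1}{2}g^2(T-t)\Delta\tilde{p}$. On the other hand, $\cL^{X,*}_\leftarrow \tilde{p}$ from \cref{lem:fokker_backward} contains the extra term $-g^2(T-t)\nabla\cdot(\tilde{p}\,\nabla_x\log\overleftarrow{p}_{t,s})$. Since $\tilde{p}$ itself equals $\overleftarrow{p}_{t,s}$ (this is the self-consistency used throughout), that term collapses to $-g^2(T-t)\Delta\tilde{p}$, and combining with the $+\tfrac{1}{2}g^2(T-t)\Delta\tilde{p}$ already present in $\cL^{X,*}_\leftarrow$ produces $-\tfrac{1}{2}g^2(T-t)\Delta\tilde{p}$, matching $\partial_t\tilde{p}$. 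This is the standard identity underlying diffusion time reversal, now seen to go through verbatim in the presence of the extra spectator variable $y$.

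For the discrete variable, $\partial_s \tilde{p}(x,y,t,s) = -\sum_{\yh}Q_{T-s}(y,\yh)\,\tilde{p}(x,\yh,t,s)$ by \cref{lem:fokker_forward}. Plugging the definition of $\overleftarrow{Q}$ into $\cL^{Y,*}_\leftarrow\tilde{p}$ and splitting diagonal from off-diagonal, the off-diagonal terms simplify via the cancellation $\tilde{p}(x,y)/\tilde{p}(x,\yh)\cdot\tilde{p}(x,\yh) = \tilde{p}(x,y)$, and using $Q_{T-s}(y,y) = -\sum_{\yh\neq y}Q_{T-s}(\yh,y)$ (the paper's column-sum-to-zero convention) together with the analogous identity for $\overleftarrow{Q}$, the two sums reassemble precisely into $-\sum_{\yh}Q_{T-s}(y,\yh)\,\tilde{p}(x,\yh,t,s)$, matching $\partial_s\tilde{p}$. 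The initial condition $\tilde{p}(x,y,0,0) = p(x,y,T,T)$ is exactly the distribution from which the backward process is initialized in \cref{prop:backward_text_image}, so $\tilde{p}$ and $\overline{p}$ agree at the corner and satisfy the same two evolution equations; uniqueness closes the argument on $[0,T]^2$.

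The main obstacle I anticipate is the bookkeeping in the discrete step. The paper's transposed rate-matrix convention (\emph{columns} sum to zero) makes it easy to swap roles of $y$ and $\yh$ during the algebra, and the rate $\overleftarrow{Q}$ depends on $(x,t,s)$ through the very marginal $\tilde{p}$ we are trying to identify. One must therefore verify the self-consistency carefully: substituting $\tilde{p}=\overleftarrow{p}_{t,s}$ into $\overleftarrow{Q}$ and then into $\cL^{Y,*}_\leftarrow$ should reproduce $-\cL^{Y,*}$ acting at the time-reflected point, with no hidden circularity. Once the indices are handled correctly, the computation is short, and the rest of the proof is routine uniqueness.
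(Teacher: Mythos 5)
Your proposal is correct and follows essentially the same route as the paper: substitute $\tilde p(x,y,t,s)=p(x,y,T-t,T-s)$ into the backward Fokker--Planck system, use $\nabla\cdot(\tilde p\,\nabla\log\tilde p)=\Delta\tilde p$ to collapse the score-drift term in the continuous equation and the definition of $\overleftarrow{Q}$ plus the column-sum-to-zero convention in the discrete equation, match the initial condition at $(0,0)$, and conclude. The only cosmetic difference is that you invoke uniqueness of the linear Kolmogorov equations explicitly (and correctly note there is no circularity, since $\overleftarrow{Q}$ is built from the known forward marginal), whereas the paper leaves that final step implicit.
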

\begin{proof}
We will prove the result by showing that $\overline{p}(x,y, t,s) = p(x,y, T-t, T-s)$ satisfies the Fokker Planck equations given in \cref{eq:fokker_backward}. We start by showing the $\overleftarrow{X}_t$ related equation. Substituting in $p(x,y,T-t,T-s)$, we have that
\begin{align*}
    \partial_t \overline{p}(x,y,t,s) = \partial_t(p(x,y,T-t, T-s)) = - \partial_tp(x,y,T-t,T-s) =  - \cL^{X,*}p(x,y,T-t,T-s)
\end{align*}
where the last equality holds due to \cref{eq:fokker_forward}. On the right side of the equation, the expression can be simplified to:
\begin{align*}
 \cL^{X,*}_\leftarrow \overline{p}(x,y,t,s) &= \cL^{X,*}_\leftarrow p(x,y,T-t,T-s) \\
 & = \nabla \cdot (f(x,T-t) p(x,y,T-t, T-s) + \frac{1}{2}g^2(T-t) \Delta p(x,y,T-t,T-s) \\
 & \qquad \qquad - g^2(T-t) \nabla \cdot (p(x,y,T-t,T-s) \nabla \log p(x,y,T-t,T-s)) \\
 & = \nabla \cdot (f(x,T-t) p(x,y,T-t, T-s)  + \frac{1}{2}g^2(T-t) \Delta p(x,y,T-t,T-s) \\
 & \qquad \qquad - g^2(T-t) \nabla \cdot \nabla p(x,y,T-t, T-s) \\
 & =  \nabla \cdot (f(x,T-t) p(x,y,T-t, T-s) - \frac{1}{2}g^2(T-t) \Delta p(x,y,T-t,T-s) \\
 & = - \cL^{X,*}p(x,y,T-t,T-s)
\end{align*}
Therefore, we show that $ \partial_t \overline p(x,y,t,s) = \cL^{X,*}_\leftarrow \overline p(x,y,t,s)$ when $\overline{p}(x,y, t,s) = p(x,y, T-t, T-s)$. For the $\overleftarrow{Y}_s$ related equation, we have that
\begin{align*}
    \partial_s \overline{p}(x,y,t,s) = \partial_s(p(x,y,T-t,T-s)) = - \partial_s p(x,y,T-t,T-s) = -\cL^{Y,*}p(x,y,T-t,T-s)
\end{align*}
Similarly, on the right size of the equation, we have,
\begin{align*}
\cL^{Y,*}_{\leftarrow} \overline{p}(x,y,t,s) & = \cL^{Y,*}_{\leftarrow} p(x,y,T-t,T-s) \\
& = \sum_{\yh \in \mathbb{X}} \overleftarrow{Q}(x,t,s)(y, \yh) p(x,\yh,T-t,T-s) \\
& = \sum_{\yh \in \mathbb{X}, \yh \neq y} \overleftarrow{Q}(x,t,s)(y, \yh) p(x,\yh,T-t,T-s) + \overleftarrow{Q}(x,t,s)(y,y) p(x,y,T-t, T-s) \\
& = \sum_{\yh \in \mathbb{X}, \yh \neq y}\overleftarrow{Q}(x,t,s)(y, \yh) p(x,\yh,T-t,T-s) - \sum_{\yh \in \mathbb{X}, \yh \neq y} \overleftarrow{Q}(x,t,s)(\yh,y) p(x,y,T-t, T-s)
\end{align*}
where in the last step, we use that $\overleftarrow{Q}(x,t,s)(y,y) = -\sum_{\yh \in \mathbb{X}, \yh \neq y} \overleftarrow{Q}(x,t,s)(\yh,y)$. We perform such a simplification since we can only relate $\overleftarrow{Q}(x,t,s)$ and $Q_{T-s}$ on non-diagonal entries. Then, it holds that 
\begin{align*}
\cL^{Y,*}_{\leftarrow} \overline{p}(x,y,t,s) & = \sum_{\yh \in \mathbb{X}, \yh \neq y} Q_{T-s}(\yh, y) \cdot \dfrac{p(x, y, T-t, T-s)}{p(x,\yh, T-t, T-s)} p(x,\yh,T-t,T-s) \\
& \qquad \qquad -  Q_{T-s}(y, \yh) \cdot \dfrac{p(x, \yh, T-t, T-s)}{p(x,y, T-t, T-s)} p(x,y,T-t,T-s)  \\
& \sum_{\yh \in \mathbb{X}, \yh \neq y} Q_{T-s}(\yh, y) p(x,y,T-t, T-s) - Q_{T-s}(y,\yh) p(x,\yh,T-t,T-s) \\
& = \sum_{\yh \in \mathbb{X}} Q_{T-s}(\yh, y) p(x,y,T-t, T-s) - Q_{T-s}(y,\yh) p(x,\yh,T-t,T-s) \\
& = p(x,y,T-t, T-s) \cdot \big(\sum_{\yh \in \mathbb{X}} Q_{T-s}(\yh, y)\big) - \sum_{\yh \in \mathbb{X}} Q_{T-s}(y,\yh) p(x,\yh,T-t,T-s) \\
& = - \sum_{\yh \in \mathbb{X}} Q_{T-s}(y,\yh) p(x,\yh,T-t,T-s) \\
& = -\cL^{Y,*}p(x,y,T-t,T-s)
\end{align*}
where in the derivation, we use the definition of $\overleftarrow{Q}(x,t,s)$ and the fact that $\sum_{\yh \in \mathbb{X}}Q_{T-s}(\yh, y) = 0$. Therefore, we have also shown that $ \partial_t \overline p(x,y,t,s) = \cL^{Y,*}_\leftarrow \overline p(x,y,t,s)$ when $\overline{p}(x,y, t,s) = p(x,y, T-t, T-s)$. Together with the fact that the initial conditions are matched by construction, i.e.,
\begin{align*}
    \overline{p}(x,y,0,0) = p(x,y,T,T)
\end{align*}
we conclude the proof of the time-reversal argument, as well as \cref{prop:backward_text_image}.
\end{proof}

\section{Design choice for Continuous-Discrete Multimodal Diffusion Models}
\label{app:cont_disc}

In this section, we go deeper into the design choices of the newly proposed continuous-discrete multimodal diffusion models, including the choice for forward process, score parameterization, and sampling algorithms for such diffusion models.
\subsection{Choice of Forward Process}
\label{app:cont_disc_forward}
We consider the following specific choice of forward process for the Continuous-Discrete Diffusion Model, where we choose $X_t$ to be subjected to a time-rescaled Ornstein–Uhlenbeck process \citep{song2020score} and $Y_s$ to be subjected to a masked discrete diffusion model \citep{ou2024your, sahoo2024simple, shi2024simplified}. Both choices are effective when modeling unimodal distributions, prompting us to combine them for the design of multimodal diffusion models on their product space. 
\begin{equation}
\label{eq:cont_disc_forward}
   \begin{cases}
    \rd X_t = -\beta_t X_t \rd t + \sqrt{2\beta_t} \rd B_t\\
    Y_s \sim \operatorname{CTMC}(\sigma_s Q^{\mathrm{mask}}) \\
    (X_0, Y_0) \sim p_{\text{data}}(x, y) 
\end{cases}
\end{equation}
To define masked discrete diffusion models, we need to introduce an additional mask token $\mathbf{M}$ into the state space $\mathbb{X}$. Therefore, the transition rate matrix $Q^{\mathrm{mask}}$ is a transition matrix defined on the extended state space $\tilde{\mathbb{X}} = \mathbb{X} \cup \{\mathbf{M}\}$, given by
\[Q^{\mathrm{mask}} = \begin{pmatrix}
    -1 & 0 &\dots &0 \\
    0 & -1 & \dots & 0 \\
    \vdots & \vdots & \ddots & \vdots \\
    1 & 1 \dots & 1 & 0
\end{pmatrix}\] 
where the last row corresponds to $\mathbf{M}$. \\

Consider now the practical case, where $X_t \in \mathbb{R}^{d}$, and $Y_s$ consists of a sequence of tokens, i.e., $Y_s \in \mathbb{X}^{n}$. We represent $Y_s$ as $Y_s = y_1 \dots y_i \dots y_n$. 

For pure masked discrete diffusion, it has been shown that its score has a special factorization \citep{ou2024your}. This enables users to parameterize only the probability of the clean data distribution conditioned on unmasked tokens, thereby representing the score. In the following, we demonstrate that this special structure is inherited in the continuous-discrete diffusion model when the discrete part is a masked discrete diffusion, which enables a more effective score parameterization, as well as a variance-reduced version of the training objective. 

Now, consider $\boldsymbol{y} = y_1 \dots y_i \dots y_n$, and $\hat{\boldsymbol{y}} = y_1 \dots \hat{y}_i \dots y_n$, where $\boldsymbol{y}$ differs from $\hat{\boldsymbol{y}}$ only at the $i$-th position, $\hat{y}_i = \mathbf{M}$ while $y_i \neq \mathbf{M}$
\begin{proposition}
\label{prop:cont_disc_special}
Let $p(x, \boldsymbol{y}, t, s)$ be the density of \cref{eq:cont_disc_forward} at $X_t = x, Y_s = \boldsymbol{y}_s$, then the discrete score has the following form
\begin{align}
\label{eq:score_factorize}
    \dfrac{p(X_t = x_t, Y_s = \boldsymbol{y}, t, s)}{p(X_t = x_t, Y_s = \hat{\boldsymbol{y}}, t, s)} = \dfrac{e^{-\overline{\sigma}_s}}{1-e^{-\overline{\sigma}_s}} \mathbb{P}(Y_0^i = y_i \mid \boldsymbol{y}^{\mathrm{UM}}, X_t, t)
\end{align}
where $\overline{\sigma}_s = \int_{0}^{s} \sigma(\tau) \rd \tau$, $\boldsymbol{y}^{\mathrm{UM}}$ contains the unmasked tokens of $\boldsymbol{y}$.
\end{proposition}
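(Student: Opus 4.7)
The plan is to combine two structural facts about the forward process \cref{eq:cont_disc_forward}: (i) given $Y_0$, the continuous component $X_t$ and the discrete component $Y_s$ evolve independently, and (ii) under the rate $\sigma_s Q^{\mathrm{mask}}$, each position of $Y_s$ is an independent two-state absorbing jump process that either stays equal to $Y_0^j$ or transitions into $\mathbf{M}$. The first step is therefore to write both $p(X_t = x_t, Y_s = \boldsymbol{y}, t, s)$ and $p(X_t = x_t, Y_s = \hat{\boldsymbol{y}}, t, s)$ as sums over the clean discrete variable of the form $\sum_{Y_0} p(Y_0)\, p(X_t = x_t \mid Y_0, t)\, \prod_{j=1}^n p(Y_s^j = y_j \mid Y_0^j, s)$, using fact (i) to split the $X_t$ and $Y_s$ factors and fact (ii) to factor the $Y_s$ kernel across coordinates.

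Next, I would substitute the explicit per-token transition, obtained by integrating the rate $\sigma_s$: for any clean state $y \neq \mathbf{M}$, $\mathbb{P}(Y_s^j = y \mid Y_0^j = y) = e^{-\overline{\sigma}_s}$ and $\mathbb{P}(Y_s^j = \mathbf{M} \mid Y_0^j = y) = 1 - e^{-\overline{\sigma}_s}$. Since $\boldsymbol{y}$ and $\hat{\boldsymbol{y}}$ differ only at coordinate $i$, with $y_i \neq \mathbf{M}$ and $\hat{y}_i = \mathbf{M}$, the only factor that differs between the two sums is the position-$i$ contribution: it equals $e^{-\overline{\sigma}_s}\, \mathbb{I}[Y_0^i = y_i]$ in the numerator and the constant $1 - e^{-\overline{\sigma}_s}$ in the denominator. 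Pulling these $Y_0$-independent coefficients outside the sums and applying Bayes' rule collapses the resulting ratio of sums into a single conditional probability:
\begin{align*}
\frac{p(X_t = x_t, Y_s = \boldsymbol{y}, t, s)}{p(X_t = x_t, Y_s = \hat{\boldsymbol{y}}, t, s)} = \frac{e^{-\overline{\sigma}_s}}{1 - e^{-\overline{\sigma}_s}} \cdot \mathbb{P}\!\left(Y_0^i = y_i \,\big|\, X_t = x_t,\, Y_s^{\setminus i} = y^{\setminus i}\right).
\end{align*}

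The final and most delicate step is to collapse the conditioning on the entire $Y_s^{\setminus i}$ down to conditioning on only the unmasked tokens $\boldsymbol{y}^{\mathrm{UM}}$. The key observation is that for every position $j$ with $Y_s^j = \mathbf{M}$, the likelihood factor $\mathbb{P}(Y_s^j = \mathbf{M} \mid Y_0^j)$ equals $1 - e^{-\overline{\sigma}_s}$ uniformly in $Y_0^j$ across clean states; these constants cancel between numerator and denominator inside the Bayes posterior, while unmasked positions contribute indicator factors $\mathbb{I}[Y_0^j = y_j]$ that pin down $Y_0^j$ exactly. Hence the posterior depends on $Y_s^{\setminus i}$ only through its unmasked entries, yielding the claimed identity (with $\boldsymbol{y}^{\mathrm{UM}}$ interpreted as the tokens that are unmasked in both $\boldsymbol{y}$ and $\hat{\boldsymbol{y}}$). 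I expect this ``masks-are-uninformative'' argument to be the main conceptual obstacle; the rest of the proof is a direct substitution of the closed-form masked kernel into the ratio.
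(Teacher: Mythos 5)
Your proposal is correct and follows essentially the same route as the paper's proof: both rest on the explicit solution of the per-token masking CTMC (probability $e^{-\overline{\sigma}_s}$ of surviving, $1-e^{-\overline{\sigma}_s}$ of being masked), Bayes' rule to isolate the posterior $\mathbb{P}(Y_0^i = y_i \mid \cdot)$ with the prefactor $e^{-\overline{\sigma}_s}/(1-e^{-\overline{\sigma}_s})$ pulled out, and the observation that masked positions contribute likelihood factors that are constant in $Y_0$ and therefore drop out of the conditioning. The only difference is presentational: you expand the joint density as a mixture over the clean data and cancel the constant mask factors globally in one step, whereas the paper peels masked positions out of the conditioning event one at a time via a conditional-probability identity --- the underlying cancellation is identical.
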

\begin{proof}
    We begin by noting that, given the forward process defined in \cref{eq:cont_disc_forward}. We can solve the forward process analytically as follows:
    \begin{align}\label{eq:sol_ctmc}
        \P(Y^i_s = y \mid A) &= 
        \begin{cases}
            e^{-\overline{\sigma}_s} \cdot \P(Y^i_0 = y \mid A), & y \neq \mathbf{M} \\
            1 - e^{-\overline{\sigma}_s}, & y=\mathbf{M}
        \end{cases}
    \end{align}
    For any event $A$. We will use this fact several times in our proof. This is true since when $y \neq \mathbf{M}$, 
    \begin{align*}
    \P(Y_s^i  = y \mid A) = \P(Y_s^i = y \mid Y_0^i = y, A) \cdot \P(Y_0^i = y \mid A)
    \end{align*}
    Our proof consists of two steps. First, we demonstrate that the diffusion process can be factored into a conditional probability and a time-dependent term. Secondly, we demonstrate that we can further simplify this conditional probability to contain only probabilities in terms of the clean data distribution. 
    
    \textbf{Step 1:} The discrete score in \cref{eq:score_factorize} is given by:
    \begin{align*}
        \frac{
        \P(X_t = x_t, Y_s^{1} = y_1, \dots, Y_s^i = y_i, \dots, Y_s^n = y_n)
        }{
        \P(X_t = x_t, Y_s^{1} = y_1, \dots, Y_s^i = \mathbf{M}, \dots, Y_s^n = y_n)
        }
    \end{align*}
    To simplify the notation, we define the following notation $A_i = \{Y_s^{k} = y_k : k \neq i\} \cap \{X_t = x_t\}$. Then, using Bayes' rule, we can rewrite the discrete score as:
    \begin{align*}
        \dfrac{
        \P(Y_s^i = y_i \mid A_i) \P(A_i)
        }{\P(Y_s^i = \mathbf{M} \mid A_i) \P(A_i)}=  
        \dfrac{
        \P(Y_s^i = y_i \mid A_i)
        }{\P(Y_s^i = \mathbf{M} \mid A_i)}  = \dfrac{e^{-\overline{\sigma}_s}}{1-e^{-\overline{\sigma}_s}} \cdot \P(Y_0^i = y_i \mid A_i)
    \end{align*}
    \textbf{Step 2:} We now show that we can simplify $A_i$ by removing the conditioning on tokens that are masked. But firstly we do a simple calculation that will come in handy, given events $A,B,C$ one has that:
    \begin{align*}
        \P(A \mid B\cap C) &= \frac{\P(A\cap B \cap C)}{\P(B\cap C)} 
        = \frac{\P (A\cap B | C) \P(C) }{\P(B\cap C)}  
        = \frac{\P(C | A\cap B) \P(A \cap B) \P(C)}{\P(C) \P(B\cap C)} \\
        &= \frac{\P(A \cap B) \P(C | A \cap B)}{\P(B)) \P(C|B)} = \frac{\P(A | B) \P(C | A \cap B)}{\P(C|B)} \\
    \end{align*}
    Now, assume that $l$ is a position such that $y_l = \mathbf{M}$. We denote $A_i^l = \{Y_s^{k} = y_k : k \neq i,l\} \cap \{X_t = x_t\}$ the event given by the remaining tokens. We can then use the calculation above to write:
    \begin{align*}
        \P(Y_0^i = y_i \mid A_i) &= \P(Y_0^i = y_i \mid A_i^l, Y^l_s = M) \\
        &= \dfrac{\P(\{Y_0^i = y_i\} \cap A_i^l \cap \{ Y^l_s = M\})}{\P(A_i^l \cap \{Y^l_s = M\}}) \\
        &= \P(Y_0^i = y_i \mid A_i^l) \cdot \dfrac{\P(Y_s^l = \mathbf{M} \mid \{Y_0^i = y_i\} \cap A_i^l)}{\P(Y_s^l = \mathbf{M} \mid A_i^l)} \\
        &= \P(Y_0^i = y_i \mid A_i^l) 
    \end{align*}
    where in the last step, we used \cref{eq:sol_ctmc}. The computation above shows that we can remove any events relevant to masked tokens, and repeating this process yields the result that the event being conditioned on only contains the unmasked part. This finishes the proof.
\end{proof}
With \cref{prop:cont_disc_special}, the score entropy in the training objective described in \cref{prop:cont_disc_simp} can be further simplified to cross-entropy loss, as is widely discussed in the literature of masked discrete diffusion model training \citep{ou2024your, sahoo2024simple, shi2024simplified}. \cref{prop:cont_disc_special} also implies that when designing the score network backbone, the discrete score does not need the input of time $s$. However, it's still dependent on the time $t$ of the continuous modality. This is a notable difference from the pure masked discrete diffusion models considered in the literature \citep{ou2024your, nie2024scaling}.

\subsection{Noisy Guidance}
In \cref{alg:noisy_guidance_cont}, we present the detailed algorithm for using Noisy guidance for continuous score. In \cref{alg:noisy_guidance_disc}, we present the detailed algorithm for using Noisy guidance for discrete score, whose computation is not mentioned in the main text. Note that in \cref{alg:noisy_guidance_disc}, instead of directly doing geometric average of $s^{\text{uncond}, y}_\theta$ and $s^{\text{cond},y}_\theta$ as is suggested in \citet{nisonoff2024unlocking}, we compute a arthimetic average of the corresponding logits, then use softmax to calculate the actual guided score. Such a practice is considered in \citet{chang2022maskgit}, and recently it has been shown to have theoretical advantages. 
\begin{algorithm}[H]
\caption{Noisy Guidance for continuous score}
\label{alg:noisy_guidance_cont}
\begin{algorithmic}[1]
\Require $x_t, t$ : noisy image with noise level, model: $s_\theta(x_t,y_s,t,s,\omega)$, $y_0$ : clean text, $\omega$ : Guidance Strength, $\sigma$ : Conditioning Noise Level
\Ensure $s_\theta^x$ : Guided continuous score
\State $y^{\text{noisy}}_{\sigma} \sim p_{\sigma |0}(y|y_0)$
\State $s^{\text{uncond}, x}_\theta \gets s_{\theta}(x, y^{\text{noisy}}_\sigma, t, \sigma,\omega_t)$, $s^{\text{cond}, x}_\theta \gets s_{\theta}(x_t, y_0, t, 0,\omega_t)$
\State $s_\theta^{x} \gets \omega \cdot s^{\text{cond},x}_\theta + (1-\omega) \cdot s^{\text{uncond},x}_\theta$
\State \Return  $s^x_\theta$
\end{algorithmic}
\end{algorithm}
\begin{algorithm}[H]
\caption{Noisy Guidance for discrete score}
\label{alg:noisy_guidance_disc}
\begin{algorithmic}[1]
\Require $y_s, s$ : noisy text with noise level, model: $s_\theta(x_t,y_s,t,s,\omega)$, $x_0$ : clean image, $\omega$ : Guidance Strength, $\sigma$ : Conditioning Noise Level
\Ensure $s_\theta^y$ : Guided discrete score
\State $x^{\text{noisy}}_{\sigma} \sim p_{\sigma|0}(x|x_0)$
\State $s^{\text{uncond},y}_\theta \gets s_{\theta}(x^{\text{noisy}}_\sigma, y_s, \sigma, s,\omega_s)$, $s^{\text{cond},y}_\theta \gets s_{\theta}(x_0, y_s, 0, s, \omega_s)$
\State $s^{\text{uncond},y}_\theta = \operatorname{softmax}(\ell^{\mathrm{uncond}})$, $s^{\text{cond},y}_\theta = \operatorname{softmax}(\ell^{\mathrm{cond}})$
\State $s_\theta^{y} \gets \operatorname{softmax}\big(\omega \cdot \ell^{\mathrm{cond}} + (1-\omega) \cdot \ell^{\mathrm{uncond}}\big)$
\State \Return  $s_\theta^y$
\end{algorithmic}
\end{algorithm}

\subsection{Samplers}
For inference of continuous-discrete multimodal diffusion models, we consider the following samplers. For conditional generation of discrete or continuous modality, see \cref{alg:discrete_sampler} and \cref{alg:continuous_sampler}. Among all the pseudo code, we set times $= (N-i)/N$, $i = 0,1, \dots, N$ as the inference times and $\rd t = -1/N$ as the time steps, where $N$ is the number of total inference/discretization steps. We use $\tau$-leaping for discrete modality sampling and Heun's method for continuous modality sampling. All the inference algorithms are written with the presence of guidance and guidance intervals. 

\begin{minipage}[t]{0.48 \linewidth}%
\begin{algorithm}[H]
\caption{Discrete Sampler with $\tau$-leaping}
\label{alg:discrete_sampler}
\begin{algorithmic}[1]
\Require $N$ : Number of steps, $\omega$ : Guidance Strength\\
$[a,b]$ : Guidance  Interval, model : $s_\theta(x_t,y_s,t,s,\omega)$ \\
$x$ : a clean image condition
\Ensure $y_0 \sim p_{\text{data}}(\cdot |x)$
\State $y_t \gets [\mathbf{M}, \dots, \mathbf{M}]$ 
\For{$t$ in times}
    \State $\omega_t = \omega \text{ if } t\in[a,b] \text{ else } 1.$ 
    \State $s^x_\theta, s^y_\theta \gets s_{\theta}(x, y_t, 0, t,\omega_t)$ 
    \State $y_t \gets \tau\text{-leaping}(s^y_\theta, y_t, t, |\rd t|)$
\EndFor
    \State \Return $y_0$

\end{algorithmic}
\end{algorithm}

\end{minipage}%
\hfill
\begin{minipage}[t]{0.48 \linewidth}%
\begin{algorithm}[H]
\caption{Continuous Sampler with Heun's method}
\label{alg:continuous_sampler}
\begin{algorithmic}[1]
\Require $N$ : Number of steps, $\omega$ : Guidance Strength \\
$[a,b]$ : Guidance  Interval, model : $s_\theta(x_t,y_s,t,s,\omega)$ \\
$y$ : A clean text condition 
\Ensure $x_0 \sim p_{\text{data}}(\cdot |y)$
\State $x_t \gets \mathcal{N}(0,I)$ 
\For{$t$ in times}
    \State $\omega_t = \omega \text{ if } t\in[a,b] \text{ else } 1.$ 
    \State $s^x_\theta, s^y_\theta \gets s_{\theta}(x_t, y, t, 0,\omega_t)$ 
    \State $v_{\text{old}} = f(x_t,t)- \frac{1}{2}  g^2(t) s^x_\theta$  
    \State $\hat{x} \gets x_t + v_{\text{old}}\rd t$, \quad $\hat{s}^x_\theta, \hat{s}^y_\theta \gets s_{\theta}(\hat{x}, y, t + \rd t, 0,\omega_t)$ 
    \State $v_{\text{new}} = f(\hat{x}_t,t)- \frac{1}{2}  g^2(t)\hat{s}^x_\theta$  
    \State $x_t \gets x_t + \frac{1}{2}\cdot (v_{\text{old}} + v_{\text{new}})\rd t$
\EndFor
    \State \Return $x_0$
\end{algorithmic}
\end{algorithm}
\end{minipage}%

We describe one multimodal sampler for the joint generation with a continuous-discrete multimodal diffusion model in \cref{alg:multmodal_sampler}. Essentially, \cref{alg:multmodal_sampler} combines $\tau$-leaping for discrete modality and Heun's method for continuous modality, each depicted in \cref{alg:discrete_sampler} and \cref{alg:continuous_sampler}. However, these choices are selected without being heavily optimized to tailor to this case, and potentially, there exist much more effective and efficient samplers. For example, note that Heun's method is a second-order ODE sampler, while $\tau$-leaping is usually considered to be a first-order CTMC sampling algorithm \citep{ren2024discrete}. This means that the discrete score obtained at the mid-point $\hat x$ during the inference step of Heun's method is not being used, which causes a waste of computation. A potential way to improve is to replace the first-order discrete sampler with a second-order variant, such as the $\theta$-Trap algorithm introduced in \citet{ren2025fast}. We leave this direction for future investigation.

\begin{algorithm}[H]
\caption{Multimodal Sampler with $\tau$-leaping and Heun's Method}
\label{alg:multmodal_sampler}
\begin{algorithmic}[1]
\Require $N$ : Number of steps, $\omega$ : Guidance Strength, $[a,b]$ : Guidance  Interval, model : $s_\theta(x_t,y_s,t,s,\omega)$ 
\Ensure $x_0,y_0 \sim p_{\text{data}}$
\State $x_t \gets \mathcal{N}(0,I)$, $y_t \gets[\mathbf{M}, \dots, \mathbf{M}]$ 
\For{$t$ in times}
    \State $\omega_t = \omega \text{ if } t\in[a,b] \text{ else } 1.$ 
    \State $s^x_\theta, s^y_\theta \gets s_{\theta}(x_t, y_t, t, t,\omega_t)$, \quad $v_{\text{old}} = f(x_t,t)- \frac{1}{2}  g^2(t) s^x_\theta$  
    \State $\hat{x} \gets x_t + v_{\text{old}}\rd t$
    \State $\hat{s}^x_\theta, \hat{s}^y_\theta \gets s_{\theta}(\hat{x}, y_t, t+\rd t, t,\omega_t)$, \quad $v_{\text{new}} = f(\hat{x}_t,t)- \frac{1}{2}  g^2(t)\hat{s}^x_\theta$  
    \State $x_t \gets x_t + \frac{1}{2}\cdot (v_{\text{old}} + v_{\text{new}})\rd t$, \quad  $y_t \gets \tau\text{-leaping}(s^y_\theta, y_t, t, |\rd t|)$
\EndFor
    \State \Return $x_0, y_0$

\end{algorithmic}
\end{algorithm}

\section{Experimental Details on Text-Image Generation}
\label{app:text_image}
\subsection{Choice of Forward Process}
We consider the same forward process discussed in \cref{app:cont_disc_forward}, with $\beta_t$ and $\sigma_s$ given as
\begin{align*}
& \beta_t = 500 \cdot (\sqrt{\beta_{\mathrm{start}}} (1-t) + t \sqrt{\beta_{\mathrm{end}}})^2 \\
& \sigma_s = \frac{1-\delta}{1-(1-\delta)s} \\
& \beta_{\mathrm{start}} = 0.00085, \quad \beta_{\text{end}}=0.0120, \quad \delta = 10^{-5}
\end{align*}

\subsection{Training Strategy}
We divide our training into several stages. This is a standard practice for training vision-language models. In text-to-image diffusion models, a pretrained text encoder is used to achieve alignment between the text semantics and the image features. Popular choices in the literature are using CLIP or T5 as text encoders \cite{esser2024scaling}. However, in our use case, we require training on masked text, for which the availability of pretrained encoders is limited. For this reason, we decided not to use a pretrained text encoder. This has the advantage that we don't rely on any pretraining, which reduces the computational requirements of our model.

\textbf{Stage 1: Text-image Alignment}  During this stage, we train both the joint embedding and the continuous decoder. We allow noisy text to be received as input to our model, meaning that we train on all possible combinations of $ s$ and $ t$, but without worrying about the text prediction task. We present all training hyperparameters across all stages below.

\textbf{Stage 2: Text prediction and Image Improvement} In this second stage, we freeze the joint embedding. We found that by doing so, we can simplify the training. The joint embedding is now capable of generating meaningful latent representations, which can then be used to predict clean text from masked tokens and a latent image representation. 

\textbf{Stage 3: Multimodal Generation} Finally, we train both the image and text decoders. This is useful because the image decoder hasn't been trained specifically to predict from the frozen joint embeddings. Training the text decoder is not necessary, but we can get some extra training time by doing so. After this stage, our model is now capable of performing all tasks. 

\textbf{Optional - Stage 4: Fine-tuning on downstream Tasks} When necessary, our models can be fine-tuned on downstream tasks to improve the performance.

\subsection{Sampling}
For sampling, we use the samplers described in \cref{alg:discrete_sampler}, \cref{alg:continuous_sampler}, and \cref{alg:multmodal_sampler}, where we do not use guidance for the discrete component. Our default values for sampling are presented in Table \ref{tab:sampling_hyperparams}. 
\begin{table}[!t]
    \centering
    \renewcommand{\arraystretch}{1.2}
    \centering
    \caption{Hyperparameters for inference of different tasks}
    \label{tab:sampling_hyperparams}
    \begin{tabular}{llll}
        \toprule
        \textbf{Parameter} & \textbf{text to image} & \textbf{image to text} & \textbf{joint} \\
        \midrule
         Number of Steps & $50$ & $50$ & $50$ \\
         Guidance Scale & $5.0$ & $1.0$ & $5.0$ \\
         Guidance Interval & $[0.3,0.8]$ & - & $[0.3,0.8]$\\
         Condition Noise Level & $0.77$ & - & $1.0$ \\
         Early Stopping & $10^{-5}$ & $10^{-5}$ & $10^{-5}$ \\
        \bottomrule
    \end{tabular}
    \vspace{-1em}
\end{table}

\begin{table}[!t]
    \centering
    \renewcommand{\arraystretch}{1.2}
    
    \begin{minipage}[t]{0.45\linewidth}%
        \centering
        \caption{Model hyperparameters}
        \label{tab:model_hyperparams}
        \begin{tabular}[b]{ll}
            \toprule
            \textbf{parameter} & \textbf{value} \\
            \midrule
            patch size & 2 \\
            joint depth & 8 \\
            text depth & 6 \\
            image depth & 6 \\
            dim text & 1024 \\
            dim image & 1024 \\
            dim joint attention & 1024 \\
            QK RMSnorm & true \\
            dimension per head & 64 \\
            number of heads & 8 \\
            \bottomrule
        \end{tabular}
    \end{minipage}%
    \hfill
    \begin{minipage}[t]{0.45\linewidth}%
        \centering
            \caption{Training Hyperparameters}
        \label{tab:training_hyperparams}
        \begin{tabular}[b]{llll}
            \toprule
            \textbf{Parameter} & \textbf{Stage 1} & \textbf{Stage 2} & \textbf{Stage 3} \\
            \midrule
            Num Itr & 600K & 200K & 140K \\
            EMA-$\beta$ & .99999 & .9999 & .9999 \\
            Batch Size & 256 & 512 & 512 \\
            Optimizer & AdamW & AdamW & AdamW \\
            Learning Rate & 2e-4 & 2e-4 & 2e-4 \\
            Adam-$\beta$'s & [.9, .9] & [.9, .9] & [.9, .9] \\
            Weight Decay & 0.03 & 0.03 & 0.03 \\
            \bottomrule
        \end{tabular}
    \end{minipage}%
\end{table}

\subsection{Hyperparamters}
We include the network and training hyperparameters in table \ref{tab:model_hyperparams} and \ref{tab:training_hyperparams} respectively. The total model contains $578$M parameters and the joint embedding plus a single modality is about $481$M.

\section{Experimental Details on Mixed-type Tabular Data Synthesis}
\subsection{Choice of Forward Process}
We consider the same forward process discussed in \cref{app:cont_disc_forward}, with $\beta_t$ and $\sigma_s$ given as
\begin{align*}
& \beta_t = \beta_{\mathrm{start}} (1-t) + t \beta_{\mathrm{end}} \\
& \sigma_s = \frac{1-\delta}{1-(1-\delta)s} \\
& \beta_{\mathrm{start}} = 0.1, \quad \beta_{\text{end}} = 20 \quad \delta = 10^{-5}
\end{align*}

\subsection{Detailed description of datasets}
\label{app:tabular_description}
We evaluate our model on six tabular datasets (\url{https://archive.ics.uci.edu/datasets}): Adult, Default, Shoppers, Magic, Beijing, and News. Beijing and News datasets are designed for regression task while the other four datasets are for the classification task.
\begin{table*}[h]
\caption{Statistics for the tabular datasets.}
\label{tab:tabular_dataset_stats}
\begin{center}
\resizebox{0.7\linewidth}{!}{
\begin{tabular}{@{}c|ccccccc@{}}
\toprule
Dataset &\#Rows &\#Numerical &\#Categorical &\#Training &\#Test &Task \\ \midrule
Adult    &48,842 &6  &9  &32,561 &16,281 &Classification \\
Default  &30,000 &14 &11 &27,000 &3,000  &Classification \\ 
Shoppers &12,330 &10 &8  &11,097  &1,233  &Classification \\
Magic    &19,019 &10 &1  &17,117 &1,902  &Classification \\
Beijing  &41,757 &7  &5  &37,581 &4,176  &Regression     \\
News     &39,644 &46 &2  &35,679 &3,965  &Regression     \\ \bottomrule
\end{tabular}
}
\end{center}
\end{table*}

\subsection{Model architecture and training details}
\label{app:tabular_training}
The embedding for every numerical feature in the data is a summation of its type embedding and scale embedding. All numerical values share the same type embedding, which is a look-up table of size number of numerical features by the hidden dimension. Each numerical value is passed through a 3-layer MLP that expands a single numerical value to an embedding vector with the size of the hidden dimension. Categorical features in the data are individually embedded through a list of look-up embedding tables. The look-up embedding table has the size of the number of categories + 1 (with one extra mask token) by the hidden dimension. Then all the categorical look-up embedding tables are concatenated and treated as the categorical embedding for this dataset. The building block of our model is adopted from DiT \citep{peebles2023scalable}. The sinusoidal timestep is passed through a 2-layer MLP before input into the DiT blocks. After adding the integer positional embedding to the embedding, numerical embeddings and categorical embeddings are concatenated and input into DiT blocks. We used 4 DiT blocks with hidden dimension = 24 and number of heads = 4. The final layer splits the output into the numerical latent and a list of individual categorical latent. The latent vectors are passed into 3-layer MLPs to obtain the corresponding scores. 

The noise perturbation is the variance preserving (VP) SDE. The training loss a weighted summation of the score matching loss for numerical features and score entropy loss for categorical features. The weighting parameter is chosen to balance the numerical and discrete loss. The optimizer is AdamW with learning rate = $10^{-3}$, weight decay = 0.03, $\beta = (0.9, 0.9)$. A linear rate warm-up scheduler is used with warmup steps = 200. The training batch size is 2048. We used EMA model for final evaluation. During sampling, we use Euler method for the continuous diffusion and tau-leaping for the discrete diffusion.

\subsection{Evaluation}
\label{app:tabular_eval}

We compare our model with five most recent generative models that are specifically designed to operate on tabular data: GOOGLE \citep{liu2023goggle}, StaSy \citep{kim2022stasy}, TabDDPM \citep{kotelnikov2023tabddpm}, CoDi \citep{lee2023codi}, and T{\small AB}S{\small YN} \citep{zhang2023mixed}. GOOGLE is a VAE-based method while the other four are all diffusion-based methods. 

\begin{itemize}
    \item \textbf{Shape} is a metric computed via the Kolmogorov-Smirnov Test between continuous distributions and Total Variation Distance between the probabilities for categorical values, measures, and compares the column-wise density between real and synthetic data. 
    \item \textbf{Trend} is a metric that captures pair-wise column correlation by computing Pearson correlation for numerical columns, contingency similarity for categorical columns, and contingency similarity between bucketed numerical values and categorical values. 
    \item \textbf{MLE} is the testing accuracy of the classification or regression task on real data after training an XGBoost Classifier or an XGBoost Regressor on the synthetic tabular data. For detailed training and optimization pipelines of MLE metric, please refer to the standardized pipeline proposed in \citet{zhang2023mixed}.
    \item \textbf{$\alpha$-precision} evaluates if the synthetic data are from the same distribution as real-world data. 
    \item \textbf{$\beta$-recall} quantifies whether the synthetic data can cover the entire distribution of the real data. 
\end{itemize}

\begin{table*}[!h]
\begin{center}
\caption{Performance on the \textbf{Shape} metric in percentage (\%). Higher values indicate better performance. Best performance in \textbf{bold}. Second best in \underline{underline}.}
\label{tab:shape}
\scalebox{0.85}{
\begin{tabular}{@{}cc|cccccc@{}}
\toprule

Methods & \#Parameters & Adult & Default & Shoppers & Magic & Beijing & News \\ \midrule
GOOGLE &$\sim$ 5.6M &83.03 &82.98 &77.67 &98.10 &83.07 &74.68 \\
STaSy  &$\sim$ 10.3M &88.71\scriptsize{$\pm$0.06} &94.23\scriptsize{$\pm$0.06} &90.63\scriptsize{$\pm$0.09} &93.71\scriptsize{$\pm$0.13} &93.29\scriptsize{$\pm$0.03} &93.11\scriptsize{$\pm$0.03} \\
CoDi &$\sim$ 25.0M &78.62 \scriptsize{$\pm$0.06} &84.23\scriptsize{$\pm$0.07} &68.16\scriptsize{$\pm$0.05} &88.44\scriptsize{$\pm$0.26} &83.06\scriptsize{$\pm$0.02} &67.73\scriptsize{$\pm$0.04}  \\
TabDDPM &$\sim$ 11.7M &98.25 \scriptsize{$\pm$0.03} &98.43\scriptsize{$\pm$0.08} &97.28\scriptsize{$\pm$0.13} &98.99\scriptsize{$\pm$0.09} &\underline{98.70}\scriptsize{$\pm$0.03} &21.25\scriptsize{$\pm$0.01}  \\
T{\small AB}S{\small YN}  &$\sim$ 10.7M &\underline{99.42} \scriptsize{$\pm$0.06} &\underline{99.15}\scriptsize{$\pm$0.04} &\textbf{98.57}\scriptsize{$\pm$0.24} &\textbf{99.12}\scriptsize{$\pm$0.09} &\textbf{98.88}\scriptsize{$\pm$0.05} &\textbf{98.36}\scriptsize{$\pm$0.04}  \\
T{\small AB}S{\small YN} (reproduced)  &$\sim$ 10.7M &99.29\scriptsize{$\pm$0.06} &97.12 \scriptsize{$\pm$0.09} &98.36\scriptsize{$\pm$0.10} &\underline{99.02}\scriptsize{$\pm$0.10} &96.35\scriptsize{$\pm$0.10} &\underline{98.09}\scriptsize{$\pm$0.03} \\
\midrule
Our model &$\sim$ \textbf{64K} &\textbf{99.47}\scriptsize{$\pm$0.04} &\textbf{99.36}\scriptsize{$\pm$0.09} &\underline{98.50}\scriptsize{$\pm$0.07} &98.96\scriptsize{$\pm$0.16} &97.94\scriptsize{$\pm$0.06} & 96.80\scriptsize{$\pm$0.05}
\\ \bottomrule
\end{tabular}
}
\end{center}
\vspace{-0.5cm}

\end{table*}

\begin{table*}[!h]
\begin{center}
\caption{Performance on the $\alpha$\textbf{-precision} metric in percentage (\%). Higher values indicate better performance. Best performance in \textbf{bold}. Second best in \underline{underline}.}
\label{tab:tabular_precision}
\scalebox{0.85}{
\begin{tabular}{@{}cc|cccccc@{}}
\toprule

Methods & \#Parameters & Adult & Default & Shoppers & Magic & Beijing & News \\ \midrule
GOOGLE &$\sim$ 5.6M  &50.68 &68.89 &86.95 &90.88 &88.81 &86.41 
\\
STaSy  &$\sim$ 10.3M &82.87\scriptsize{$\pm$0.26}  &90.48\scriptsize{$\pm$0.11} &89.65\scriptsize{$\pm$0.25} &86.56\scriptsize{$\pm$0.19} &89.16\scriptsize{$\pm$0.12} &94.76\scriptsize{$\pm$0.33}
\\
CoDi &$\sim$ 25.0M &77.58\scriptsize{$\pm$0.45} &82.38\scriptsize{$\pm$0.15} &94.95\scriptsize{$\pm$0.35} &85.01\scriptsize{$\pm$0.36} &98.13\scriptsize{$\pm$0.38} &87.15\scriptsize{$\pm$0.12}
\\
TabDDPM &$\sim$ 11.7M &96.36\scriptsize{$\pm$0.20} &97.59\scriptsize{$\pm$0.36} &88.55\scriptsize{$\pm$0.68} &98.59\scriptsize{$\pm$0.17} &97.93\scriptsize{$\pm$0.30} &0.00\scriptsize{$\pm$0.00}
\\
T{\small AB}S{\small YN}  &$\sim$ 10.7M &\textbf{99.52}\scriptsize{$\pm$0.10} &\underline{99.26}\scriptsize{$\pm$0.27} &\underline{99.16}\scriptsize{$\pm$0.22} &\textbf{99.38}\scriptsize{$\pm$0.27} &98.47\scriptsize{$\pm$0.10} &\underline{96.80}\scriptsize{$\pm$0.25}
\\
T{\small AB}S{\small YN} (reproduced)  &$\sim$ 10.7M &99.32\scriptsize{$\pm$0.22} &95.57\scriptsize{$\pm$0.33}	&\textbf{99.22}\scriptsize{$\pm$0.31}	&\underline{99.21}\scriptsize{$\pm$0.27}	&\textbf{98.87}\scriptsize{$\pm$0.15}	&96.30\scriptsize{$\pm$0.28}

\\
\midrule
Our model &$\sim$ \textbf{64K} &\underline{99.47}\scriptsize{$\pm$0.17}
&\textbf{99.47}\scriptsize{$\pm$0.21}
&98.78\scriptsize{$\pm$0.42}
&98.75\scriptsize{$\pm$0.36}
&\underline{98.49}\scriptsize{$\pm$0.24}
&\textbf{97.47}\scriptsize{$\pm$0.27}
\\ \bottomrule
\end{tabular}
}
\end{center}
\vspace{-0.5cm}

\end{table*}

\begin{table*}[!h]
\begin{center}
\caption{Performance on the $\beta$\textbf{-recall} metric in percentage (\%). Higher values indicate better performance. Best performance in \textbf{bold}. Second best in \underline{underline}.}
\label{tab:tabular_recall}
\scalebox{0.85}{
\begin{tabular}{@{}cc|cccccc@{}}
\toprule

Methods & \#Parameters & Adult & Default & Shoppers & Magic & Beijing & News \\ \midrule
GOOGLE &$\sim$ 5.6M  &8.80 &14.38 &9.79 &9.88 &19.87 &2.03
\\
STaSy  &$\sim$ 10.3M 
&29.21\scriptsize{$\pm$0.34} 
&39.31\scriptsize{$\pm$0.39} 
&37.24\scriptsize{$\pm$0.45} 
&\textbf{53.97}\scriptsize{$\pm$0.57} 
&54.79\scriptsize{$\pm$0.18} 
&39.42\scriptsize{$\pm$0.32}
\\
CoDi &$\sim$ 25.0M 
&9.20\scriptsize{$\pm$0.15} 
&19.94\scriptsize{$\pm$0.22} 
&20.82\scriptsize{$\pm$0.23} 
&\underline{50.56}\scriptsize{$\pm$0.31} 
&52.19\scriptsize{$\pm$0.12} 
&34.40\scriptsize{$\pm$0.30}
\\
TabDDPM &$\sim$ 11.7M 
&47.05\scriptsize{$\pm$0.25} 
&47.83\scriptsize{$\pm$0.35} 
&47.79\scriptsize{$\pm$0.25} 
&48.46\scriptsize{$\pm$0.42} 
&\underline{56.92}\scriptsize{$\pm$0.13} 
&0.00\scriptsize{$\pm$0.00}
\\

T{\small AB}S{\small YN}  &$\sim$ 10.7M 
&47.56\scriptsize{$\pm$0.22} 	
&\underline{48.00}\scriptsize{$\pm$0.35} 	
&\underline{48.95}\scriptsize{$\pm$0.28} 	 
&48.03\scriptsize{$\pm$0.23} 	
&55.84\scriptsize{$\pm$0.19} 	
&\textbf{45.04}\scriptsize{$\pm$0.34} 	
\\
T{\small AB}S{\small YN} (reproduced)  &$\sim$ 10.7M 
&\underline{47.75}\scriptsize{$\pm$0.21} 	
&42.95\scriptsize{$\pm$0.30} 	
&47.57\scriptsize{$\pm$0.44} 	
&47.92\scriptsize{$\pm$0.28} 	
&49.72\scriptsize{$\pm$0.27} 	
&44.37\scriptsize{$\pm$0.22} 
\\
\midrule
Our model &$\sim$ \textbf{64K} 
&\textbf{49.65}\scriptsize{$\pm$0.26}
&\textbf{48.29}\scriptsize{$\pm$0.32}
&\textbf{51.25}\scriptsize{$\pm$0.50}
&47.66\scriptsize{$\pm$0.38}
&\textbf{57.44}\scriptsize{$\pm$0.20}
&\underline{44.58}\scriptsize{$\pm$0.27}
\\ \bottomrule
\end{tabular}
}
\end{center}
\vspace{-0.5cm}

\end{table*}
\section{Additional Experiment on Riemannian-Discrete Multimodal Diffusion Model}
\label{app:riem_discrete}
In this section, we demonstrate another application of our proposed multimodal diffusion model framework by focusing on the combination of Riemannian and discrete diffusion models on the state space $\mathcal{M} \times \mathbb{X}$, where $\mathcal{M}$ is a Riemannian manifold and $\mathbb{X}$ is a finite state space. We will introduce the method and validate it on a toy example consisting of synthetic data on $\mathsf{SO}(3) \times \mathbb{X}$.
\subsection{Riemannian-Discrete Multimodal Diffusion Model}
We consider the setting where the target data distribution $p_{\mathrm{data}}(x,y)$ is defined on $\mathsf{SO}(3) \times \mathbb{X}$, where $x \in \mathsf{SO}(3)$ and $y$ is a discrete label in $\mathbb{X}$. Since $\mathsf{SO}(3)$ is a compact manifold, we choose the following as the forward process,
\begin{equation}
\label{eq:riem_disc_forward}
   \begin{cases}
    \rd X_t = \rd B^{\mathcal{M}}_t\\
    Y_s \sim \operatorname{CTMC}(Q_s) \\
    (X_0, Y_0) \sim p_{\text{data}}(x, y) 
\end{cases}
\end{equation}

\begin{figure*}[!t]
    \centering
    \includegraphics[width=0.9\textwidth]{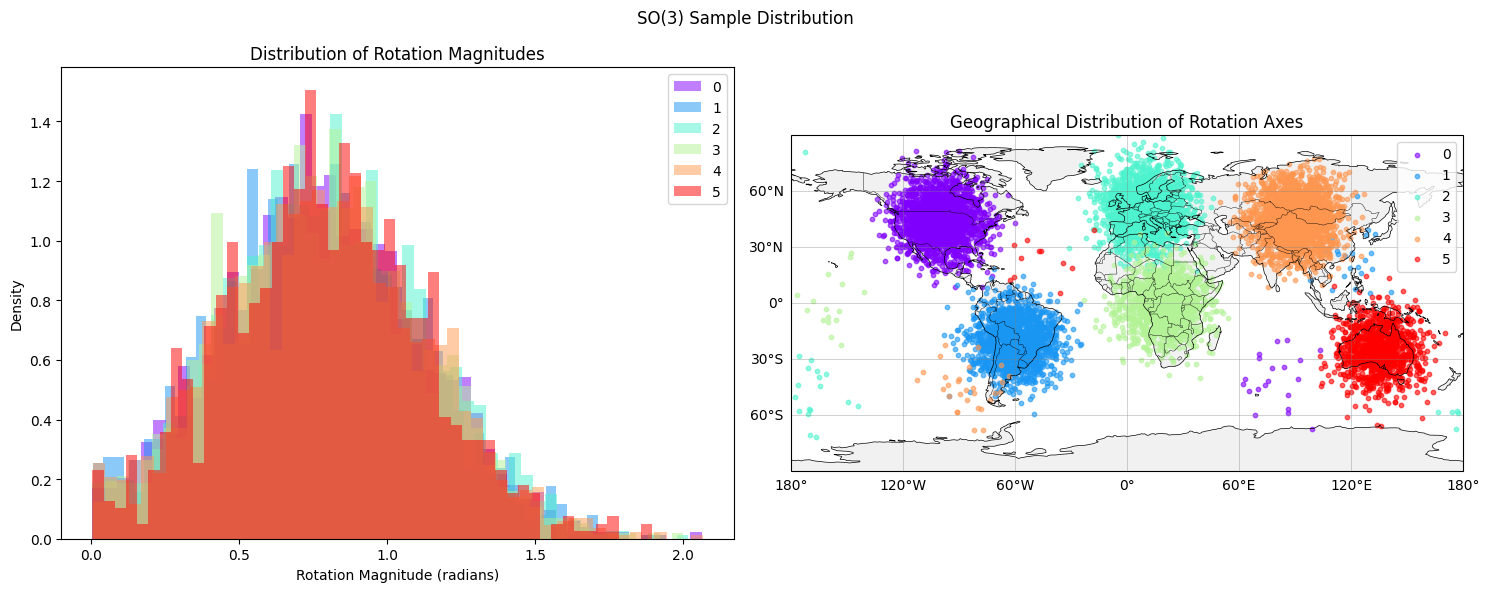}
\caption{Visual representation of ground truth labeled Riemannian data.}
\label{fig:toy_riemann}
\end{figure*}

\begin{figure*}[!t]
    \centering
    \includegraphics[width=0.9\textwidth]{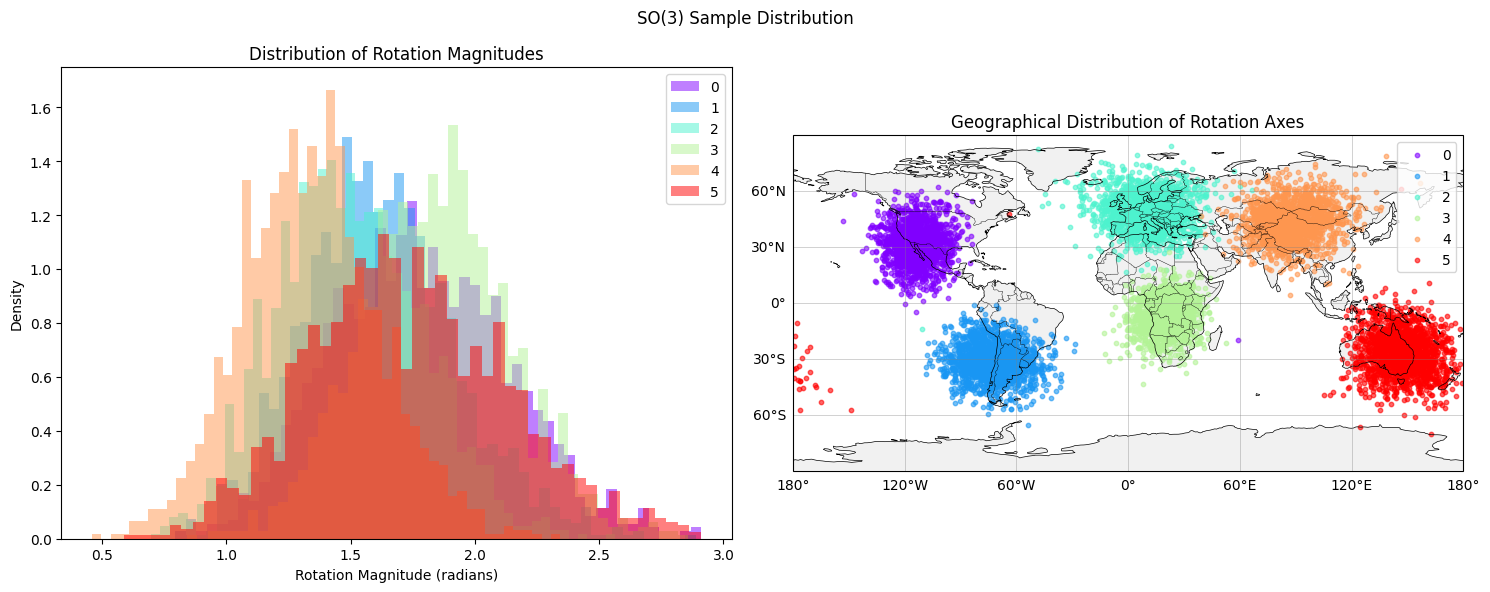}
\caption{Peformance of inferring discrete label based on $\mathsf{SO}(3)$ data.}
\label{fig:toy_riemann_ours}
\end{figure*}

\begin{figure*}[!b]
    \centering
    \includegraphics[width=0.9\textwidth]{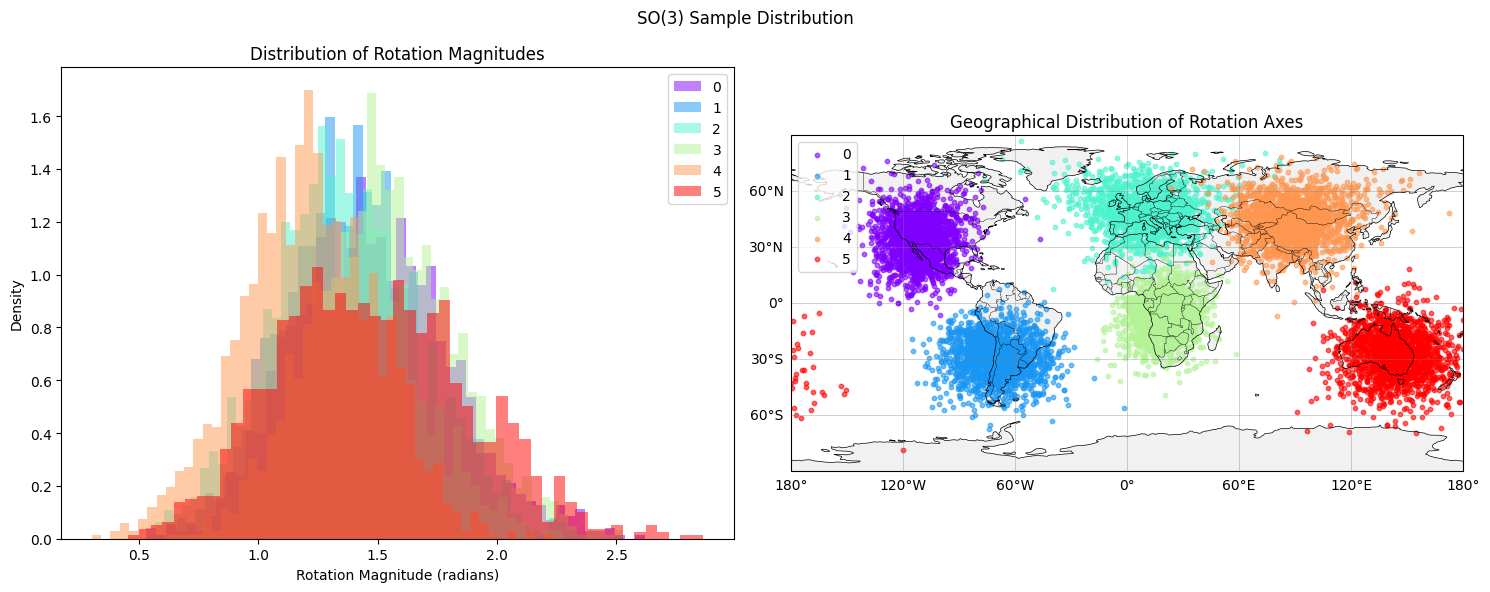}
\caption{Performance of joint generation of the labeled Riemannian data.}
\label{fig:toy_riemann_unconditional}
\end{figure*}

where $Q_s = \sigma_s Q^{\mathrm{mask}}$ is the same design choice as in \cref{eq:cont_disc_forward}, $\rd B^{\mathcal{M}}_t$ is a Brownian Motion on $\mathsf{SO}(3)$. Note that the stationary distribution of \cref{eq:riem_disc_forward} is $\operatorname{Haar}(\mathsf{SO}(3)) \times \delta_{\mathbf{M}}$, where $\operatorname{Haar}(\mathsf{SO}(3))$ is the Haar measure on $\mathsf{SO}(3)$, a generalized notion of uniform distribution. Following a similar derivation as is presented in the paper, we can derive its backward process,

\begin{equation}
\label{eq:riem_disc_backward}
   \begin{cases}
    \rd X_t =  \nabla \log p(X_t, Y_s, T-t, T-s) + \rd B^{\mathcal{M}}_t\\
    Y_s \sim \operatorname{CTMC}(\overline{Q}(X_t, t, s) \\
\end{cases}
\end{equation}
where the gradient $\nabla$ is the Riemannian gradient, and $\overline{Q}(X_t, T-t, T-s)$ is defined for $y \neq \yh$,
\begin{align*}
    \overline{Q}(x, t, s)(\yh, y) = \dfrac{p(x, \yh, T-t, T-s)}{p(x, y, T-t, T-s)} Q_{T-s}(y, \yh)
\end{align*}
\paragraph{Axis-angle parametrization.} We represent elements of $\mathsf{SO}(3)$ using the axis-angle parametrization. We introduce it briefly here. One can show that any element of $S\mathsf{SO}(3)$ can be written as $\exp(\theta K)$ where:
\begin{align*}
    K &= a\begin{pmatrix}
        0 & 0 & 0 \\
        0 & 0 & -1 \\
        0 & 1 & 0 
    \end{pmatrix} + 
    b\begin{pmatrix}
        0 & 0 & 1 \\
        0 & 0 & 0 \\
        -1 & 0 & 0 
    \end{pmatrix} + 
    c\begin{pmatrix}
        0 & -1 & 0 \\
        1 & 0 & 0 \\
        0 & 0 & 0 
    \end{pmatrix}
\end{align*} 
and $(a,b,c) \in \mathbb{S}^2$ is a vector on the sphere, $\theta \in \mathbb{R}_+$. The representation $((a,b,c), \theta)$ is called the axis-angle representation.

\paragraph{Dataset of the toy problem}
We consider a simple toy example of labeled data on $\mathsf{SO}(3)$, consisting of Gaussian mixtures, where each mode corresponds to a unique label. To create the problem, we write elements $(a,b,c) \in \mathbb{S}^2$ in spherical coordinates; in this way, only two angles need to be parameterized. We then generate a Gaussian mixture on the space of these angles. Additionally, we use a von Mises random variable for $\theta$. We present the Python code used to generate the dataset in \cref{code:dataset} and a visualization of the axis and angles in Figure \ref{fig:toy_riemann}. 

As observed in \cref{fig:toy_riemann}, we have assigned labels to different geographical locations and assigned them to distinct modes on the map. 

\paragraph{Training strategy.} We train a simple MLP using a similar strategy as the text-image model. We first train a label to $\mathsf{SO}(3)$ model and add the discrete capabilities in a second phase. To achieve this, we utilize the generalized denoising score matching loss $\cI_{\mathrm{GDSM}}$, as described in the main paper, which is derived from the generator computed based on the chosen forward process. We find that this training strategy is generally robust.

\paragraph{Results.} We present samples generated by our method using guidance $w = 4$ in Figure \ref{fig:toy_riemann_ours}, we see that our method can properly recover the data distribution. We also show the unconditional generation in Figure \ref{fig:toy_riemann_unconditional}. We demonstrate that our method and training strategy can generalize to other data modalities.

\begin{figure}[!t]
    \centering
    \includegraphics[width=\linewidth]{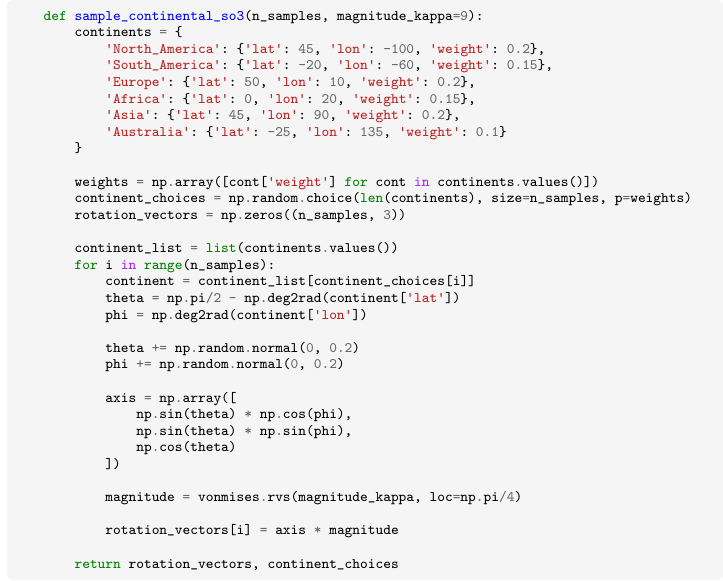}
    \caption{Code for generating the dataset}
    \label{code:dataset}
\end{figure}

    
    
        
        
        
        
    

\section{Additional Numerical Results for Text-Image Generation} \label{app:text-image}

\paragraph{CLIP Similarity} We generate $5000$ samples and evaluate the CLIP similarity between the text and image. For this evaluation, we use CLIP-ViT-large-patch14 and we limit the captions to $77$ tokens. We use our sampling default values during this task. We obtain a CLIP score of $\textbf{18.46}$ for text-to-image generation, $\textbf{17.44}$ for image-to-text generation, and $\textbf{17.57}$ for image-text joint generation. 

\paragraph{Generated examples visualization} We display non-cherry-picking generated examples in all three scenarios in \cref{fig:I2T}, \cref{fig:T2I}, \cref{fig:IT}.
\begin{figure*}[h!]
    \centering
    \includegraphics[width=\linewidth]{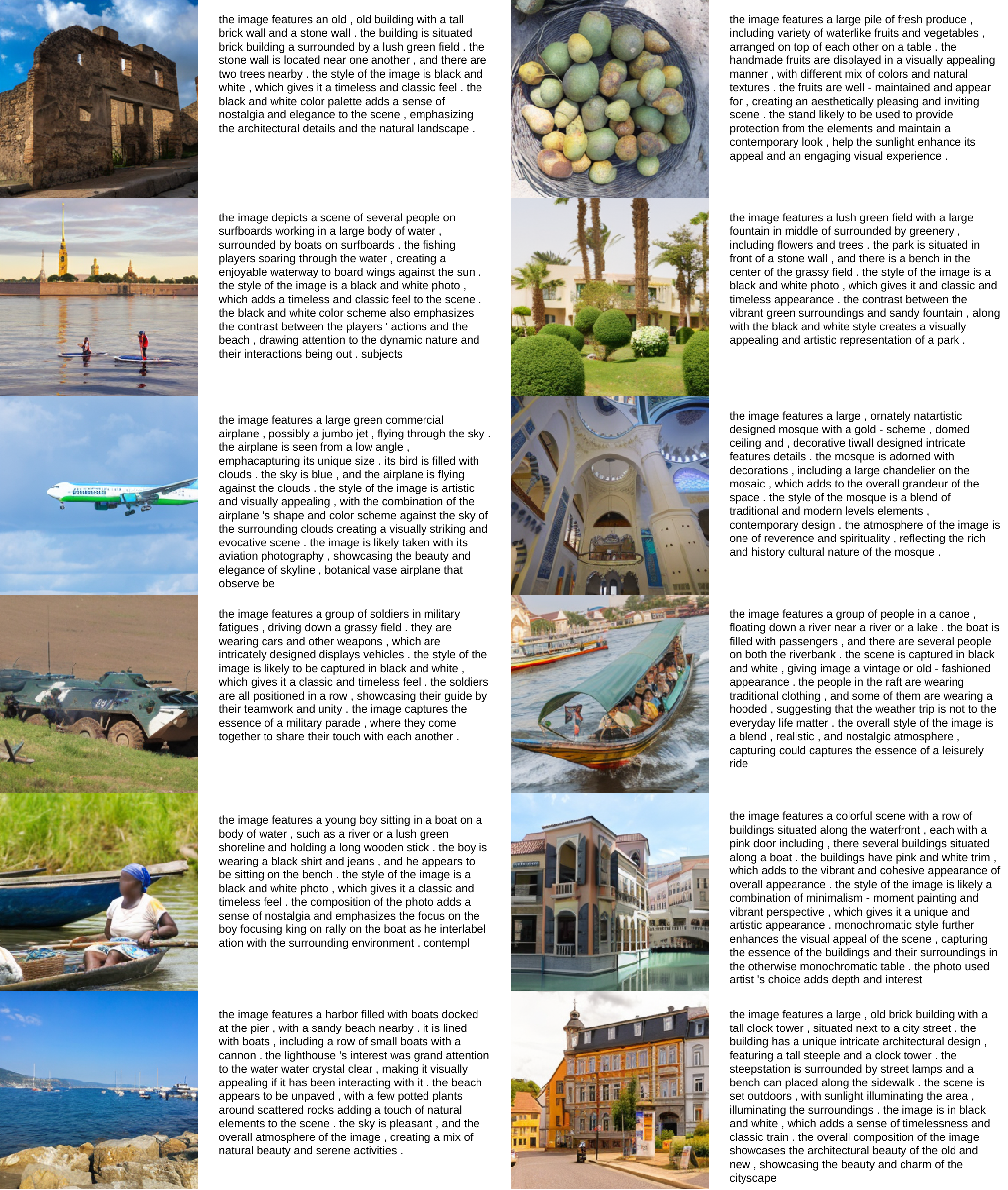}
    \caption{Visualization of texts generated conditioning on the images.}
    \label{fig:I2T}
\end{figure*}
\begin{figure*}[h!]
    \centering
    \includegraphics[width=\linewidth]{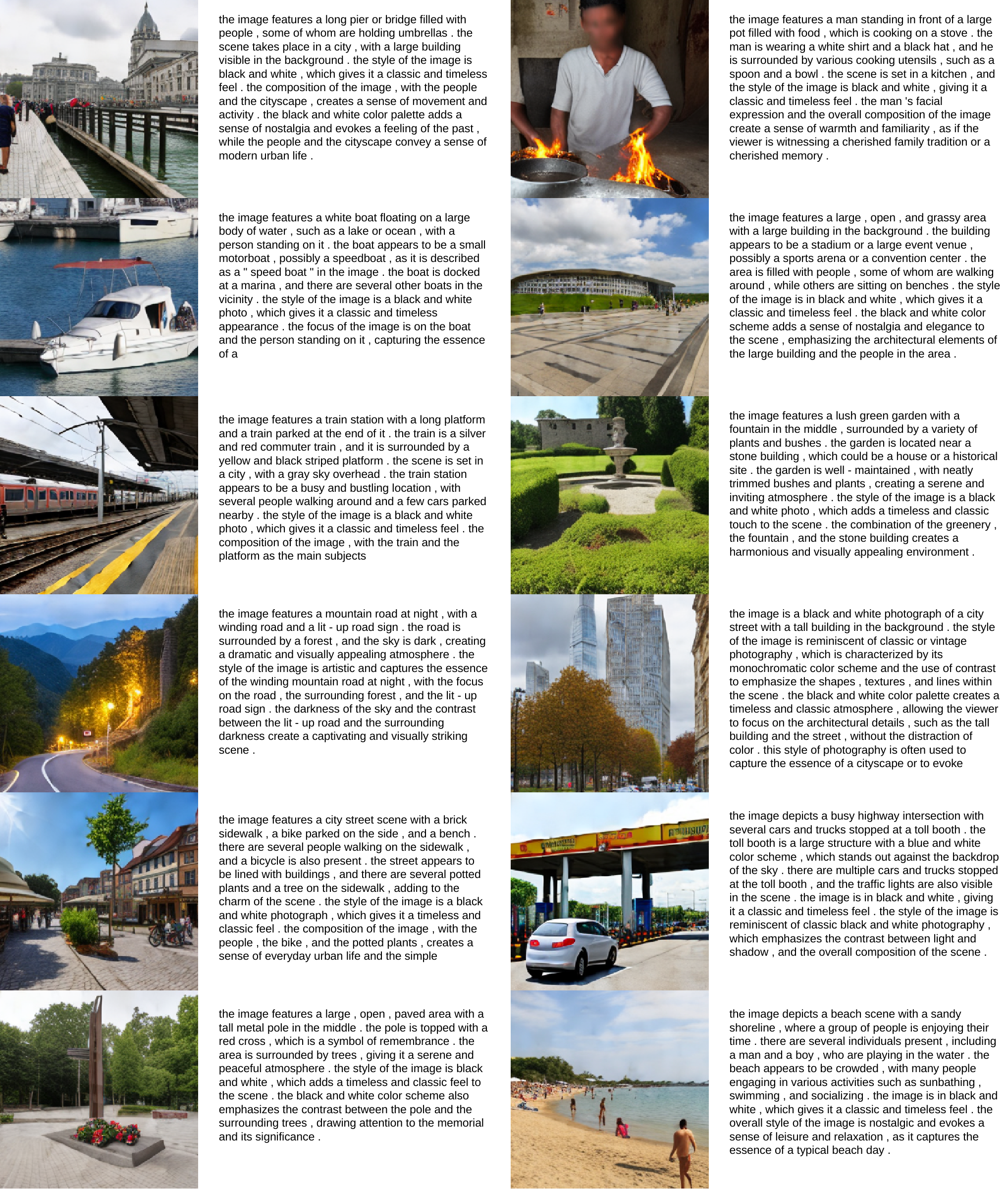}
    \caption{Visualization of images generated conditioning on the text caption.}
    \label{fig:T2I}
\end{figure*}
\begin{figure*}[h!]
    \centering
    \includegraphics[width=\linewidth]{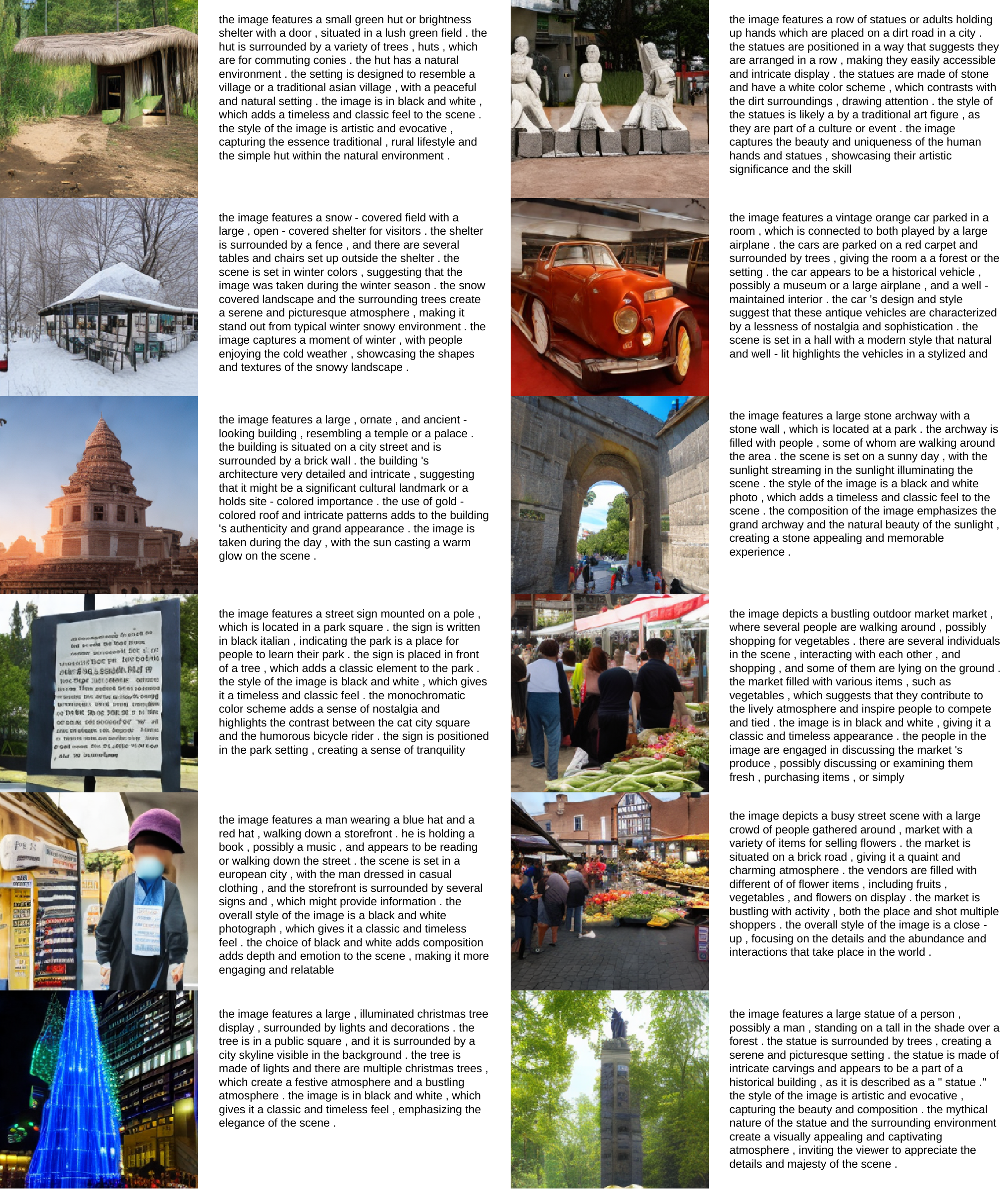}
    \caption{Visualization of text-image pairs generated jointly and unconditionally.}
    \label{fig:IT}
\end{figure*}

\end{document}